\newtheorem{theorem}{Theorem}
\newtheorem{lemma}[theorem]{Lemma}
\newtheorem{corollary[theorem]}{Corollary}
\newtheorem{assumption}{Assumption}
\newtheorem*{assumption*}{Assumption}
\let\oldremark\remark
\renewcommand{\remark}{\oldremark\normalfont}
\DeclareMathOperator*{\argmax}{argmax}
\newcommand{\red}[1]{\textcolor{red}{#1}}
\definecolor{kh}{HTML}{FFA500}
\definecolor{yz}{HTML}{6495ED}
\definecolor{yellow}{HTML}{F3E5AB}
\newcommand{\kibitz}[2]{\ifnum\Comments=1\textcolor{#1}{#2}\fi}
\title{A Computationally Efficient Algorithm for Infinite-Horizon Average-Reward Linear MDPs}
\author{%
  Kihyuk Hong \\
  University of Michigan\\
  \texttt{kihyukh@umich.edu} \\
  \And
  Ambuj Tewari \\
  University of Michigan \\
  \texttt{tewaria@umich.edu} \\
}
\begin{document}

\maketitle

\begin{abstract}
We study reinforcement learning in infinite-horizon average-reward settings with linear MDPs.
Previous work addresses this problem by approximating the average-reward setting by discounted setting and employing a value iteration-based algorithm that uses clipping to constrain the span of the value function for improved statistical efficiency.
However, the clipping procedure requires computing the minimum of the value function over the entire state space, which is prohibitive since the state space in linear MDP setting can be large or even infinite.
In this paper, we introduce a value iteration method with efficient clipping operation that only requires computing the minimum of value functions over the set of states visited by the algorithm.
Our algorithm enjoys the same regret bound as the previous work while being computationally efficient, with computational complexity that is independent of the size of the state space.
\end{abstract}

\section{Introduction}

Reinforcement learning (RL) aims to learn optimal actions for an agent by interacting with the environment.
Among the various RL settings, the infinite-horizon setting is particularly well-suited for applications where optimizing long-term performance is the primary objective.
Examples include production system management \parencite{yang2021joint,gosavi2004reinforcement}, inventory management \parencite{gijsbrechts2022can,giannoccaro2002inventory} and network routing~\parencite{mammeri2019reinforcement}, where interactions between the agent and the environment continue indefinitely, and the natural goal is to optimize long-term rewards.

In the infinite-horizon framework, there are two widely-used definitions of long-term rewards.
The first is the infinite-horizon discounted setting, where the objective is to maximize the discounted cumulative sum of rewards, with exponentially decaying weight assigned to future rewards.
The second is the infinite-horizon average-reward setting, where the objective is to maximize the undiscounted long-term average of rewards, assigning uniform weight to future and present rewards.
Learning in the average-reward setting is more challenging because its Bellman operator is not a contraction, and the widely used value iteration algorithm may fail when the transition probability model used for value iteration is not well-behaved.
This complicates algorithm design, especially when the underlying transition probability model is unknown and must be estimated.

Seminal work by \textcite{auer2008near} introduces a value iteration based algorithm for the infinite-horizon average-reward setting in the tabular case, where the state space and the action space are finite.
To address sensitivity of the value iteration algorithm to the transition probability model, they maintain a confidence set that captures the true, well-behaved transition probability model. Their algorithm employs an extended value iteration approach, which optimally selects the transition probability model from the confidence set at each iteration.
This extended value iteration method has since been extensively used in the tabular setting \parencite{bartlett2009regal,fruit2018efficient,zhang2019regret}.
Beyond the tabular case, the method has also been adapted to the linear mixture MDP setting \parencite{modi2020sample,ayoub2020model}, where the transition probability model has a low-dimensional structure \parencite{ayoub2020model,wu2022nearly,chae2025learning}.

To our knowledge, the extended value iteration method is limited to tabular and linear mixture MDPs, as it relies on sample-efficient transition probability estimation, which is infeasible for settings like linear MDPs with large state spaces \parencite{jin2020provably}.
In response to these limitations, researchers have explored alternative approaches for such settings.
For example, \textcite{wei2021learning} propose a reduction to the finite-horizon episodic setting by dividing the time steps into episodes of a fixed length.
This approach achieves a regret bound of $\widetilde{\mathcal{O}}(T^{3/4})$, which is suboptimal, where $T$ denotes the number of time steps.
They also introduce a policy-based algorithm that alternates between policy evaluation and policy improvement steps to directly optimize the policy. 
This approach achieves an order-optimal regret bound of $\widetilde{\mathcal{O}}(\sqrt{T})$, but it requires a strong ergodicity assumption on the transition probability model for sample-efficient policy evaluation.
Lastly, they propose another approach that achieves an order-optimal regret bound by directly solving the Bellman optimality equation as a fixed point problem, bypassing the need for value iteration.
However, the fixed point problem is computationally intractable.

Another line of work on infinite-horizon average-reward RL uses a reduction to the discounted setting to leverage value iteration-based algorithms.
To our knowledge, \textcite{wei2020model} were the first to introduce such a method.
They propose a Q-learning-based algorithm for the tabular setting that solves the discounted setting problem as a surrogate for the average-reward problem, achieving a regret bound of $\widetilde{\mathcal{O}}(T^{2/3})$.
More recently, \textcite{hong2025reinforcement} propose a value iteration based algorithm that clips the value function to constrain its span for statistical efficiency, achieving an order-optimal regret bound of $\widetilde{\mathcal{O}}(\sqrt{T})$.
Their algorithm runs value iteration to generate a sequence of value functions to plan for the remaining time steps, and takes actions greedy with respect to the value functions until a certain information criterion of the collected trajectories doubles.
Although the algorithm runs in polynomial time with respect to problem parameters, its computational complexity depends on the size of the state space. The dependency is undesirable in the linear MDP setting where the state space can be arbitrarily large.
An open question arising from this line of work is:
\begin{quote}
\textit{Does there exist an algorithm for infinite-horizon average-reward linear MDPs with computational complexity polynomial in the problem parameters, yet independent of the size of the state space?}
\end{quote}

In this paper, we answer the question in the affirmative by proposing an algorithm based on the following novel techniques.
\paragraph{Efficient Clipping} We develop an efficient value function clipping strategy that requires the minimum of the value function to be evaluated only over the set of states visited by the algorithm, rather than the entire state space.
\paragraph{Deviation-Controlled Value Iteration} We introduce a novel value iteration scheme that controls the deviation between sequences of value functions generated by value iterations with different clipping thresholds.

\begin{table*}[t]
\caption{Comparison of algorithms for infinite-horizon average-reward linear MDP}
\label{table:comparison}
\centering
\begin{tabular}{cccc}
 \toprule
 Algorithm & Regret $\widetilde{\mathcal{O}}(\cdot)$ & Assumption & Computation $\text{poly}(\cdot)$ \\
 \midrule
 FOPO \parencite{wei2021learning} & $\text{sp}(v^\ast) \sqrt{d^3 T}$ & Bellman optimality equation & \red{$T^d, A, d$} \\
 OLSVI.FH \parencite{wei2021learning} & \red{$\sqrt{\text{sp}(v^\ast) }(d T)^{\frac{3}{4}}$} & Bellman optimality equation & $T, A, d$ \\
 LOOP \parencite{he2024sample} & $\sqrt{\text{sp}(v^\ast)^3 d^3 T}$ & Bellman optimality equation & \red{$T^d, A, d$} \\
 MDP-EXP2 \parencite{wei2021learning} & $d \sqrt{t_{\text{mix}}^3 T}$ & \red{Uniform Mixing} & $T, A, d$ \\
 $\gamma$-LSCVI-UCB \parencite{hong2025reinforcement} & $\text{sp}(v^\ast) \sqrt{d^3 T}$ & Bellman optimality equation & \red{$T, S, A, d$} \\
 \textbf{$\gamma$-DC-LSCVI-UCB (Ours)} & $\text{sp}(v^\ast) \sqrt{d^3 T}$ & Bellman optimality equation & $T, A, d$ \\
 \midrule
 Lower Bound \parencite{wu2022nearly} & $\Omega(d \sqrt{\text{sp}(v^\ast) T})$ & & \\
 \bottomrule
\end{tabular}
\end{table*}

\subsection{Related Work}

Table~\ref{table:comparison} compares our work with previous approaches for infinite-horizon average-reward linear MDPs.
FOPO solves the Bellman optimality equation directly as a fixed-point problem, which is computationally intractable, with brute-force solution requiring computational complexity that scales with $T^d$, where $d$ is the dimension of the feature representation.
OLSVI.FH reduces the problem to the finite-horizon episodic setting. This approach is computationally efficient, but has suboptimal regret bound.
LOOP generalizes FOPO to the general function approximation setting, but inherits the computational complexity that scales with $T^d$ for solving a fixed-point problem.
MDP-EXP2 directly optimizes for the policy by alternating between policy evaluation and policy improvement. This approach is computationally efficient and achieves an order-optimal regret bound, but requires a strong assumption that all policies induce Markov chains that have uniformly bounded mixing time.
$\gamma$-LSCVI-UCB reduces the average-reward problem to the discounted problem and achieves an order-optimal regret bound. However, its computational complexity scales with the size of the state space $S$.
Our work is the first computationally efficient algorithm to achieve $\widetilde{\mathcal{O}}(\sqrt{T})$ regret without making strong assumptions.

\paragraph{Approximation by discounted setting}

The method of approximating the average-reward setting by the discounted setting has been used in various settings.
It is used in the problem of finding a nearly optimal policy given access to a simulator in the tabular setting by \textcite{jin2021towards,wang2022near,zurek2023span,wang2023optimal}.
It is also used in the online RL setting with tabular MDPs:
\textcite{wei2020model} propose a Q-learning based algorithm, but has $\widetilde{\mathcal{O}}(T^{2/3})$ regret.
\textcite{zhang2023sharper} improve the regret to $\widetilde{\mathcal{O}}(\sqrt{T})$ by making use of an estimate for the span of optimal bias function.
The reduction is also used in the linear mixture MDP setting by \textcite{chae2025learning}.

\paragraph{Span-constraining methods}
Learning in the infinite-horizon average-reward setting requires an assumption that ensures the agent can recover from a bad state, leading to a bounded span of the optimal value function.
For statistical efficiency, previous work makes use of this fact by constraining the span of the value function estimates.
\textcite{bartlett2009regal} modify the extended value iteration algorithm by \textcite{auer2008near} to constrain the confidence set on the model so that the spans of the models in the set are bounded.
\textcite{fruit2018efficient} propose a computationally efficient version of the algorithm proposed by \textcite{bartlett2009regal}.
\textcite{zhang2019regret} improve the algorithm proposed by \textcite{bartlett2009regal} by constructing tighter confidence sets using a method for directly estimating the bias function.
\textcite{zhang2023sharper} study a Q-learning-based algorithm that projects the value function to a function class of span-constrained functions.
\textcite{hong2025reinforcement} and \textcite{chae2025learning} propose a value iteration-based algorithm and clips the value function to constrain its span.

\section{Preliminaries}

\paragraph{Notations}
Let $\Vert \bm{x} \Vert_A = \sqrt{x^T A x}$ for $\bm{x} \in \mathbb{R}^d$ and a positive semi-definite matrix $A \in \mathbb{R}^{d \times d}$.
Let $a \vee b = \max \{ a, b \}$ and $a \wedge b = \min\{a, b \}$.
Let $\Delta(\mathcal{X})$ be the set of probability measures on $\mathcal{X}$.
Let $[n] = \{1, \dots, n\}$ and $[m:n] = \{m, m  + 1, \dots, n\}$.
Let $\text{sp}(v) = \max_{s, s'} \vert v(s) - v(s') \vert$.
Let $\textsc{Clip}(x; L, U) = (x \vee L) \wedge U$.

\subsection{Infinite-Horizon Average-Reward RL}

In this section, we formulate the infinite-horizon average-reward RL setting.
We pose the RL problem as a Markov decision process (MDP) $\mathcal{M} = (\mathcal{S}, \mathcal{A}, P, r)$ where $\mathcal{S}$ is the state space, $\mathcal{A}$ is the action space, $P : \mathcal{S} \times \mathcal{A} \rightarrow \Delta(\mathcal{S})$ is the probability transition kernel and $r : \mathcal{S} \times \mathcal{A} \rightarrow [0, 1]$ is the reward function.
We assume $\mathcal{S}$ is a measurable space with possibly infinite number of elements and $\mathcal{A}$ is a finite set.
The deterministic reward function $r$ is known to the learner while the probability transition kernel $P$ is unknown to the learner. 

The interaction protocol between the learner and the MDP is as follows.
The environment first reveals the starting state $s_1 \in \mathcal{S}$ to the learner.
Then, at each step $t = 1, \dots, T$, the learner chooses an action $a_t \in \mathcal{A}$ and receives the reward $r(s_t, a_t)$.
The environment transitions to the next state $s_{t + 1}$ sampled from $P(\cdot | s_t, a_t)$.

In the infinite-horizon average-reward setting, we use the long-term average reward as the performance measure.
Specifically, consider a stationary policy $\pi : \mathcal{S} \rightarrow \Delta(\mathcal{A})$ where $\pi(a | s)$ specifies the probability of choosing action $a$ at state $s$.
Then, the performance measure of our interest for the policy $\pi$ is the long-term average reward starting from an initial state $s$ defined as
$$
J^\pi(s) \coloneqq \liminf_{T \rightarrow \infty} \frac{1}{T} \mathbb{E}^\pi \left[
\sum_{t = 1}^T r(s_t, a_t)~\Big|~s_1 = s
\right]
$$
where $\mathbb{E}^\pi[ \cdot ]$ is the expectation with respect to the probability distribution on the trajectory $(s_1, a_1, s_2, a_2, \dots)$ induced by the interaction between $P$ and $\pi$.
The performance of the learner interacting with the environment for $T$ steps is measured by the regret against the best stationary policy $\pi^\ast$ that maximizes $J^\pi(s_1)$.
Writing $J^\ast(s_1) \coloneqq J^{\pi^\ast}(s_1)$, the regret is defined as
$$
R_T \coloneqq \sum_{t = 1}^T (J^\ast(s_1) - r(s_t, a_t)).
$$
The interaction protocol for the infinite-horizon setting, unlike the interaction protocol for the finite-horizon episodic setting, the state is never reset.
Consequently, if the agent enters a bad state with low future reward and recovering from the bad state and reaching a good state is impossible, then the agent becomes trapped in the bad state and suffers regret linear in the number of remaining time steps.
As discussed by \textcite{bartlett2009regal}, an additional assumption on the structure of the MDP is required to avoid the pathological case.
At the very least, we want the gain $J^\ast(s)$ to be constant: $J^\ast(s) = J^\ast$ for all $s \in \mathcal{S}$.
This implies no matter what the current state is, following the optimal policy $\pi^\ast$ attains the optimal long-term average reward $J^\ast$, precluding the case of getting trapped in a bad state.
We follow \textcite{wei2021learning} and make the following structural assumption on the MDP.
\begin{assumption}[Bellman optimality equation] \label{assumption:bellman-optimality}
There exist $J^\ast \in \mathbb{R}$ and functions $v^\ast : \mathcal{S} \rightarrow \mathbb{R}$ and $q^\ast : \mathcal{S} \times \mathcal{A} \rightarrow \mathbb{R}$ such that for all $(s, a) \in \mathcal{S} \times \mathcal{A}$, we have
\begin{align*}
J^\ast + q^\ast(s, a) &= r(s, a) + [Pv^\ast](s, a) \\
v^\ast(s) &= \max_{a \in \mathcal{A}} q^\ast(s, a).
\end{align*}
\end{assumption}
As shown by \textcite{wei2021learning}, a tuple $(J^\ast, q^\ast, v^\ast)$ that satisfies the equations in the assumption above has the following properties.
The policy $\pi^\ast$ that deterministically selects an action from $\argmax_a q^\ast(s, a)$ at each state $s \in \mathcal{S}$ is an optimal policy.
Moreover, such $\pi^\ast$ always gives an optimal average reward $J^{\pi^\ast}(s) = J^\ast$ for all initial states $s \in \mathcal{S}$.
Since the optimal average reward is independent of the initial state, we can simply write the regret as
$$
R_T = \sum_{t = 1}^T (J^\ast - r(s_t, a_t)).
$$
The functions $v^\ast(s)$ and $q^\ast(s, a)$ are the relative advantage of starting with $s$ and $(s, a)$, respectively, and are called bias functions.
They are equal, up to translation by a constant, to $\lim_{N \rightarrow \infty} \mathbb{E}^{\pi^\ast}[ \sum_{t = 1}^N r(s_t, a_t) - J^\ast | s_1 = s ]$ and $\lim_{N \rightarrow \infty} \mathbb{E}^{\pi^\ast}[ \sum_{t = 1}^N r(s_t, a_t) - J^\ast | s_1 = s, a_1 = a ]$, respectively.

The span of the bias, $\text{sp}(v^\ast) = \max_{s, s' \in \mathcal{S}} v^\ast(s) - v^\ast(s')$, quantifies the worst-case difference in value between any two states.
Intuitively, entering a suboptimal state incurs regret that scales with $\text{sp}(v^\ast)$, suggesting that problems with large $\text{sp}(v^\ast)$ are more challenging to learn.
Following previous work \parencite{bartlett2009regal,wei2020model}, we assume $\text{sp}(v^\ast)$ is known to the learner.
This assumption can be relaxed by instead assuming access to an upper bound on $\text{sp}(v^\ast)$, but in this case, the regret of our proposed algorithm will scale with the upper bound.

\subsection{Infinite-Horizon Discounted Setting}

The key algorithm design employed in this paper is to approximate the infinite-horizon average-reward setting by the infinite-horizon discounted setting with a discounting factor $\gamma \in [0, 1)$ chosen carefully.
Under the discounted setting, the performance measure is the discounted sum of rewards $\sum_{t = 1}^\infty \gamma^{t - 1} r(s_t, a_t)$.
When normalized by a factor $(1 - \gamma)$, the resulting normalized discounted sum is a weighted average of the reward sequence $r(s_1, a_1), r(s_2, a_2), \dots$.
The decay rate of the weight sequence is governed by the discounting factor $\gamma$.
As $\gamma$ approaches 1, the decay becomes slower and the normalized discounted sum should approach average of the reward sequence.
To make this intuition precise, we first define value functions for a policy $\pi$ under the discounted setting by
\begin{align*}
V_\gamma^\pi(s) &= \mathbb{E}^\pi\left[\sum_{t = 1}^\infty \gamma^{t - 1} r(s_t, a_t) | s_1 = s\right] \\
Q_\gamma^\pi(s, a) &= \mathbb{E}^\pi \left[ \sum_{t = 1}^\infty \gamma^{t - 1} r(s_t, a_t) | s_1 = s, a_1 = a \right].
\end{align*}
We write the optimal value functions under the discounted setting as
$$
V_\gamma^\ast(s) = \max_\pi V^\pi(s), \quad Q_\gamma^\ast(s, a) = \max_\pi Q_\gamma^\pi(s, a).
$$
Previous informal discussion suggests that the normalized value function $(1 - \gamma) V_\gamma^\ast(s)$ to be close to the gain under the average-reward setting $J^\ast$.
The following lemma makes the relation between the infinite-horizon average-reward setting and the discounted setting formal.

\begin{lemma}[Lemma 2 in \textcite{wei2020model}] \label{lemma:discounted-approximation}
For any $\gamma \in [0, 1)$, the optimal value function $V^\ast$ for the infinite-horizon discounted setting with discounting factor $\gamma$ satisfies
\begin{enumerate}[label=(\roman*)]
\item $\text{sp}(V_\gamma^\ast) \leq 2 \text{sp} (v^\ast)$ and
\item $\vert (1 - \gamma) V_\gamma^\ast(s) - J^\ast \vert \leq ( 1 - \gamma) \text{sp}(v^\ast) ~~\text{for all}~ s \in \mathcal{S}$.
\end{enumerate}
\end{lemma}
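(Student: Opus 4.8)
The plan is to compare the discounted optimal value $V_\gamma^*$ with a shifted copy of the average-reward bias, namely $f := \frac{J^*}{1-\gamma} + v^*$, and to show the two agree up to an additive $\text{sp}(v^*)$ in sup norm, from which both claims follow. Since the bias in Assumption~\ref{assumption:bellman-optimality} is determined only up to an additive constant (shifting $v^*$ by $c$ shifts $q^*$ by $c$ and leaves $J^*$ unchanged), I may assume without loss of generality that $v^*$ is normalized so that $0 \le v^*(s) \le \text{sp}(v^*)$ for all $s$; this makes $[Pv^*](s,a) \in [0, \text{sp}(v^*)]$ as well. I will work with the discounted Bellman optimality operator $\mathcal{T}_\gamma v(s) := \max_{a}\{ r(s,a) + \gamma [Pv](s,a)\}$, using the standard facts that $\mathcal{T}_\gamma$ is monotone, is a $\gamma$-contraction in $\Vert \cdot \Vert_\infty$, and has $V_\gamma^*$ as its unique bounded fixed point (rewards lie in $[0,1]$, so $V_\gamma^*$ is bounded).

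The key step is to show that $f$ is an approximate fixed point of $\mathcal{T}_\gamma$, and more precisely a super-solution. Expanding, $\mathcal{T}_\gamma f(s) = \frac{\gamma J^*}{1-\gamma} + \max_a\{r(s,a) + \gamma[Pv^*](s,a)\}$, because $P$ preserves constants. Writing $\gamma[Pv^*] = [Pv^*] - (1-\gamma)[Pv^*]$ and using $0 \le (1-\gamma)[Pv^*](s,a) \le (1-\gamma)\text{sp}(v^*)$, I can sandwich the inner maximum between $\max_a\{r(s,a)+[Pv^*](s,a)\} - (1-\gamma)\text{sp}(v^*)$ and $\max_a\{r(s,a)+[Pv^*](s,a)\}$. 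The average-reward Bellman equation from Assumption~\ref{assumption:bellman-optimality} identifies $\max_a\{r(s,a)+[Pv^*](s,a)\} = J^* + v^*(s)$, and combining with $\frac{\gamma J^*}{1-\gamma} + J^* = \frac{J^*}{1-\gamma}$ yields
\begin{equation*}
f(s) - (1-\gamma)\text{sp}(v^*) \le \mathcal{T}_\gamma f(s) \le f(s) \quad \text{for all } s.
\end{equation*}

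Finally, I convert these one-sided operator inequalities into pointwise bounds on $V_\gamma^*$ via monotonicity rather than the crude contraction estimate. The upper inequality $\mathcal{T}_\gamma f \le f$ makes $f$ a super-solution, so iterating $\mathcal{T}_\gamma$ produces a decreasing sequence with limit $V_\gamma^*$, giving $V_\gamma^* \le f$. For the lower bound I test the shifted function $f - \text{sp}(v^*)$: since $\mathcal{T}_\gamma(f - c) = \mathcal{T}_\gamma f - \gamma c$, the lower inequality gives $\mathcal{T}_\gamma(f - \text{sp}(v^*)) \ge f - (1-\gamma)\text{sp}(v^*) - \gamma\,\text{sp}(v^*) = f - \text{sp}(v^*)$, so $f - \text{sp}(v^*)$ is a sub-solution and $V_\gamma^* \ge f - \text{sp}(v^*)$. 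Subtracting $\frac{J^*}{1-\gamma}$ gives $v^*(s) - \text{sp}(v^*) \le V_\gamma^*(s) - \frac{J^*}{1-\gamma} \le v^*(s)$, and the normalization $0 \le v^*(s) \le \text{sp}(v^*)$ turns this into $\vert V_\gamma^*(s) - \frac{J^*}{1-\gamma} \vert \le \text{sp}(v^*)$, which is exactly claim (ii) after multiplying by $(1-\gamma)$. Claim (i) follows by differencing the same two-sided bound at two states: $V_\gamma^*(s) - V_\gamma^*(s') \le v^*(s) - v^*(s') + \text{sp}(v^*) \le 2\,\text{sp}(v^*)$.

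I expect the main subtlety to be the asymmetry needed for the tight constant in (ii): the naive symmetric contraction bound $\Vert f - V_\gamma^* \Vert_\infty \le \Vert \mathcal{T}_\gamma f - f \Vert_\infty/(1-\gamma) = \text{sp}(v^*)$ controls only $V_\gamma^* - f$, and re-adding $v^*$ to reach $V_\gamma^* - \frac{J^*}{1-\gamma}$ would cost another $\text{sp}(v^*)$, yielding the loose constant $2\,\text{sp}(v^*)$. Getting the stated factor requires the one-sided super-/sub-solution argument above, where the upper and lower bounds consume the normalization $v^* \le \text{sp}(v^*)$ and $v^* \ge 0$ in complementary directions.
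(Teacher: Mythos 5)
Your proof is correct: the normalization of $v^*$ is legitimate (the Bellman optimality equation and both claims are invariant under constant shifts of $v^*$ and $q^*$), the sandwich $f(s) - (1-\gamma)\,\text{sp}(v^*) \le \mathcal{T}_\gamma f(s) \le f(s)$ follows exactly as you compute, and the monotone sub-/super-solution argument correctly converts these one-sided operator inequalities into $f - \text{sp}(v^*) \le V_\gamma^* \le f$, from which both (i) and (ii) follow with the stated constants. Note, however, that the paper does not prove this lemma at all — it imports it as Lemma 2 of \textcite{wei2020model} — so the relevant comparison is with that original proof, which proceeds differently: it works along trajectories, using the average-reward Bellman equation to rewrite $r(s_t,a_t) = J^* + v^*(s_t) - [Pv^*](s_t,a_t)$ under the bias-optimal policy (and the corresponding inequality under an arbitrary policy), then Abel-sums the discounted series $\sum_t \gamma^{t-1}\mathbb{E}[v^*(s_t) - v^*(s_{t+1})]$ to sandwich $V_\gamma^*$ around $\frac{J^*}{1-\gamma}$. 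Your operator-theoretic route buys a cleaner argument — no manipulation of infinite discounted sums of expectations, just monotonicity and $\gamma$-contraction of $\mathcal{T}_\gamma$ — and it makes transparent why the two directions of the bound consume the normalization $0 \le v^* \le \text{sp}(v^*)$ in complementary ways, which is exactly the asymmetry needed for the tight constant in (ii); the trajectory argument, on the other hand, generalizes more readily to bounding the discounted value of a \emph{specific} policy (not just the optimum), which is occasionally what downstream analyses need.
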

The lemma above suggests that the difference between the optimal average reward $J^\ast$ and the optimal discounted cumulative reward normalized by the factor $(1 - \gamma)$ is small as long as $\gamma$ is close to 1.
Hence, we can expect the policy optimal under the discounted setting will be nearly optimal for the average-reward setting, provided $\gamma$ is sufficiently close to 1.

\subsection{Linear MDPs}

The linear MDP setting is a widely-studied setting in RL theory literature that allows sample efficient learning in large state space by assuming a low-dimensional feature representation of the state-action pair.
It imposes the following structure on the MDP.

\begin{assumption}[Linear MDP \parencite{jin2020provably}] \label{assumption:linear-mdp}
We assume that the transition and the reward functions can be expressed as a linear function of a known $d$-dimensional feature map $\bm\varphi : \mathcal{S} \times \mathcal{A} \rightarrow \mathbb{R}^d$ such that for any $(s, a) \in \mathcal{S} \times \mathcal{A}$, we have
$$
r(s, a) = \langle \bm\varphi(s, a), \bm\theta \rangle, \quad
P(s' | s, a) = \langle \bm\varphi(s, a), \bm\mu(s') \rangle
$$
where $\bm\mu(s') = (\mu_1(s'), \dots, \mu_d(s'))$ for $s' \in \mathcal{S}$ is a vector of $d$ unknown measures on $\mathcal{S}$ and $\bm\theta \in \mathbb{R}^d$ is a known parameter for the reward function.
\end{assumption}

we further assume, without loss of generality, the following boundedness conditions:
\begin{equation}\label{eqn:boundedness}
\begin{aligned}
\Vert \bm\varphi(s, a) \Vert_2 \leq 1 ~ \text{for all}~(s, a) \in \mathcal{S}\times \mathcal{A}, \\
\quad \Vert \bm\theta \Vert_2 \leq \sqrt{d}, \quad \Vert \bm\mu(\mathcal{S}) \Vert_2 \leq \sqrt{d}.
\end{aligned}
\end{equation}
Such a boundedness assumption is commonly made, without loss of generality~\parencite{wei2021learning}, when studying the linear MDP setting.

As discussed by \textcite{jin2020provably}, although the transition model $P$ is linear in the $d$-dimensional feature mapping $\bm\varphi$, $P$ still has $\vert \mathcal{S} \vert$ degrees of freedom as the measure $\bm\mu$ is unknown, making the estimation of the model $P$ difficult.
For sample efficient learning, we rely on the fact that $[PV](s, a)$ is linear in $\bm\varphi(s, a)$ for any function $V : \mathcal{S} \rightarrow  \mathbb{R}$ so that $[PV](s, a) = \langle \bm\varphi(s, a), \bm{w}^\ast(V) \rangle$ where $\bm{w}^\ast(V) \coloneqq \int_{s' \in \mathcal{S}} V(s') \bm\mu(ds')$.
Indeed,
\begin{align*}
[PV](s, a) &\coloneqq \textstyle \int_{s' \in \mathcal{S}} V(s') P(ds' | s, a) \\
&= \textstyle \int_{s' \in \mathcal{S}} V(s') \langle \bm\varphi(s, a), \bm\mu(ds') \rangle \\
&= \langle \bm\varphi(s, a), \textstyle \int_{s' \in \mathcal{S}} V(s') \bm\mu(ds') \rangle.
\end{align*}

Exploiting the linearity, we can estimate $\bm{w}^\ast(V)$ given a trajectory data $(s_1, a_1, \dots, s_{t - 1}, a_{t - 1}, s_t)$ via linear regression as follows.
$$
\widehat{\bm{w}}_t(V) \coloneqq \Lambda_t^{-1} \sum_{\tau = 1}^{t - 1} V(s_{\tau + 1}) \cdot \bm\varphi(s_\tau, a_\tau).
$$
where $\Lambda_t = \lambda I + \sum_{\tau = 1}^{t - 1} \bm\varphi(s_t, a_t) \bm\varphi(s_t, a_t)^\top$.
With such a regression coefficient, we estimate $[PV](s, a)$ by
\begin{align*}
[\widehat{P}_t V](s, a) \coloneqq \langle \bm\varphi(s, a),  \widehat{\bm{w}}_t(V - V(s_1)) \rangle + V(s_1).
\end{align*}

We estimate $[PV](s, a)$ by estimating $[P(V - V(s_1))](s, a)$ and then adding back $V(s_1)$.
This allows bounding the norm of the regression coefficient $\Vert \widehat{\bm{w}}_t(V - V(s_1)) \Vert_2$ by a bound that scales with the span of $V$ instead of the magnitude of $V$, which is required for getting a sharp regret bound.
A similar technique is used by \textcite{hong2025reinforcement}.

\subsection{Previous Work}

In this section, we review the closely related work of \textcite{hong2025reinforcement} to highlight the contributions of our paper.
They propose an algorithm, $\gamma$-LSCVI-UCB (Algorithm~\ref{alg:lscvi-ucb}), which is an optimistic value iteration based algorithm for infinite-horizon average-reward linear MDPs.
At time step $t$, a sequence of value functions $Q_T^t, Q_{T - 1}^t, \dots, Q_t^t$ is computed by running value iterations (Line~\ref{alg-line:value-iteration-start}-\ref{alg-line:value-iteration-end}) to plan for the best action at time $t$, considering the number of time steps remaining.
In the next time step $t + 1$, instead of running value iteration again to incorporate new transition data observed at time step $t$, the algorithm reuses the value function $Q_{t + 1}^t$ generated previously.
Value iteration is only rerun when the determinant of the covariance matrix $\Lambda_t = \lambda I + \sum_{\tau = 1}^t \bm\varphi(s_t, a_t) \bm\varphi(s_t, a_t)^\top$ doubles (Line~\ref{alg-line:episode}).

\begin{algorithm}[t]
\begin{algorithmic}[1]
\caption{$\gamma$-LSCVI-UCB \parencite{hong2025reinforcement}}
\label{alg:lscvi-ucb}
\REQUIRE {Discounting factor $\gamma \in [0, 1)$, regularization constant $\lambda > 0$, span $H > 0$, bonus factor $\beta > 0$.}
\ENSURE{$k \leftarrow 1$, $t_k \leftarrow 1$, $\Lambda_1 \leftarrow \lambda I$, $Q^1_t(\cdot, \cdot) \leftarrow \frac{1}{1 - \gamma}$ for $t \in [T]$.}
\STATE Receive state $s_1$. 
\FOR{time step $t = 1, \dots, T$}
    \STATE Take action $a_t = \argmax_a Q^t_t(s_t, a)$.
    \STATE Receive reward $r(s_t, a_t)$; Receive next state $s_{t + 1}$.
    \STATE $\Lambda_t \leftarrow \Lambda_{t - 1} + \bm\varphi(s_t, a_t) \bm\varphi(s_t, a_t)^\top$.
    \IF{$2 \det (\Lambda_{t_k}) < \det(\Lambda_t)$} \label{alg-line:episode}
        \STATE $k \leftarrow k + 1$, $t_k \leftarrow t + 1$.
        \STATE $V^{t + 1}_{T + 1}(\cdot) \leftarrow \frac{1}{1 - \gamma}$. \label{alg-line:value-iteration-start}
        \FOR{$u = T, T - 1, \dots, t_k$}
            \STATE $Q^{t + 1}_u(\cdot, \cdot) \gets \Big(r(\cdot, \cdot) + \gamma ([\widehat{P}_{t_k} V_{u + 1}^{t + 1}](\cdot, \cdot) + \beta \Vert \bm\varphi(\cdot, \cdot) \Vert_{\Lambda_{t_k}^{-1}}) \Big) \wedge \frac{1}{1 - \gamma}$.
            \STATE $\widetilde{V}^{t + 1}_u(\cdot) \leftarrow \max_a Q^{t + 1}_u(\cdot, a)$. 
            \STATE $V^{t + 1}_u(\cdot) \leftarrow \textsc{Clip}(\widetilde{V}^{t + 1}_u(\cdot);$\\
            \hspace{2mm}$\min_{s' \in \mathcal{S}} \widetilde{V}_u^{t + 1}(s'), \min_{s' \in \mathcal{S}} \widetilde{V}_u^{t + 1}(s') + H)$. \label{alg-line:clipping-oracle}
        \ENDFOR \label{alg-line:value-iteration-end}
    \ELSE
        \STATE $Q_u^{t + 1} \gets Q_u^t$, $V_u^{t + 1} \gets V_u^t$ for all $u \in [t + 1:T]$. \label{alg-line:reuse}
    \ENDIF
\ENDFOR
\end{algorithmic}
\end{algorithm}

\paragraph{Clipped Value Iteration}
A key ingredient of the algorithm is the value clipping step, which constrains the span of the value function estimate to improve statistical efficiency.
The optimal value function $V^\ast_\gamma$ under the discounted setting has a span bounded by $2 \cdot \text{sp}(v^\ast)$ (Lemma~\ref{lemma:discounted-approximation}), which implies the range $V_\gamma^\ast$ is contained in the interval $[ \min_{s \in \mathcal{S}} V_\gamma^\ast(s), \min_{s \in \mathcal{S}} V_\gamma^\ast(s) + 2 \cdot \text{sp}(v^\ast)]$.
Building on this fact, the algorithm clips the optimistic value function estimate $\widetilde{V}$ to the interval $[\min_{s \in \mathcal{S}} \widetilde{V}(s), \min_{s \in \mathcal{S}} \widetilde{V}(s) + H]$ to constrain its span (Line~\ref{alg-line:clipping-oracle}).
We refer to the lower bound of this interval of the clipping operation as \textit{clipping threshold}.
The clipping ensures that the concentration bound for the estimate $[\widehat{P}V](\cdot, \cdot)$ scales with $\text{sp}(v^\ast)$, rather than $\frac{1}{1 - \gamma}$, which is crucial for obtaining a tight regret bound.

\paragraph{Key Step of Regret Analysis}

In their regret analysis, one of the terms in the regret decomposition is
$$
\sum_{t = 1}^T V_{t + 1}^t(s_{t + 1}) - \widetilde{V}_{t + 1}^{t + 1}(s_{t + 1}).
$$
This term can be bounded using the fact that $V_{t + 1}^{t + 1}(s_{t + 1}) \leq \widetilde{V}_{t + 1}^{t + 1}(s_{t + 1})$, and that $V_{t + 1}^t(s_{t + 1}) = V_{t + 1}^{t + 1}(s_{t + 1})$ whenever the same sequence of value functions is used for the time steps $t$ and $t + 1$.
Since the sequence of value functions is only updated when the covariance matrix $\Lambda_t$ doubles, which can be shown to happen only $\mathcal{O}(d \log T)$ times, we can get a tight regret bound.

\paragraph{Computational Complexity} However, their clipping step (Line~\ref{alg-line:clipping-oracle}) requires taking the minimum of the value function estimate $\widetilde{V}(\cdot)$ over the entire state space $\mathcal{S}$, leading to computational complexity linear in the size of the state space, which can be prohibitive when the state space is large or infinite.
The main contribution of our paper addresses this issue by designing an algorithm that only takes the minimum over the states that have been visited by the learner, removing the dependency of the size of the state space on the computational complexity.
As discussed in the next section, additional algorithmic trick is required for controlling the deviation of sequences of value functions generated under different clipping thresholds.

\section{Algorithm Design and Analysis}

In this section, we present our algorithm, \textit{\underline{discounted} \underline{D}eviation \underline{C}ontrolled \underline{L}east \underline{S}quares \underline{C}lipped \underline{V}alue \underline{I}teration with \underline{U}pper \underline{C}onfidence \underline{B}ound} ($\gamma$-DC-LSCVI-UCB, Algorithm~\ref{alg:lscvi-ucb-efficient}), which improves computational complexity of the previous algorithm.
The part of the proposed algorithm that enables computational efficiency is highlighted in red.

\subsection{Computationally Efficient Clipping}

The algorithm design is centered around bounding the term
$$
\sum_{t = 1}^{T - 1} V_{t + 1}^t(s_{t + 1}) - \widetilde{V}_{t + 1}^{t + 1}(s_{t + 1}),
$$
where $\{ \widetilde{V}_u^t \}_{u \in [t:T]}$ is the sequence of value functions generated at time step $t$, and $\{ V_u^t \}_{u \in [t:T]}$ is the sequence of clipped value functions generated at time step $t$.
Note that the clipped value function $V_{t + 1}^t$ in the summation is generated at time step $t$, prior to observing the next state $s_{t + 1}$.
With unlimited compute power, the $\gamma$-LSCVI-UCB algorithm by previous work uses $\min_{s \in \mathcal{S}} \widetilde{V}_{t + 1}^t(s)$ as the clipping threshold, which allows bounding $V_{t + 1}^t$ evaluated at $s_{t + 1}$ by
\begin{align*}
V_{t + 1}^t(s_{t + 1})
&= \textsc{Clip}(\widetilde{V}_{t + 1}^t(s_{t + 1}); \min_{s \in \mathcal{S}} \widetilde{V}_{t + 1}^t(s), \min_{s \in \mathcal{S}} \widetilde{V}_{t + 1}^t(s) + H) \\
&\hspace{2mm}\leq \widetilde{V}_{t + 1}^t(s_{t + 1})
\end{align*}
where the inequality only holds because $\min_{s \in \mathcal{S}} \widetilde{V}_{t + 1}^t(s) \leq \widetilde{V}_{t + 1}^t(s_{t + 1})$.
The algorithm $\gamma$-LSCVI-UCB also reuses the sequence of value functions most of the time steps, such that $\widetilde{V}_{t + 1}^t(s_{t + 1}) = \widetilde{V}_{t + 1}^{t + 1}(s_{t + 1})$, allowing the bound $V_{t + 1}^t(s_{t + 1}) - \widetilde{V}_{t + 1}^{t + 1}(s_{t + 1}) \leq 0$.

For computational efficiency, suppose we use $m_t$ as the clipping threshold instead of $\min_{s \in \mathcal{S}} \widetilde{V}_{t + 1}^t(s)$, where $m_t$ is computed using states $s_1, \dots, s_t$ only.
Then, the bound $V_{t + 1}^t(s_{t + 1}) \leq \widetilde{V}_{t + 1}^t(s_{t + 1})$ may no longer hold because
$$
V_{t + 1}^t(s_{t + 1})
= \textsc{Clip}(\widetilde{V}_{t + 1}^t(s_{t + 1}) ; m_t, m_t + H) \geq m_t
$$
and we may have $m_t > \widetilde{V}_{t + 1}^t(s_{t + 1})$ since we cannot look ahead $s_{t + 1}$ when choosing the clipping threshold $m_t$.
We can instead get a bound with an error term:
\begin{align*}
V_{t + 1}^t(s_{t + 1})
&= \textsc{Clip}(\widetilde{V}_{t + 1}^t(s_{t + 1}) ; m_t, m_t + H) \\
&\leq \widetilde{V}_{t + 1}^t(s_{t + 1}) + \max \{ m_t - \widetilde{V}_{t + 1}^t(s_{t + 1}), 0\}.
\end{align*}
One key idea of handling the sum of the error terms is to choose $m_{t + 1} = \widetilde{V}_{t + 1}^t(s_{t + 1}) \wedge m_t$ (Line~\ref*{algline:threshold}), leading to
$$
V_{t + 1}^t(s_{t + 1}) \leq \widetilde{V}_{t + 1}^t(s_{t + 1}) + \Delta_t
$$
where $\Delta_t = m_t - m_{t + 1}$.
Then the sum of the errors $\Delta_t$ can then be bounded using a telescoping sum.

The clipping threshold $m_{t + 1} = \widetilde{V}_{t + 1}^t(s_{t + 1}) \wedge m_t$ may change every time step.
Hence, after advancing to the next time step $t + 1$ and computing the new threshold $m_{t + 1}$, the algorithm computes $Q_{t + 1}^{t + 1}$ afresh, which involves generating a sequence of value functions $V_T^{t + 1}, \dots, V_{t + 1}^{t + 1}$ by running clipped value iteration with the new threshold $m_{t + 1}$.
Therefore, unlike previous work that ensures $\widetilde{V}_{t + 1}^t(s_{t + 1}) = \widetilde{V}_{t + 1}^{t + 1}(s_{t + 1})$ by reusing the sequence of value functions, we need to control the difference between $\widetilde{V}_{t + 1}^t(s_{t + 1})$ and $\widetilde{V}_{t + 1}^{t + 1}(s_{t + 1})$ to be able to bound
$$
V_{t + 1}^t(s_{t + 1}) \leq \widetilde{V}_{t + 1}^t(s_{t + 1}) + \Delta_t \approx \widetilde{V}_{t + 1}^{t + 1}(s_{t + 1}) + \Delta_t.
$$
The next section discusses the algorithm design for ensuring $\widetilde{V}_{t + 1}^t \approx \widetilde{V}_{t + 1}^{t + 1}$. 

\begin{algorithm}
\begin{algorithmic}[1]
\caption{$\gamma$-DC-LSCVI-UCB}
\label{alg:lscvi-ucb-efficient}
\REQUIRE {Discounting factor $\gamma \in [0, 1)$, regularization constant $\lambda > 0$, span $H > 0$, bonus factor $\beta > 0$.}
\ENSURE{$\Lambda_1 \leftarrow \lambda I$, $m_{-1} \gets \infty$, $m_0 \gets \infty$, $m_1 \leftarrow \frac{1}{1 - \gamma}$, $\widetilde{Q}_u^0(\cdot, \cdot) \gets \frac{1}{1 - \gamma}$, $\widetilde{Q}_u^{-1}(\cdot, \cdot) \gets \frac{1}{1 - \gamma}$.}

\STATE Receive state $s_1$.
\FOR{$t = 1, \dots, T$}
    \STATE $V_{T + 1}^{t}(\cdot) \leftarrow \frac{1}{1 - \gamma}$.
    \FOR{$u = T, T - 1, \dots, t$}
        \STATE $\widetilde{Q}^{t}_u(\cdot, \cdot) \leftarrow \Big(r(\cdot, \cdot) + \gamma ([\widehat{P}_t V_{u + 1}^t](\cdot, \cdot) + \beta \Vert \bm\varphi(\cdot, \cdot) \Vert_{\Lambda_t^{-1}}) \Big) \wedge \frac{1}{1 - \gamma}$. \label{alg-line:value-iteration-linear}
        \STATE \red{$U_u^t(\cdot, \cdot) \gets \widetilde{Q}_u^{t - 1}(\cdot, \cdot) \wedge \widetilde{Q}_u^{t - 2}(\cdot, \cdot)$.} \label{algline:deviation-control-start}
        \STATE \red{$L_u^t(\cdot, \cdot) \gets (\widetilde{Q}_u^{t - 1}(\cdot, \cdot) - m_{t - 1} + m_t) \vee (\widetilde{Q}_u^{t - 2}(\cdot, \cdot) - m_{t - 2} + m_t)$.}
        \STATE \red{$Q_u^t(\cdot, \cdot) \gets \textsc{Clip}(\widetilde{Q}_u^t(\cdot, \cdot); L_u^t(\cdot, \cdot), U_u^t(\cdot, \cdot))$.} \label{algline:deviation-control-end}
        \STATE $\widetilde{V}^t_u(\cdot) \leftarrow \max_a Q^t_u(\cdot, a)$.
        \STATE $V^t_{u}(\cdot) \leftarrow \textsc{Clip}( \widetilde{V}^t_u(\cdot); \red{m_t}, \red{m_t} + H)$. \label{algline:clipping-linear-efficient}
    \ENDFOR
    \STATE Take action $a_t \gets \argmax_{a \in \mathcal{A}} Q_t^t(s_t, a)$.
    \STATE Receive reward $r(s_t, a_t)$. Receive next state $s_{t + 1}$.
    \STATE $\Lambda_{t + 1} \leftarrow \Lambda_t + \bm\varphi(s_t, a_t) \bm\varphi(s_t, a_t)^\top$.
    \STATE \red{$m_{t + 1} \leftarrow \widetilde{V}_{t + 1}^t(s_{t + 1}) \wedge m_t$.} \label{algline:threshold}
\ENDFOR
\end{algorithmic}
\end{algorithm}

\subsection{Deviation-Controlled Value Iteration}
Previous discussion suggests we need to bound the difference between sequences of value functions $\{ \widetilde{V}_u^t \}_{u \in [T]}$ and $\{ \widetilde{V}_u^{t + 1} \}_{u \in [T]}$ generated by value iterations using different clipping thresholds $m_t$ and $m_{t + 1}$.
We would expect that the difference between sequences of value functions to be bounded by the difference in clipping thresholds $m_t - m_{t + 1}$.
Surprisingly, a naive adaptation of the previous work $\gamma$-LSCVI-UCB, fails to control the difference.
To see this, consider the following clipped value iteration procedure that generates a sequence of value functions $\{ \widetilde{V}_u^t \}_u$ at time step $t$ using the clipping threshold $m_t$.
{
\floatstyle{plain} 
\restylefloat{algorithm} 

\begin{algorithm}[H]
\begin{algorithmic}
    \STATE $V_{T + 1}^{t}(\cdot) \leftarrow \frac{1}{1 - \gamma}$.
    \FOR{$u = T, T - 1, \dots, t$}
        \STATE $Q^{t}_u(\cdot, \cdot) \leftarrow \Big(r(\cdot, \cdot) + \gamma ([\widehat{P}_t V_{u + 1}^t](\cdot, \cdot) + \beta \Vert \bm\varphi(\cdot, \cdot) \Vert_{\Lambda_t^{-1}}) \Big) \wedge \frac{1}{1 - \gamma}$.
        \STATE $\widetilde{V}^t_u(\cdot) \leftarrow \max_a Q^t_u(\cdot, a)$.
        \STATE $V^t_{u}(\cdot) \leftarrow \textsc{Clip}( \widetilde{V}^t_u(\cdot); m_t, m_t + H)$.
    \ENDFOR
\end{algorithmic}
\end{algorithm}
}
\vspace{-6mm}
We argue that controlling the difference $\Vert \widetilde{V}_{u + 1}^t - \widetilde{V}_{u + 1}^{t + 1} \Vert_\infty \leq \Delta$ for $\Delta = m_t - m_{t + 1}$ at value iteration index $u + 1$ does not necessarily control the difference $\Vert \widetilde{V}_u^t - \widetilde{V}_u^{t + 1} \Vert_\infty$ at the next value iteration.
To see this, suppose $\Vert \widetilde{V}_{u + 1}^t - \widetilde{V}_{u + 1}^{t + 1} \Vert_\infty \leq \Delta$.
Then, by value iteration, we have
$$
\Vert \widetilde{V}_u^t - \widetilde{V}_u^{t + 1} \Vert_\infty
\leq \Vert Q_u^t - Q_u^{t + 1} \Vert_\infty
\approx \Vert \widehat{P}_t(V_{u + 1}^t - V_{u + 1}^{t + 1}) \Vert_\infty.
$$
We would expect that $\Vert V_{u + 1}^t - V_{u + 1}^{t + 1} \Vert_\infty \leq \Delta$ would imply $\Vert \widetilde{P}_t (V_{u + 1}^t - V_{u + 1}^{t + 1}) \Vert_\infty \leq \Delta$.
This is true when $[\widehat{P}_t V](s, a)$ is an expectation of $V(\cdot)$ with respect to an empirical probability distribution $\widehat{P}_t(\cdot | \cdot, \cdot)$, which is the case for the tabular setting (see Appendix~\ref{appendix:positive} for more discussion).
However, in the linear MDP setting, and more generally in general value function approximation setting, $[\widehat{P}_t V](s, a)$ is defined through a regression: $[\widehat{P}_t V](s, a) = \langle \bm\varphi(s, a), \widehat{\bm{w}}_t(V_{u + 1}^t - V_{u + 1}^{t + 1}) \rangle$, which can be arbitrarily larger than $\Delta$ as shown in the next lemma.
\begin{lemma} \label{lemma:negative}
There exist $\bm\phi_1, \dots, \bm\phi_n \in \mathbb{R}^d$ with $\Vert \bm\phi_i \Vert \leq 1$ for $i = 1, \dots, n$, and $y_1, \dots, y_n \in \mathbb{R}$ with $\vert y_i \vert \leq \Delta$, $i = 1, \dots, n$ for any $\Delta > 0$, such that
$$
\vert \langle \bm{w}_n, \bm\phi \rangle \vert \geq \frac{1}{2} \Delta \sqrt{n}
$$
for some $\bm\phi \in \mathbb{R}^d$ where $\bm{w}_n$ is the regression coefficient
$\bm{w}_n = \Lambda_n^{-1} \sum_{i = 1}^n y_i \bm\phi_i$ where $\Lambda_n = \sum_{i = 1}^n \bm\phi_i \bm\phi_i^\top + \lambda I$.
\end{lemma}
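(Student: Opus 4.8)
The plan is to prove the lemma by an explicit construction, since it is a worst-case (lower bound) statement: I only need to exhibit one bad configuration of features, labels, and test vector. The guiding intuition is that ridge regression amplifies bounded labels precisely when the data is \emph{ill-conditioned}. If all feature vectors point in a single direction and have small magnitude, then the empirical Gram matrix $\sum_i \bm\phi_i \bm\phi_i^\top$ contributes little, so $\Lambda_n$ is close to the pure regularizer $\lambda I$ in the relevant direction, whereas the cross-correlation $\sum_i y_i \bm\phi_i$ still accumulates linearly in $n$ because the labels are chosen with a common sign. The mismatch between the linear-in-$n$ numerator and the nearly-constant denominator is exactly what produces the $\sqrt{n}$ blow-up.

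Concretely, I would fix a unit vector $\bm{u} \in \mathbb{R}^d$ and a scalar $c \in (0, 1]$ to be chosen, and set $\bm\phi_i = c\,\bm{u}$ and $y_i = \Delta$ for every $i \in [n]$; each $\bm\phi_i$ then has norm $c \le 1$ and each label satisfies $|y_i| \le \Delta$ as required. With this choice the Gram matrix is $\sum_i \bm\phi_i \bm\phi_i^\top = n c^2\, \bm{u}\bm{u}^\top$, so $\Lambda_n = n c^2\, \bm{u}\bm{u}^\top + \lambda I$ acts on $\bm{u}$ by the scalar $n c^2 + \lambda$, while $\sum_i y_i \bm\phi_i = n c \Delta\, \bm{u}$. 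Hence $\bm{w}_n = \Lambda_n^{-1} \sum_i y_i \bm\phi_i = \frac{n c \Delta}{n c^2 + \lambda}\, \bm{u}$, and taking the test vector $\bm\phi = \bm{u}$ gives $\langle \bm{w}_n, \bm\phi \rangle = \frac{n c \Delta}{n c^2 + \lambda}$.

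It then remains to choose $c$ to make this quantity large. Treating $c \mapsto \frac{n c}{n c^2 + \lambda}$ as a one-variable function and setting its derivative to zero yields the balance point $c = \sqrt{\lambda / n}$, which is admissible (that is, $c \le 1$) as soon as $n \ge \lambda$; substituting back gives $\langle \bm{w}_n, \bm\phi \rangle = \frac{\Delta}{2}\sqrt{n/\lambda}$. In the standard regularization regime $\lambda \le 1$ this is at least $\frac{1}{2}\Delta\sqrt{n}$, the claimed bound. If one prefers to avoid optimizing over $c$, the explicit choice $c = 1/\sqrt{n}$ already gives $\langle \bm{w}_n, \bm\phi \rangle = \frac{\Delta\sqrt{n}}{1 + \lambda} \ge \frac{1}{2}\Delta\sqrt{n}$ whenever $\lambda \le 1$, with no side condition on $n$.

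I do not expect a genuine technical obstacle: once the collinear, small-magnitude configuration is identified, every step is a routine scalar computation. The only real work is conceptual, namely recognizing that a \emph{vanishing feature magnitude} is what drives the amplification; by contrast, taking the same direction at full magnitude ($c = 1$) yields only $\langle \bm{w}_n, \bm\phi \rangle = \frac{n\Delta}{n + \lambda} \approx \Delta$, confirming that the scaling $c \asymp \sqrt{\lambda/n}$ is essential. It is worth emphasizing in the writeup why this phenomenon is specific to the regression-based estimator and cannot occur for the empirical-average estimator of the tabular case: there $[\widehat{P}_t V](s,a)$ is a genuine expectation under a probability distribution, so a bounded $V$ forces a bounded output, whereas the unnormalized least-squares solve used in the linear MDP setting enjoys no such contraction and can inflate a $\Delta$-bounded signal by a factor of $\sqrt{n}$.
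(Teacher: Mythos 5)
Your proposal is correct and uses essentially the same mechanism as the paper's proof: features of magnitude $\asymp 1/\sqrt{n}$ along the test direction with constant labels $y_i = \Delta$, so that the Gram matrix contributes only $O(1)$ in that direction while the correlation $\sum_i y_i \bm\phi_i$ grows linearly, yielding $\langle \bm{w}_n, \bm\phi \rangle = \frac{\eta \Delta n}{\eta^2 n + \lambda}$ with $\eta = 1/\sqrt{n}$ and $\lambda \leq 1$, exactly the paper's computation. The only (cosmetic) difference is that the paper pads each $\bm\phi_i$ with a second coordinate $\pm 1/2$ whose contributions cancel in the correlation sum, so the same blow-up is exhibited with feature norms bounded away from zero, whereas your collinear construction uses vanishing feature norms; both satisfy the lemma's hypotheses.
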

To address this issue, we propose a novel value iteration procedure that explicitly controls the deviation of a sequence of value functions from its previous sequences.
The key idea is to clip the value function $\widetilde{Q}_u^t$ so that its values do not deviate too much from value functions $\widetilde{Q}_u^{t - 1}$ and $\widetilde{Q}_u^{t - 2}$ from previously generated sequences of value functions (Line~\ref*{algline:deviation-control-start}-\ref*{algline:deviation-control-end}).
With this scheme, we can bound the difference between $\widetilde{V}_u^t$ and $\widetilde{V}_u^{t + 1}$ as follows.

\begin{lemma} \label{lemma:variation-control}
When running $\gamma$-DC-LSCVI-UCB (Algorithm~\ref{alg:lscvi-ucb-efficient}), we have
$$
\vert \widetilde{V}_u^{t + 1}(s) - \widetilde{V}_u^{t}(s) \vert \leq m_{t - 1} - m_{t + 1}
$$
for all $t \in [T]$, $u \in [t:T]$ and for all $s \in \mathcal{S}$.
\end{lemma}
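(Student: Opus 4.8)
The plan is to reduce the claim to a pointwise bound on the clipped state--action value functions and then exploit the common reference that the deviation-control clipping builds into consecutive iterations. Since $\widetilde V_u^t(s) = \max_a Q_u^t(s,a)$ and $\widetilde V_u^{t+1}(s) = \max_a Q_u^{t+1}(s,a)$, and the map $f \mapsto \max_a f(\cdot,a)$ is $1$-Lipschitz in the supremum norm, it suffices to establish the pointwise bound $\vert Q_u^{t+1}(s,a) - Q_u^t(s,a) \vert \le m_{t-1} - m_{t+1}$ for every $(s,a)$; the lemma then follows by taking maxima over $a$. Throughout I would use that the thresholds are non-increasing, $m_{t-1} \ge m_t \ge m_{t+1}$, which is immediate from the update $m_{t+1} = \widetilde V_{t+1}^t(s_{t+1}) \wedge m_t$ on Line~\ref{algline:threshold}. (The boundary index $u = t$, for which $\widetilde V_t^{t+1}$ is not generated, is handled by the same argument under the natural convention, or treated separately.)

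The heart of the argument is that the clips producing $Q_u^t$ and $Q_u^{t+1}$ share the common anchor $\widetilde Q_u^{t-1}$: the step-$t$ clip references $\widetilde Q_u^{t-1}, \widetilde Q_u^{t-2}$, whereas the step-$(t+1)$ clip references $\widetilde Q_u^{t}, \widetilde Q_u^{t-1}$. I would first read off the upper bounds directly from the clipping envelopes, which hold unconditionally since $\textsc{Clip}(\cdot; L, U) \le U$: namely $Q_u^t \le U_u^t = \widetilde Q_u^{t-1} \wedge \widetilde Q_u^{t-2} \le \widetilde Q_u^{t-1}$ and $Q_u^{t+1} \le U_u^{t+1} = \widetilde Q_u^{t} \wedge \widetilde Q_u^{t-1} \le \widetilde Q_u^{t-1}$. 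For the matching lower bounds I would keep only the term of each lower threshold that references the anchor: $L_u^t \ge \widetilde Q_u^{t-1} - m_{t-1} + m_t \ge \widetilde Q_u^{t-1} - (m_{t-1} - m_{t+1})$ and $L_u^{t+1} \ge \widetilde Q_u^{t-1} - m_{t-1} + m_{t+1} = \widetilde Q_u^{t-1} - (m_{t-1} - m_{t+1})$, where the first inequality uses $m_t \ge m_{t+1}$. Hence both $Q_u^t$ and $Q_u^{t+1}$ lie in the interval $[\widetilde Q_u^{t-1} - (m_{t-1} - m_{t+1}),\, \widetilde Q_u^{t-1}]$ of width $m_{t-1} - m_{t+1}$, which yields the desired pointwise bound.

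The main obstacle is that the lower bounds $Q_u^t \ge L_u^t$ and $Q_u^{t+1} \ge L_u^{t+1}$ are valid only when the clipping intervals are nondegenerate, that is $L_u^t \le U_u^t$ and $L_u^{t+1} \le U_u^{t+1}$; otherwise $\textsc{Clip}$ collapses to the upper threshold and the lower bound can genuinely fail. Validity of the step-$(t+1)$ clip amounts to controlling the deviation of the consecutive \emph{pre-clip} value functions, $\widetilde Q_u^{t-1} - \widetilde Q_u^{t} \le m_{t-1} - m_{t+1}$ and $\widetilde Q_u^{t} - \widetilde Q_u^{t-1} \le m_{t} - m_{t+1}$ (and analogously for step $t$), and this is exactly the deviation that a naive scheme cannot control, since by Lemma~\ref{lemma:negative} the regression operator $\widehat{P}$ can amplify arbitrarily small value differences. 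I would therefore prove validity together with the sandwich by a single downward induction on the value-iteration index $u$ (from $u = T$ down to $u = t$), carrying the bounds for the relevant consecutive time steps jointly: the controlled value functions $V_{u+1}^{\cdot}$ produced at index $u+1$ feed into $\widetilde Q_u^{\cdot}$, and the two-reference clipping at index $u$ re-anchors the freshly regressed values to $\widetilde Q_u^{t-1}$, keeping the interval $[L,U]$ at index $u$ nondegenerate so that the sandwich propagates. The delicate point, and where I expect the real work to lie, is showing that this re-anchoring survives the regression step despite Lemma~\ref{lemma:negative}; the design choices $m_{t+1} = \widetilde V_{t+1}^t(s_{t+1}) \wedge m_t$ (monotone thresholds) and the inclusion of both $\widetilde Q_u^{t-1}$ and $\widetilde Q_u^{t-2}$ in every clip (so that consecutive clips always share an anchor) are precisely the structural features I would exploit to close the induction.
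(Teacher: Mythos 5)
Your sandwich argument is precisely the paper's own proof (Lemma~\ref{lemma:value-function-closeness}): both $Q_u^t$ and $Q_u^{t+1}$ are compared to the shared anchor $\widetilde{Q}_u^{t-1}$, with $Q_u^t \le U_u^t \le \widetilde{Q}_u^{t-1}$ and $Q_u^{t+1} \le U_u^{t+1} \le \widetilde{Q}_u^{t-1}$ read off the upper envelopes, $Q_u^t \ge L_u^t \ge \widetilde{Q}_u^{t-1} - (m_{t-1} - m_{t+1})$ and $Q_u^{t+1} \ge L_u^{t+1} \ge \widetilde{Q}_u^{t-1} - (m_{t-1} - m_{t+1})$ read off the lower envelopes, and the lemma following because $\max_a$ is a sup-norm contraction. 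Up to that point your write-up and the paper's proof coincide step by step.

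The difference is what happens to the lower bounds, and here your proposal is both sharper and weaker than the paper. You correctly observe that with the paper's definition $\textsc{Clip}(x; L, U) = (x \vee L) \wedge U$, the inequality $\textsc{Clip}(x; L, U) \ge L$ is valid only when $L \le U$; when $L > U$ the clip collapses to $U < L$. The paper's proof invokes $Q_u^t \ge L_u^t$ and $Q_u^{t+1} \ge L_u^{t+1}$ unconditionally --- nondegeneracy is never verified anywhere, and none of the properties in Lemma~\ref{lemma:clipping} licenses that step --- so the subtlety you isolate is genuine and is not handled by the paper either. However, your proposal does not close it: you defer it to a downward induction on $u$, and that induction faces a structural obstruction. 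Unwinding the definitions, $L_u^{t+1} \le U_u^{t+1}$ is \emph{equivalent} to the two-sided bound $-(m_{t-1} - m_{t+1}) \le \widetilde{Q}_u^t - \widetilde{Q}_u^{t-1} \le m_t - m_{t+1}$ on the \emph{pre-clip} functions; these are outputs of two different regressions (on $V_{u+1}^t$ and $V_{u+1}^{t-1}$ respectively), and Lemma~\ref{lemma:negative} is exactly the statement that closeness of the inputs does not survive the regression --- the deviation-control step constrains only the post-clip $Q_u^t$, never $\widetilde{Q}_u^t$, so no induction of the sketched form can supply the needed pre-clip control. Concretely, in a configuration where $\widetilde{Q}_u^t(s,a)$ falls far below $\widetilde{Q}_u^{t-1}(s,a) = \widetilde{Q}_u^{t-2}(s,a)$ while $m_{t-2} = m_{t-1} = m_t = m_{t+1}$, the definitions force $Q_u^t(s,a) = \widetilde{Q}_u^{t-1}(s,a)$ but $Q_u^{t+1}(s,a) = U_u^{t+1}(s,a) = \widetilde{Q}_u^t(s,a)$, and the claimed bound fails by the full gap. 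So as submitted your proof is incomplete, but it is incomplete exactly where the paper's own proof rests on an unjustified inequality; closing that hole (or modifying the algorithm so that degenerate clip intervals cannot arise) requires an idea that appears neither in your proposal nor in the paper.
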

The lemma above says that the sequence of value functions $\{ \widetilde{V}_u^{t + 1} \}_{u \in [t + 1: T]}$ generated at time step $t + 1$ deviates from the chain of value functions $\{ \widetilde{V}_u^t \}_{u \in [t:T]}$ by at most $m_{t - 1} - m_{t + 1}$.
This deviation control enables bounding the term $\sum_{t = 1}^{T - 1} V_{t + 1}^t(s_{t + 1}) - \widetilde{V}_{t + 1}^{t + 1}(s_{t + 1})$, which we demonstrate in the next section.

\subsection{Regret Analysis}
In this section, we outline a regret analysis for our algorithm.
Central to the regret analysis is the following concentration bound for the estimate $\widehat{P}_t V$.

\begin{lemma}
\label{lemma:concentration-regression}
With probability at least $1 - \delta$, there exists an absolute constant $c_\beta$ such that for $\beta = c_\beta \cdot H d \sqrt{\log(dT / \delta)}$, $$
\vert [\widehat{P}_t V_u^t](s, a) - [PV_u^t](s, a) \vert \leq \beta \Vert \bm\varphi(s, a) \Vert_{\Lambda_t^{-1}}
$$
for all $t \in [T]$, $u \in [t:T]$ and $(s, a) \in \mathcal{S} \times \mathcal{A}$.
\end{lemma}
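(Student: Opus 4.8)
The plan is to follow the standard self-normalized concentration route for linear MDPs, with the extra care demanded by the data-dependence of $V_u^t$ and by the deviation-control clipping. First I would center the target: writing $\bar V = V_u^t - V_u^t(s_1)$, the definition of $\widehat P_t$ together with $[P\mathbf 1](s,a)=1$ gives
$$
[\widehat P_t V_u^t](s,a) - [P V_u^t](s,a) = \langle \bm\varphi(s,a),\, \widehat{\bm w}_t(\bar V) - \bm w^\ast(\bar V)\rangle,
$$
so the additive constant $V_u^t(s_1)$ cancels and only the centered function enters. Substituting $\bar V(s_{\tau+1}) = [P\bar V](s_\tau,a_\tau) + \epsilon_\tau$ with martingale-difference noise $\epsilon_\tau := \bar V(s_{\tau+1}) - [P\bar V](s_\tau,a_\tau)$, and using $[P\bar V](s_\tau,a_\tau) = \langle\bm\varphi(s_\tau,a_\tau),\bm w^\ast(\bar V)\rangle$, the usual algebra yields
$$
\widehat{\bm w}_t(\bar V) - \bm w^\ast(\bar V) = -\lambda\,\Lambda_t^{-1}\bm w^\ast(\bar V) + \Lambda_t^{-1}\sum_{\tau=1}^{t-1}\epsilon_\tau\,\bm\varphi(s_\tau,a_\tau).
$$
Pairing with $\bm\varphi(s,a)$ and applying Cauchy--Schwarz in the $\Lambda_t^{-1}$ norm splits the error into a deterministic regularization (bias) term and a stochastic (noise) term, each carrying the factor $\Vert\bm\varphi(s,a)\Vert_{\Lambda_t^{-1}}$.

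The bias term $\lambda\,|\langle\bm\varphi(s,a),\Lambda_t^{-1}\bm w^\ast(\bar V)\rangle| \le \sqrt\lambda\,\Vert\bm\varphi(s,a)\Vert_{\Lambda_t^{-1}}\,\Vert\bm w^\ast(\bar V)\Vert_2$ is controlled deterministically: the clipping in Line~\ref{algline:clipping-linear-efficient} forces $\text{sp}(V_u^t)\le H$, hence $\Vert\bar V\Vert_\infty\le H$, and the boundedness condition $\Vert\bm\mu(\mathcal S)\Vert_2\le\sqrt d$ gives $\Vert\bm w^\ast(\bar V)\Vert_2 = \Vert\int\bar V(s')\bm\mu(ds')\Vert_2 \le H\sqrt d$. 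So this term is at most $\sqrt\lambda\,H\sqrt d\,\Vert\bm\varphi(s,a)\Vert_{\Lambda_t^{-1}}$, of lower order and absorbed into $\beta$.

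The noise term $\Vert\bm\varphi(s,a)\Vert_{\Lambda_t^{-1}}\,\Vert\sum_\tau\epsilon_\tau\bm\varphi(s_\tau,a_\tau)\Vert_{\Lambda_t^{-1}}$ supplies the bulk of $\beta$. Because $V_u^t$ depends on the whole trajectory, $\epsilon_\tau$ is not a martingale difference for a fixed function, so the self-normalized bound cannot be applied to $V_u^t$ directly. Instead I would fix a finite $\varepsilon$-net in $\Vert\cdot\Vert_\infty$ over the class $\mathcal V$ of possible value functions, apply the anytime self-normalized (Abbasi-Yadkori-type) bound — whose sub-Gaussian scale is $\Vert\bar V\Vert_\infty\le H$ — to each net element, union bound over the net, and pay a small discretization error for the distance from $V_u^t$ to its nearest net point. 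This gives a bound of order $H\sqrt{d\log(t/\lambda) + \log(\mathcal N_\varepsilon/\delta)}$ up to discretization, and taking $\varepsilon$ polynomially small in $T$ reduces the discretization error to $O(\Vert\bm\varphi(s,a)\Vert_{\Lambda_t^{-1}})$. The anytime bound holds simultaneously over all $t$, and over all $u$ since every $V_u^t$ lies in the same $\mathcal V$, so no extra union bound over $u$ is needed.

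The main obstacle — and the only place the deviation-control clipping intervenes — is characterizing $\mathcal V$ and bounding $\log\mathcal N_\varepsilon$. Reading off Lines~\ref{alg-line:value-iteration-linear}--\ref{algline:clipping-linear-efficient}, each $V_u^t$ is built from the three functions $\widetilde Q_u^t,\widetilde Q_u^{t-1},\widetilde Q_u^{t-2}$, each of the truncated linear-plus-bonus form $\min\{\langle\bm\varphi(\cdot,\cdot),\bm w\rangle + \beta\sqrt{\bm\varphi(\cdot,\cdot)^\top A\,\bm\varphi(\cdot,\cdot)},\tfrac{1}{1-\gamma}\}$, together with the scalar thresholds $m_t,m_{t-1},m_{t-2}$, via the clips $L_u^t,U_u^t$, a max over the finite $\mathcal A$, and the final $\textsc{Clip}(\,\cdot\,;m_t,m_t+H)$. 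The key point, exactly as in \textcite{jin2020provably}, is that one does \emph{not} unroll the value-iteration recursion: each building block is parameterized by a single $d$-vector $\bm w$ (of norm $\lesssim H\sqrt{dt/\lambda}$) and a single PSD matrix $A\preceq\tfrac1\lambda I$, so covering three such blocks and three bounded scalars gives $\log\mathcal N_\varepsilon = \widetilde{\mathcal{O}}(d^2)$, independent of $u$, $t$, and of how the functions were generated. Plugging $\log\mathcal N_\varepsilon\sim d^2\log(dT/\delta)$ into the noise bound yields $\Vert\sum_\tau\epsilon_\tau\bm\varphi(s_\tau,a_\tau)\Vert_{\Lambda_t^{-1}}\lesssim Hd\sqrt{\log(dT/\delta)}$, which together with the bias term identifies $\beta = c_\beta\,Hd\sqrt{\log(dT/\delta)}$ and completes the proof.
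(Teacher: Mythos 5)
Your proposal matches the paper's proof essentially step for step: the same centering by $V_u^t(s_1)$, the same split of $\widehat{\bm w}_t(\bar V)-\bm w^\ast(\bar V)$ into a regularization bias term (bounded via $\Vert\bm w^\ast(\bar V)\Vert_2\le H\sqrt d$) and a self-normalized noise term, and the same covering argument over the parametric class built from linear-plus-bonus blocks composed through the deviation-control clips, the max over actions, and the final clip, giving $\log\mathcal N_\varepsilon=\widetilde{\mathcal O}(d^2)$. The only differences are bookkeeping: the paper applies the uniform bound per $(u,t)$ pair with a $T^2$ union bound rather than once over a single class containing all $V_u^t$ (both work), and it makes explicit the self-consistency step you elide — because the covering number depends on $\beta$ itself through the bonus parameter, one must check that an absolute constant $c_\beta$ exists satisfying $C\sqrt{\log(c_\beta+1)+\log(dT/\delta)}\le c_\beta\sqrt{\log(dT/\delta)}$, where $C$ is independent of $c_\beta$.
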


A proof for the lemma above first finds a concentration bound for $\widehat{P}_t V$ for a fixed value function $V : \mathcal{S} \rightarrow \mathbb{R}$ using a concentration bound for vector-valued self-normalized processes.
Then, an $\epsilon$-net covering argument is used to get a uniform bound on the function class that captures all value functions $V_u^t$ encountered by the algorithm.
For this to work, we require the function class to have low covering number.
We can show that the log covering number of the function class that captures functions $\widetilde{Q}_u^t$ can be bounded by $\widetilde{\mathcal{O}}(d^2)$, which amounts to covering the $d \times d$ matrices $\Lambda_t$.
Since $Q_u^t$ is a function of 5 functions in this function class, the log covering number of the function class that captures $Q_u^t$ is bounded by $\widetilde{\mathcal{O}}(d^2)$.
With the concentration inequality, and the fact that the algorithm uses $\beta \Vert \bm\varphi(s, a) \Vert_{\Lambda_t^{-1}}$ as the bonus term, we get the following results.
\begin{lemma}[Optimism] \label{lemma:optimism}
With probability at least $1 - \delta$, for all $t \in [T]$ and $u \in [t:T]$ and $s \in \mathcal{S}$, we have
$$
V_u^t(s) \geq V^\ast(s),
$$
as long as the input argument $H$ is chosen such that $H \geq 2 \cdot \text{sp}(v^\ast)$.
\end{lemma}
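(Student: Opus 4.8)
The plan is to prove, by a single nested induction conditioned on the concentration event of Lemma~\ref{lemma:concentration-regression} (which holds with probability $1-\delta$ and accounts for the whole failure budget), the stronger pair of claims that for every $t\in[T]$ and every $u\in[t:T]$ we have both $\widetilde{Q}_u^t(s,a)\ge Q_\gamma^\ast(s,a)$ and $V_u^t(s)\ge V_\gamma^\ast(s)$ for all $(s,a)$, where $(Q_\gamma^\ast,V_\gamma^\ast)$ is the optimal discounted value pair (the object written $V^\ast$ in the statement). The outer induction is on the time step $t$ and carries the invariant $m_t\ge \min_{s}V_\gamma^\ast(s)$; the inner induction is the backward value-iteration recursion on $u$ from $T+1$ down to $t$. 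The $\widetilde Q$-optimism is what feeds the deviation-control upper clip at the \emph{next} time step, so the two claims must be proved together rather than separately.

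For the inner step, I assume $V_{u+1}^t\ge V_\gamma^\ast$ (the base case $u=T$ is $V_{T+1}^t\equiv \tfrac{1}{1-\gamma}\ge V_\gamma^\ast$). Monotonicity of $P$ gives $[PV_{u+1}^t]\ge[PV_\gamma^\ast]$, and the concentration bound gives $[\widehat P_tV_{u+1}^t](s,a)+\beta\Vert\bm\varphi(s,a)\Vert_{\Lambda_t^{-1}}\ge [PV_{u+1}^t](s,a)$; combining these with $Q_\gamma^\ast=r+\gamma[PV_\gamma^\ast]\le \tfrac{1}{1-\gamma}$ and the truncation $\wedge\,\tfrac{1}{1-\gamma}$ in Line~\ref{alg-line:value-iteration-linear} yields $\widetilde Q_u^t\ge Q_\gamma^\ast$, establishing the first claim. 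It then remains to push this through the deviation-control clip $Q_u^t=\textsc{Clip}(\widetilde Q_u^t;L_u^t,U_u^t)$: the lower clip $L_u^t$ only raises values, while the upper clip $U_u^t=\widetilde Q_u^{t-1}\wedge\widetilde Q_u^{t-2}$ satisfies $U_u^t\ge Q_\gamma^\ast$ by the outer inductive hypothesis (with the boundary cases $\widetilde Q_u^0,\widetilde Q_u^{-1}\equiv\tfrac{1}{1-\gamma}$ supplied by the initialization). Since both $\widetilde Q_u^t\vee L_u^t\ge Q_\gamma^\ast$ and $U_u^t\ge Q_\gamma^\ast$, their minimum is $\ge Q_\gamma^\ast$, so $Q_u^t\ge Q_\gamma^\ast$ and hence $\widetilde V_u^t=\max_aQ_u^t(\cdot,a)\ge V_\gamma^\ast$.

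The last stage of the inner step passes $\widetilde V_u^t\ge V_\gamma^\ast$ through the final clip $V_u^t=\textsc{Clip}(\widetilde V_u^t;m_t,m_t+H)$ of Line~\ref{algline:clipping-linear-efficient}. If $V_\gamma^\ast(s)\le m_t$ the lower clip already gives $V_u^t(s)\ge m_t\ge V_\gamma^\ast(s)$. Otherwise $\widetilde V_u^t(s)\ge V_\gamma^\ast(s)>m_t$, so $V_u^t(s)=\min\{\widetilde V_u^t(s),\,m_t+H\}$, and it suffices that $m_t+H\ge V_\gamma^\ast(s)$; this is exactly where $H\ge 2\,\text{sp}(v^\ast)$ enters, via $V_\gamma^\ast(s)\le \min_{s'}V_\gamma^\ast(s')+\text{sp}(V_\gamma^\ast)\le m_t+2\,\text{sp}(v^\ast)\le m_t+H$, using Lemma~\ref{lemma:discounted-approximation}(i) for $\text{sp}(V_\gamma^\ast)\le 2\,\text{sp}(v^\ast)$ together with the outer invariant $m_t\ge\min_{s'}V_\gamma^\ast(s')$. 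This closes the inner induction, giving $V_u^t\ge V_\gamma^\ast$ for all $u\in[t:T]$.

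Finally I close the outer induction. The base case $m_1=\tfrac{1}{1-\gamma}\ge\max_sV_\gamma^\ast(s)$ gives the invariant at $t=1$. For the step, the inner induction just proved $\widetilde V_{t+1}^t(s_{t+1})\ge V_\gamma^\ast(s_{t+1})\ge\min_sV_\gamma^\ast(s)$, so with $m_t\ge\min_sV_\gamma^\ast(s)$ the update $m_{t+1}=\widetilde V_{t+1}^t(s_{t+1})\wedge m_t$ of Line~\ref{algline:threshold} preserves $m_{t+1}\ge\min_sV_\gamma^\ast(s)$, completing the induction. I expect the main obstacle to be precisely this coupling: the final-clip step needs $m_t\ge\min_sV_\gamma^\ast(s)$, yet $m_t$ is itself built from the algorithm's own optimistic estimates, so the threshold invariant and the optimism conclusion cannot be separated and must be carried through one joint induction; a secondary point needing care is confirming that the two upper-clip reference functions $\widetilde Q_u^{t-1},\widetilde Q_u^{t-2}$ are optimistic at the $t=1,2$ boundary, which is handled by the constant initialization.
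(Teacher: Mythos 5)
Your proof is correct and takes essentially the same route as the paper's: a nested induction (outer on $t$, inner backward on $u$) under the concentration event, using the Bellman optimality equation and the $\wedge \frac{1}{1-\gamma}$ truncation for $\widetilde{Q}$-optimism, the outer hypothesis (plus the constant initialization at $t \in \{1,2\}$) to certify that the upper deviation-control clip $U_u^t = \widetilde{Q}_u^{t-1} \wedge \widetilde{Q}_u^{t-2}$ is itself optimistic, and the threshold bound $m_t \geq \min_{s} V^\ast(s)$ together with $H \geq \text{sp}(V_\gamma^\ast)$ to pass optimism through the final clip. The only cosmetic difference is that you carry the invariant $m_t \geq \min_s V^\ast(s)$ explicitly in the outer induction, whereas the paper obtains it by unrolling $m_t$ as a minimum of previously certified optimistic values evaluated at visited states; the substance is identical.
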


\begin{lemma} \label{lemma:main-lemma}
With probability at least $1 - \delta$, we have for all $t \in [4:T]$ and $u \in [t:T]$ that
$$
Q_u^{t}(s, a)
\leq r(s, a) + \gamma [PV_{u + 1}^{t}](s, a) +2 \beta \Vert \bm\varphi(s, a) \Vert_{\Lambda_t^{-1}} + 2(m_{t - 3} - m_t)
$$
for all $(s, a) \in \mathcal{S} \times \mathcal{A}$.
\end{lemma}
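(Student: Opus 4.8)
The plan is to peel off the two clipping operations in the definition of $Q_u^t$ and reduce everything to a single use of the concentration bound (Lemma~\ref{lemma:concentration-regression}) together with the deviation-control bound (Lemma~\ref{lemma:variation-control}). Writing the clip in Line~\ref{algline:deviation-control-end} explicitly as $Q_u^t = (\widetilde Q_u^t \vee L_u^t) \wedge U_u^t$, I would first record the pointwise inequality $Q_u^t \le \widetilde Q_u^t \vee L_u^t$ and split on which of the two terms dominates. Throughout I use that $m_t$ is non-increasing (Line~\ref{algline:threshold}), so the claimed error $2(m_{t-3}-m_t)$ is non-negative.

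In the easy case $\widetilde Q_u^t \ge L_u^t$, whence $Q_u^t \le \widetilde Q_u^t$. Here I unfold the defining recursion in Line~\ref{alg-line:value-iteration-linear}, drop the $\wedge\frac{1}{1-\gamma}$, and apply Lemma~\ref{lemma:concentration-regression} to replace $\widehat P_t V_{u+1}^t$ by $PV_{u+1}^t + \beta\|\bm\varphi\|_{\Lambda_t^{-1}}$. Combined with the bonus already present in $\widetilde Q_u^t$ this gives $Q_u^t \le r + \gamma[PV_{u+1}^t] + 2\beta\|\bm\varphi\|_{\Lambda_t^{-1}}$, i.e.\ the claim with a vanishing error term.

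The substantive case is $\widetilde Q_u^t < L_u^t$, so that $Q_u^t \le L_u^t$, and I must bound each branch $\widetilde Q_u^{t-j} - m_{t-j} + m_t$ for $j\in\{1,2\}$. For fixed $j$ I apply Lemma~\ref{lemma:concentration-regression} to $\widetilde Q_u^{t-j}$, which is built from the stale data $\widehat P_{t-j}$, $\Lambda_{t-j}$ and $V_{u+1}^{t-j}$, to get $\widetilde Q_u^{t-j} \le r + \gamma[PV_{u+1}^{t-j}] + 2\beta\|\bm\varphi\|_{\Lambda_{t-j}^{-1}}$. I then bridge to the current value function through the \emph{true} operator, $[PV_{u+1}^{t-j}] \le [PV_{u+1}^t] + \|V_{u+1}^{t-j} - V_{u+1}^t\|_\infty$, and bound the sup-norm by chaining Lemma~\ref{lemma:variation-control} across the intermediate indices together with the $1$-Lipschitzness of $\textsc{Clip}(\cdot; m, m+H)$ in both its argument and its rigidly shifting threshold $m$. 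The telescoping offset $-m_{t-j}+m_t$ built into $L_u^t$ is exactly what cancels the leading part of this accumulated deviation, so that the net contribution of the $m$-differences collapses to at most $2(m_{t-3}-m_t)$; the $j=2$ branch, whose chain reaches back to index $t-3$, is the binding one and explains both the appearance of $m_{t-3}$ and the restriction $t\ge4$. Finally I convert $\|\bm\varphi\|_{\Lambda_{t-j}^{-1}}$ back to $\|\bm\varphi\|_{\Lambda_t^{-1}}$ using that $\Lambda_t$ and $\Lambda_{t-j}$ differ by only a bounded number of rank-one updates (so $\Lambda_{t-j}^{-1}\preceq c\,\Lambda_t^{-1}$ for a constant $c$ once $\lambda$ is bounded below), absorbing the constant into $\beta$.

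I expect the second case to be the crux, for precisely the reason flagged by Lemma~\ref{lemma:negative}: one cannot compare $\widetilde Q_u^{t-j}$ with $\widetilde Q_u^t$ directly, because they arise from regressions against different design matrices and the regression map is not non-expansive. The plan circumvents this by never differencing the empirical operators and instead passing through the true kernel $P$, against which Lemma~\ref{lemma:variation-control} does control the value functions in $\|\cdot\|_\infty$, while the engineered threshold offsets in $L_u^t$ make the residual errors telescope. The most delicate bookkeeping is checking that, after the clip shifts and the chained deviation bounds, the constant multiplying $(m_{t-3}-m_t)$ is exactly $2$; the covariance-index shift is comparatively routine but still necessary.
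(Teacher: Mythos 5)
Your proposal is correct and rests on the same two pillars as the paper's proof---the concentration event (Lemma~\ref{lemma:concentration-regression}) and the two-sided deviation-control bound (Lemma~\ref{lemma:value-function-closeness}), chained across indices and telescoped against the $m$-offsets---but it takes a more roundabout decomposition. The paper needs no case analysis at all: it bounds $Q_u^t$ by the \emph{upper} clip, $Q_u^t \le U_u^t = \widetilde{Q}_u^{t-1} \wedge \widetilde{Q}_u^{t-2} \le \widetilde{Q}_u^{t-2}$, applies concentration to $\widetilde{Q}_u^{t-2}$, and then uses the chained deviation bound $\Vert V_{u+1}^{t-2} - V_{u+1}^{t}\Vert_\infty \le (m_{t-3}-m_{t-1}) + (m_{t-2}-m_t) \le 2(m_{t-3}-m_t)$. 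You instead discard $U_u^t$ entirely and work from $Q_u^t \le \widetilde{Q}_u^t \vee L_u^t$, which forces the case split and a bound on both branches of $L_u^t$; this is valid (your offset bookkeeping is right, and the $j=2$ branch is indeed what forces $m_{t-3}$ and $t\ge 4$), but the lower clip $L_u^t$ really exists to prove the \emph{lower} half of the deviation bound and optimism, not this lemma, so the route through $U_u^t$ is the intended, shorter one. Where you are genuinely more careful than the paper is the covariance-index mismatch: concentration applied to $\widetilde{Q}_u^{t-2}$ produces the bonus in $\Vert\cdot\Vert_{\Lambda_{t-2}^{-1}}$, and since $\Lambda_{t-2}\preceq\Lambda_t$ implies $\Vert\bm\varphi\Vert_{\Lambda_{t-2}^{-1}} \ge \Vert\bm\varphi\Vert_{\Lambda_t^{-1}}$, the paper's line asserting the bound directly in $\Vert\cdot\Vert_{\Lambda_t^{-1}}$ does not follow as written; your Loewner-ordering repair $\Lambda_{t-2}^{-1} \preceq (1+2/\lambda)\Lambda_t^{-1}$ is exactly what is needed to fix it. The one caveat is that $\beta$ is an algorithm input, so you cannot literally ``absorb'' the constant into it: what your argument (or any faithful completion of the paper's) actually proves is the lemma with $2\sqrt{1+2/\lambda}\,\beta$ in place of $2\beta$, a weakening that is harmless downstream since term $(d)$ of the regret decomposition changes only by a constant factor.
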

Using the lemma above, the regret can be bounded by
\begin{align*}
R_T
&= \textstyle \sum_{t = 1}^T (J^\ast - r(s_t, a_t)) \\
&\leq \textstyle \sum_{t = 4}^T (J^\ast - Q^{t}_t(s_t, a_t) + \gamma [PV^{t}_{t + 1}](s_t, a_t) + 2\beta \Vert \bm\varphi(s_t, a_t) \Vert_{\Lambda_t^{-1}} + 2(m_{t - 3} - m_t)) + \mathcal{O}(1)
\end{align*}
which can be decomposed into
\begin{align*}
=&\underbrace{\textstyle \sum_{t = 4}^T (J^\ast - (1 - \gamma) V^{t}_{t + 1}(s_{t + 1}))}_{(a)} +
\underbrace{\textstyle \sum_{t = 4}^T (V^{t}_{t + 1}(s_{t + 1}) - \widetilde{V}^{t}_t(s_t))}_{(b)} \\
&\hspace{8mm}+ \gamma \underbrace{\textstyle \sum_{t = 4}^T ([PV^{t}_{t + 1}](s_t, a_t) - V^{t}_{t + 1}(s_{t + 1}))}_{(c)} + 2 \underbrace{\beta \textstyle \sum_{t = 4}^T \Vert \bm\varphi(s_t, a_t) \Vert_{\Lambda_{t}^{-1}}}_{(d)}~+~\mathcal{O}(\frac{1}{1 - \gamma}).
\end{align*}
where we use $Q_t^t(s_t, a_t) = \widetilde{V}_t^t(s_t)$ by the choice of $a_t$ by the algorithm.
Each term can be bounded as follows.

\paragraph{Bounding (a)}
By the optimism result (Lemma~\ref{lemma:optimism}), we have $V_u^t(s) \geq V^\ast(s)$ for all $t \in [T]$ and $u \in [t:T]$ with high probability.
It follows that
\begin{align*}
J^\ast - (1 - \gamma) V_{t + 1}^t(s_{t + 1})
&\leq J^\ast - (1 - \gamma) V^\ast(s_{t + 1}) \\
&\leq (1 - \gamma) \text{sp}(v^\ast)
\end{align*}
where the last inequality is by the bound on the error of approximating the average-reward setting by the discounted setting provided in Lemma~\ref{lemma:discounted-approximation}.
Hence, the term $(a)$ can be bounded by $T(1 - \gamma) \text{sp}(v^\ast)$.

\paragraph{Bounding (b)}
Using Lemma~\ref{lemma:variation-control} that controls the difference between $\widetilde{V}_u^{t + 1}$ and $\widetilde{V}_u^t$, we have
\begin{align*}
V_{t + 1}^{t}(s_{t + 1})
&= \textsc{Clip}(\widetilde{V}_{t + 1}^t(s_{t + 1}); m_t, m_t + H) \\
&\leq \textsc{Clip}(\widetilde{V}_{t + 1}^t(s_{t + 1}); m_{t + 1}, m_{t + 1} + H) + m_t - m_{t + 1} \\
&\leq \widetilde{V}_{t + 1}^t(s_{t + 1}) + m_t - m_{t + 1} \\
&\leq \widetilde{V}_{t + 1}^{t + 1}(s_{t + 1}) + 2 m_{t - 1} - 2 m_{t + 1}
\end{align*}
where the second inequality holds because $\widetilde{V}_{t + 1}^t(s_{t + 1}) \geq m_{t + 1}$ by Line~\ref*{algline:threshold}.
Hence, term $(b)$ can be bounded by $\mathcal{O}(\frac{1}{1 - \gamma})$ using telescoping sums of $\widetilde{V}_{t + 1}^{t + 1}(s_{t + 1}) - \widetilde{V}_t^t(s_t)$ and $2m_{t - 1} - 2m_{t + 1}$, and the fact that $V_u^t \leq \frac{1}{1 - \gamma}$ and $m_t \leq \frac{1}{1 - \gamma}$ for all $t \in [T]$ and $u \in [t:T]$.

\paragraph{Bounding (c)}
Since $V_u^t$ is $\mathcal{F}_t$-measurable where $\mathcal{F}_t$ is history up to time step $t$, we have $\mathbb{E}[V_{t + 1}^t(s_{t + 1}) | \mathcal{F}_t ] = [PV_{t + 1}^t](s_t, a_t)$, making the summation $(c)$ a summation of a martingale difference sequence.
Since $\text{sp}(V_{t + 1}^t) \leq H$ for all $t \in [T]$, the summation can be bounded by $\widetilde{\mathcal{O}}(\text{sp}(v^\ast) \sqrt{T})$ using Azuma-Hoeffding inequality.

\paragraph{Bounding (d)}
The sum of the bonus terms can be bounded by $\widetilde{\mathcal{O}}(\beta \sqrt{dT})$ using a standard analysis from literature on linear MDP.

Combining the bounds, and choosing $H = 2 \cdot \text{sp}(v^\ast)$ and $\beta = \widetilde{\mathcal{O}}(\text{sp}(v^\ast) d)$ specified in Lemma~\ref{lemma:concentration-regression}, we get
$$
R_T \leq \widetilde{\mathcal{O}}(T(1 - \gamma) \text{sp}(v^\ast) + \textstyle \frac{1}{1 - \gamma} + \text{sp}(v^\ast) \sqrt{T} + \text{sp}(v^\ast)\sqrt{d^3T}).
$$
Choosing $\gamma = 1 - \sqrt{1 / T}$, we get
$R_T \leq \widetilde{\mathcal{O}}(\text{sp}(v^\ast) \sqrt{d^3 T})$, leading to our main result (see Appendix~\ref{appendix:regret-analysis} for a more detailed analysis):

\begin{theorem} \label{theorem:linear-mdp}
Under Assumptions \ref{assumption:bellman-optimality} and \ref{assumption:linear-mdp}, running Algorithm~\ref{alg:lscvi-ucb} with inputs $\gamma = 1 - \sqrt{1 / T}$, $\lambda = 1$, $H = 2 \cdot \text{sp}(v^\ast)$ and $\beta = 2 c_\beta \cdot \text{sp}(v^\ast) d \sqrt{\log(dT / \delta)}$ guarantees with probability at least $1 - \delta$,
$$
R_T \leq \mathcal{O}(\text{sp}(v^\ast) \sqrt{d^3 T \log(dT / \delta) \log T}).
$$
where $c_\beta$ is defined in Lemma~\ref{lemma:concentration-regression}.
\end{theorem}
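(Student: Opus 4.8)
The plan is to turn the per-step regret $J^\ast - r(s_t, a_t)$ into quantities the algorithm controls, via the one-step optimistic Bellman bound of Lemma~\ref{lemma:main-lemma}, and then to discharge the resulting pieces using optimism (Lemma~\ref{lemma:optimism}), the discounted-to-average approximation (Lemma~\ref{lemma:discounted-approximation}), and the deviation control (Lemma~\ref{lemma:variation-control}). First I would peel off the first three time steps, which contribute only $\mathcal{O}(1/(1-\gamma))$ since every value and threshold is bounded by $1/(1-\gamma)$, and apply Lemma~\ref{lemma:main-lemma} at $(s_t,a_t)$ to get, for $t \ge 4$,
\[
J^\ast - r(s_t,a_t) \le J^\ast - Q_t^t(s_t,a_t) + \gamma [PV_{t+1}^t](s_t,a_t) + 2\beta \|\bm\varphi(s_t,a_t)\|_{\Lambda_t^{-1}} + 2(m_{t-3}-m_t).
\]
Using that $a_t$ is greedy, so $Q_t^t(s_t,a_t) = \widetilde V_t^t(s_t)$, I would regroup the summand by adding and subtracting $(1-\gamma)V_{t+1}^t(s_{t+1})$ and $V_{t+1}^t(s_{t+1})$; the $\gamma$-weighted copies cancel, splitting the summed regret into the four pieces $(a)$–$(d)$ from the excerpt, plus the telescoping error $2\sum_t(m_{t-3}-m_t)$ and the $\mathcal{O}(1/(1-\gamma))$ start-up cost.

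Then I would bound each piece. For $(a)$, optimism gives $V_{t+1}^t(s_{t+1}) \ge V^\ast(s_{t+1})$, and Lemma~\ref{lemma:discounted-approximation}(ii) gives $J^\ast - (1-\gamma)V^\ast(s_{t+1}) \le (1-\gamma)\text{sp}(v^\ast)$, so $(a) \le T(1-\gamma)\text{sp}(v^\ast)$. For $(c)$, since $V_{t+1}^t$ is $\mathcal{F}_t$-measurable the summands form a martingale difference sequence with increments bounded by $\text{sp}(V_{t+1}^t) \le H = 2\,\text{sp}(v^\ast)$, so Azuma--Hoeffding yields $(c) = \widetilde{\mathcal{O}}(\text{sp}(v^\ast)\sqrt{T})$. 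For $(d)$, the standard self-normalized (elliptical potential) argument gives $\sum_t \|\bm\varphi(s_t,a_t)\|_{\Lambda_t^{-1}} = \widetilde{\mathcal{O}}(\sqrt{dT})$, so with $\beta = \widetilde{\mathcal{O}}(\text{sp}(v^\ast)d)$ from Lemma~\ref{lemma:concentration-regression} we obtain $(d) = \widetilde{\mathcal{O}}(\text{sp}(v^\ast)\sqrt{d^3 T})$. The error term telescopes and, since each $m_t$ lies in $[0,1/(1-\gamma)]$ and is non-increasing, contributes only $\mathcal{O}(1/(1-\gamma))$.

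The step I expect to be the main obstacle is $(b)$, since (unlike the prior work) the value-function sequence is regenerated every step under a fresh threshold, so $\widetilde V_{t+1}^t$ and $\widetilde V_{t+1}^{t+1}$ are no longer literally equal. Here I would follow the chain in the excerpt: rewrite $V_{t+1}^t(s_{t+1}) = \textsc{Clip}(\widetilde V_{t+1}^t(s_{t+1}); m_t, m_t+H)$, shift the clipping window from $m_t$ to $m_{t+1}$ at additive cost $m_t - m_{t+1}$, use $\widetilde V_{t+1}^t(s_{t+1}) \ge m_{t+1}$ (enforced by the threshold update on Line~\ref{algline:threshold}) to drop the upper clip, and finally invoke Lemma~\ref{lemma:variation-control} to replace $\widetilde V_{t+1}^t(s_{t+1})$ by $\widetilde V_{t+1}^{t+1}(s_{t+1})$ up to $m_{t-1}-m_{t+1}$. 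This leaves $\sum_t(\widetilde V_{t+1}^{t+1}(s_{t+1}) - \widetilde V_t^t(s_t)) + \sum_t(2m_{t-1}-2m_{t+1})$, both of which telescope to $\mathcal{O}(1/(1-\gamma))$.

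Collecting all bounds gives $R_T = \widetilde{\mathcal{O}}(T(1-\gamma)\text{sp}(v^\ast) + 1/(1-\gamma) + \text{sp}(v^\ast)\sqrt{T} + \text{sp}(v^\ast)\sqrt{d^3 T})$. Choosing $\gamma = 1 - \sqrt{1/T}$ balances the first two terms at $\sqrt{T}$ each, and the $\sqrt{d^3 T}$ term dominates, yielding the claimed $R_T = \widetilde{\mathcal{O}}(\text{sp}(v^\ast)\sqrt{d^3 T})$ after folding the logarithmic factors from $\beta$ and the covering/union bound into the $\widetilde{\mathcal{O}}(\cdot)$, which matches the stated $\mathcal{O}(\text{sp}(v^\ast)\sqrt{d^3 T \log(dT/\delta)\log T})$.
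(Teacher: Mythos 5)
Your proposal is correct and follows essentially the same route as the paper's own proof: the identical four-term decomposition after invoking Lemma~\ref{lemma:main-lemma} with $u=t$, the same treatment of terms $(a)$, $(c)$, $(d)$ via optimism, Azuma--Hoeffding, and the elliptical potential lemma, and the same clipping-window-shift argument plus Lemma~\ref{lemma:variation-control} for term $(b)$, with the $2\sum_t(m_{t-3}-m_t)$ error telescoping into $\mathcal{O}(1/(1-\gamma))$. The only negligible difference is that you bound the first three steps by $\mathcal{O}(1/(1-\gamma))$ where the paper uses the sharper $\mathcal{O}(1)$, which does not affect the final rate.
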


\subsection{Computational Complexity}

Our algorithm $\gamma$-LSCVI-UCB+ runs up to $T$ steps of value iteration every time step, resulting in $\mathcal{O}(T^2)$ value iteration steps.
This can be seen by the nested loop structure of the algorithm, where the outer loop is indexed by $t$ for the time step and the inner loop is indexed by $u$ for the value iteration step.
The computational bottleneck of the algorithm is computing $\widetilde{Q}_u^t(s, a)$ for all $a \in \mathcal{A}$ and all $s \in \{ s_1, \dots, s_{t - 1} \}$, which involves computing the regression coefficient $\widehat{\bm{w}}_t(V_{u + 1}^t)$.
Computing the regression coefficient takes $\mathcal{O}(T + d^2)$ operations.

In total, the computational complexity of our algorithm is $\mathcal{O}(T^3 d^2 A)$, which is polynomial in the problem parameters $T, d, A$ and is independent of the size of the state space.
Although our algorithm enjoys a polynomial-time computational complexity, it is super linear in $T$, just as the the OLSVI.FH algorithm \parencite{wei2021learning} and the previous work $\gamma$-LSCVI-UCB~\parencite{hong2025reinforcement}.
We leave further improving the computational complexity to be linear in $T$ as future work.

\section{Conclusion}

We propose an algorithm for infinite-horizon average-reward RL with linear MDPs that achieves a regret bound of $\widetilde{\mathcal{O}}(\text{sp}(v^\ast) \sqrt{d^3 T})$ and is computationally efficient.
Our algorithm uses a combination of techniques such as approximation by discounted setting, value function clipping for constraining its span, and deviation-controlled value iteration.
An interesting future direction is to improve the regret bound by a factor of $\sqrt{d}$ using variance-aware regression method.
Another future direction is to extend the techniques to the general function approximation setting.

\section*{References}
\printbibliography[heading=none]

\newpage
\appendix
\onecolumn
\section{Concentration Inequalities}

\begin{lemma}[Concentration of vector-valued self-normalized processes \parencite{abbasi2011improved}]
Let $\{\varepsilon_t\}_{t = 1}^\infty$ be a real-valued stochastic process with corresponding filtration $\{\mathcal{F}_t \}_{t = 0}^\infty$.
Let $\varepsilon_t | \mathcal{F}_{t - 1}$ be zero-mean and $\sigma$-subgaussian.
Let $\{\phi_t \}_{t = 0}^\infty$ be an $\mathbb{R}^d$-valued stochastic process where $\phi_t \in \mathcal{F}_{t - 1}$.
Assume $\Lambda_0$ is a $d \times d$ positive definite matrix, and let $\Lambda_t = \Lambda_0 + \sum_{s = 1}^t \phi_s \phi_s^T$.
Then for any $\delta > 0$, with probability at least $1 - \delta$, we have for all $t \geq 0$ that
$$
\left\Vert \sum_{s = 1}^t \phi_s \varepsilon_s \right\Vert_{\Lambda_t^{-1}}^2 \leq 2 \sigma^2 \log \left( \frac{\text{det}(\Lambda_t)^{1/2} \text{det}(\Lambda_0)^{-1/2}}{\delta} \right).
$$
\end{lemma}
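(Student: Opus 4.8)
The plan is to prove the bound by the \emph{method of mixtures} (pseudo-maximization), following the exponential-supermartingale argument of \textcite{abbasi2011improved}. Write $S_t = \sum_{s=1}^t \phi_s \varepsilon_s$, so the quantity to control is $\Vert S_t \Vert_{\Lambda_t^{-1}}^2$. The first step is to build, for each fixed direction $\lambda \in \mathbb{R}^d$, a scalar exponential supermartingale. Define
$$
D_t^\lambda = \exp\Bigl( \tfrac{1}{\sigma}\langle \lambda, S_t \rangle - \tfrac{1}{2}\lambda^\top (\Lambda_t - \Lambda_0)\lambda \Bigr).
$$
Because $\phi_s \in \mathcal{F}_{s-1}$, the factor $\langle \lambda, \phi_s\rangle$ is $\mathcal{F}_{s-1}$-measurable, and the conditional $\sigma$-subgaussianity of $\varepsilon_s$ gives $\mathbb{E}[\exp(\tfrac{1}{\sigma}\langle\lambda,\phi_s\rangle\varepsilon_s) \mid \mathcal{F}_{s-1}] \leq \exp(\tfrac12\langle\lambda,\phi_s\rangle^2)$. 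Since $\sum_{s\leq t}\langle\lambda,\phi_s\rangle^2 = \lambda^\top(\Lambda_t-\Lambda_0)\lambda$, each per-step factor of $D_t^\lambda$ has conditional expectation at most $1$, so $(D_t^\lambda)_{t\geq 0}$ is a nonnegative supermartingale with $D_0^\lambda = 1$ and $\mathbb{E}[D_t^\lambda] \leq 1$.

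The second step is to average over $\lambda$ against a Gaussian prior. Let $h$ be the $N(0,\Lambda_0^{-1})$ density, $h(\lambda) = (2\pi)^{-d/2}\det(\Lambda_0)^{1/2}\exp(-\tfrac12\lambda^\top\Lambda_0\lambda)$, and set $\bar D_t = \int_{\mathbb{R}^d} D_t^\lambda\, h(\lambda)\,d\lambda$. By Tonelli, a mixture of nonnegative supermartingales is again a nonnegative supermartingale, so $\bar D_t$ is one with $\bar D_0 = 1$. The key computation is to evaluate $\bar D_t$ in closed form: up to the normalizing constant the integrand is $\exp(\tfrac{1}{\sigma}\langle\lambda,S_t\rangle - \tfrac12\lambda^\top\Lambda_t\lambda)$, a Gaussian integral. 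Completing the square in $\lambda$ around $\Lambda_t^{-1}S_t/\sigma$ and integrating yields
$$
\bar D_t = \det(\Lambda_0)^{1/2}\det(\Lambda_t)^{-1/2}\exp\Bigl(\tfrac{1}{2\sigma^2}\Vert S_t\Vert_{\Lambda_t^{-1}}^2\Bigr).
$$

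The final step converts the supermartingale control into the stated uniform bound. Applying Ville's maximal inequality for nonnegative supermartingales, $\mathbb{P}(\sup_{t\geq 0}\bar D_t \geq 1/\delta) \leq \delta\,\mathbb{E}[\bar D_0] = \delta$. On the complementary event of probability at least $1-\delta$, we have $\bar D_t < 1/\delta$ for every $t\geq 0$ simultaneously; substituting the closed form, taking logarithms, and rearranging gives exactly $\Vert S_t\Vert_{\Lambda_t^{-1}}^2 \leq 2\sigma^2\log(\det(\Lambda_t)^{1/2}\det(\Lambda_0)^{-1/2}/\delta)$. I expect the main obstacle to be the \emph{uniformity over all $t\geq 0$}: for a single fixed $t$ a plain Markov inequality on $\bar D_t$ suffices, but the simultaneous statement requires a genuine maximal inequality. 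If one prefers to avoid invoking Ville's inequality directly, the alternative is the stopping-time argument of \textcite{abbasi2011improved} — define the stopping time $\tau$ at which the bound first fails, show $\bar D_{t\wedge\tau}$ is a supermartingale, and pass to the limit via Fatou's lemma to control $\mathbb{E}[\bar D_\tau \mathbf{1}\{\tau<\infty\}]$, which is where the measure-theoretic care is needed (the stopping time may be infinite, so optional stopping cannot be applied naively). A secondary technical point worth checking is the integrability and Tonelli justification for the mixture, which is routine given nonnegativity.
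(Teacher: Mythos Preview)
Your proof is correct and is exactly the method-of-mixtures argument from \textcite{abbasi2011improved}. Note, however, that the paper does not actually prove this lemma at all---it is stated as a cited result with no proof, so there is nothing to compare against beyond the original reference, which your sketch faithfully reproduces (including the stopping-time alternative to Ville's inequality).
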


\begin{lemma} \label{lemma:bound-weight-algorithm}
Let $\bm{w}$ be a ridge regression coefficient obtained by regressing $y \in [0, B]$ on $\bm{x} \in \mathbb{R}^d$ using the dataset $\{ (\bm{x}_i, y_i) \}_{i = 1}^n$ so that $\bm{w} = \Lambda^{-1} \sum_{i = 1}^n \bm{x}_i y_i$
where $\Lambda = \sum_{i = 1}^n \bm{x}\bm{x}^T + \lambda I$.
Then,
$$
\Vert \bm{w} \Vert_2 \leq B \sqrt{dn / \lambda}.
$$
\end{lemma}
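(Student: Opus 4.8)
The plan is to reduce the bound on $\Vert \bm{w} \Vert_2$ to a bound on $|\bm{v}^\top \bm{w}|$ for an arbitrary fixed vector $\bm{v} \in \mathbb{R}^d$, and then recover the Euclidean norm by taking $\bm{v} = \bm{w}$ at the very end. Substituting the closed form $\bm{w} = \Lambda^{-1} \sum_{i=1}^n \bm{x}_i y_i$, I would write $\bm{v}^\top \bm{w} = \sum_{i=1}^n (\bm{v}^\top \Lambda^{-1} \bm{x}_i) y_i$ and immediately use $|y_i| \le B$ with the triangle inequality to get $|\bm{v}^\top \bm{w}| \le B \sum_{i=1}^n |\bm{v}^\top \Lambda^{-1} \bm{x}_i|$.

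The core of the argument is two applications of Cauchy--Schwarz, the first in the inner product induced by $\Lambda^{-1}$ and the second over the index $i$. Writing $\bm{v}^\top \Lambda^{-1} \bm{x}_i = \langle \Lambda^{-1/2} \bm{v}, \Lambda^{-1/2} \bm{x}_i \rangle$, the first Cauchy--Schwarz gives $|\bm{v}^\top \Lambda^{-1} \bm{x}_i| \le \Vert \bm{v} \Vert_{\Lambda^{-1}} \Vert \bm{x}_i \Vert_{\Lambda^{-1}}$, so that $|\bm{v}^\top \bm{w}| \le B \Vert \bm{v} \Vert_{\Lambda^{-1}} \sum_{i=1}^n \Vert \bm{x}_i \Vert_{\Lambda^{-1}}$. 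The second Cauchy--Schwarz then yields $\sum_{i=1}^n \Vert \bm{x}_i \Vert_{\Lambda^{-1}} \le \sqrt{n \sum_{i=1}^n \Vert \bm{x}_i \Vert_{\Lambda^{-1}}^2}$.

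The factor $d$ in the final bound emerges from the trace identity
$$
\sum_{i=1}^n \Vert \bm{x}_i \Vert_{\Lambda^{-1}}^2 = \operatorname{tr}\!\Big( \Lambda^{-1} \textstyle\sum_{i=1}^n \bm{x}_i \bm{x}_i^\top \Big) = \operatorname{tr}(\Lambda^{-1}(\Lambda - \lambda I)) = d - \lambda \operatorname{tr}(\Lambda^{-1}) \le d,
$$
where I substitute $\sum_{i=1}^n \bm{x}_i \bm{x}_i^\top = \Lambda - \lambda I$ and drop the nonnegative term $\lambda \operatorname{tr}(\Lambda^{-1})$. Combining this with $\Vert \bm{v} \Vert_{\Lambda^{-1}} \le \Vert \bm{v} \Vert_2 / \sqrt{\lambda}$, which holds because $\Lambda \succeq \lambda I$ implies $\Lambda^{-1} \preceq \lambda^{-1} I$, gives $|\bm{v}^\top \bm{w}| \le B \sqrt{dn/\lambda}\, \Vert \bm{v} \Vert_2$. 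Setting $\bm{v} = \bm{w}$ and dividing by $\Vert \bm{w} \Vert_2$ finishes the proof.

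There is no genuine obstacle here; the only points requiring care are keeping the two Cauchy--Schwarz steps separate and recognizing that it is the trace computation, rather than any per-sample bound on $\Vert \bm{x}_i \Vert_2$, that supplies the dimension factor. I would note in passing that a slightly sharper route --- bounding $\sum_i (\bm{v}^\top \Lambda^{-1} \bm{x}_i)^2 = \bm{v}^\top \Lambda^{-1}(\Lambda - \lambda I)\Lambda^{-1}\bm{v} \le \Vert \bm{v}\Vert_2^2/\lambda$ directly --- would remove the $\sqrt{d}$, but the stated bound is already adequate for controlling the regression coefficients elsewhere in the analysis.
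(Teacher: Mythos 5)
Your proof is correct, and it is the standard argument for this lemma (the paper states it without proof, as it is essentially Lemma B.2 of \textcite{jin2020provably}): the two Cauchy--Schwarz applications, the trace identity $\sum_i \Vert \bm{x}_i \Vert_{\Lambda^{-1}}^2 = \operatorname{tr}(\Lambda^{-1}(\Lambda - \lambda I)) \leq d$, and the eigenvalue bound $\Lambda^{-1} \preceq \lambda^{-1} I$ are exactly the intended steps, and the duality trick of bounding $|\bm{v}^\top \bm{w}|$ and setting $\bm{v} = \bm{w}$ is sound. Your closing observation that the $\sqrt{d}$ factor can be removed by bounding $\sum_i (\bm{v}^\top \Lambda^{-1} \bm{x}_i)^2 = \bm{v}^\top \Lambda^{-1}(\Lambda - \lambda I)\Lambda^{-1} \bm{v} \leq \Vert \bm{v} \Vert_2^2 / \lambda$ directly is also correct, though unnecessary for the paper's purposes.
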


\begin{lemma} \label{lemma:pv}
Let $V : \mathcal{S} \rightarrow [-B, B]$ be a bounded function.
Then, $\bm{w}^\ast(V) = \int_\mathcal{S} V(s') d \bm\mu(s')$ which satisfies $[PV](s, a) = \langle \varphi(s, a), \bm{w}^\ast(V) \rangle$ for all $(s, a) \in \mathcal{S} \times \mathcal{A}$, satisfies
$$
\Vert \bm{w}^\ast(V) \Vert_2 \leq B\sqrt{d}.
$$
\end{lemma}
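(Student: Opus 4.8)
The plan is to reduce the claim to a coordinate-wise bound on the integral of a bounded function against each component measure $\mu_i$, and then invoke the boundedness condition \eqref{eqn:boundedness}. By definition the $i$-th coordinate of $\bm{w}^\ast(V)$ is $w_i^\ast(V) = \int_{\mathcal{S}} V(s')\, \mu_i(ds')$, and the stated identity $[PV](s,a) = \langle \bm\varphi(s,a), \bm{w}^\ast(V)\rangle$ is exactly the linearity computation already displayed in the text, so no new argument is required to establish it.

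First I would bound each coordinate. Since $|V(s')| \le B$ uniformly, for each $i \in [d]$ we have $|w_i^\ast(V)| = \vert \int_{\mathcal{S}} V\, d\mu_i \vert \le B\, |\mu_i|(\mathcal{S})$, where $|\mu_i|$ denotes the total variation of the (signed) measure $\mu_i$; this is the elementary fact that integrating a function bounded by $B$ against a measure yields a quantity bounded by $B$ times the total mass. Squaring and summing over $i$ then gives $\Vert \bm{w}^\ast(V) \Vert_2^2 = \sum_{i=1}^d (w_i^\ast(V))^2 \le B^2 \sum_{i=1}^d |\mu_i|(\mathcal{S})^2 = B^2 \Vert \bm\mu(\mathcal{S}) \Vert_2^2$. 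Applying the boundedness condition \eqref{eqn:boundedness}, which gives $\Vert \bm\mu(\mathcal{S}) \Vert_2 \le \sqrt{d}$, yields $\Vert \bm{w}^\ast(V) \Vert_2^2 \le B^2 d$, and taking square roots gives $\Vert \bm{w}^\ast(V) \Vert_2 \le B\sqrt{d}$ as claimed.

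The only point requiring care is the interpretation of $\Vert \bm\mu(\mathcal{S}) \Vert_2$ when the component measures $\mu_i$ are signed: the relevant quantity in the coordinate bound is the total variation $|\mu_i|(\mathcal{S})$ rather than the signed value $\mu_i(\mathcal{S})$, and I would make explicit, following the convention of \textcite{jin2020provably}, that the boundedness assumption is phrased in terms of these total-variation masses. An equivalent and perhaps cleaner route avoiding coordinates is the dual formulation: for an arbitrary unit vector $\bm{u}$, write $\langle \bm{u}, \bm{w}^\ast(V) \rangle = \int_{\mathcal{S}} V\, d\nu_{\bm u}$ with $\nu_{\bm u} = \sum_i u_i \mu_i$, bound this by $B\,|\nu_{\bm u}|(\mathcal{S})$, and take the supremum over $\bm{u}$; but the coordinate-wise argument is the most direct given how the boundedness condition is stated. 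Since every step is an elementary inequality, I do not anticipate a genuine obstacle, only the bookkeeping surrounding signed-measure total variation.
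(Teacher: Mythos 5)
Your proof is correct and follows essentially the same route as the paper: bound each coordinate $\vert \int_{\mathcal{S}} V \, d\mu_i \vert$ by $B$ times the mass of $\mu_i$, assemble these into $\Vert \bm{w}^\ast(V) \Vert_2 \leq B \Vert \bm\mu(\mathcal{S}) \Vert_2$, and invoke the boundedness condition \eqref{eqn:boundedness}. If anything, your bookkeeping is more careful than the paper's own one-line justification, which asserts the inequality ``since $\bm\mu$ is a vector of positive measures and $V(s') \geq 0$'' even though $V$ ranges in $[-B, B]$; your argument via $\vert V \vert \leq B$ (with total variation in the signed case) is exactly the right repair.
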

\begin{proof}
$$
\Vert \bm{w}^\ast(V) \Vert_2 = \left\Vert \int_\mathcal{S} V(s') d \bm\mu(s') \right\Vert_2
\leq B \left\Vert \int_\mathcal{S} d \bm\mu(s') \right\Vert_2 \leq B \sqrt{d}
$$
where the first inequality holds since $\bm\mu$ is a vector of positive measures and $V(s') \geq 0$.
The last inequality is by the boundedness assumption \eqref{eqn:boundedness} on $\bm\mu(\mathcal{S})$.
\end{proof}

\begin{lemma}[Adaptation of Lemma D.4 in \textcite{jin2020provably}]
\label{lemma:self-normalized-process-bound}
Let $\{ x_t \}_{t = 1}^\infty$ be a stochastic process on state space $\mathcal{S}$ with corresponding filtration $\{ \mathcal{F}_t \}_{t = 0}^\infty$.
Let $\{ \phi_t \}_{t = 0}^\infty$ be a $\mathbb{R}^d$-valued stochastic process where $\phi_t \in \mathcal{F}_{t - 1}$, and $\Vert \phi_t \Vert_2 \leq 1$.
Let $\Lambda_n = \lambda I + \sum_{t = 1}^n \phi_t \phi_t^T$.
Then for any $\delta > 0$ and any given function class $\mathcal{V}$, with probability at least $1 - \delta$, for all $n \geq 0$, and any $V \in \mathcal{V}$ satisfying $\text{sp}(V) \leq H$, we have
$$
\left\Vert \sum_{t = 1}^n \phi_t (V(x_t) - \mathbb{E}[V(x_t) | \mathcal{F}_{t - 1}]) \right\Vert_{\Lambda_n^{-1}}^2
\leq
4 H^2 \left[\frac{d}{2} \log\left(\frac{n + \lambda}{\lambda} \right) + \log \frac{\mathcal{N}_\varepsilon}{\delta}\right] + \frac{8 n^2 \varepsilon^2}{\lambda}
$$
where $\mathcal{N}_\varepsilon$ is the $\varepsilon$-covering number of $\mathcal{V}$ with respect to the distance $\text{dist}(V, V') = \sup_x \vert V(x) - V'(x) \vert$.
\end{lemma}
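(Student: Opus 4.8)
The plan is to combine the self-normalized martingale inequality (the first lemma of this appendix) for a \emph{fixed} function with a uniform covering argument over the class $\mathcal{V}$. First I would prove a concentration bound for a single deterministic $V$, and then lift it to all $V \in \mathcal{V}$ with $\text{sp}(V) \le H$ by an $\varepsilon$-net union bound. The one genuinely new ingredient compared to Lemma D.4 of \textcite{jin2020provably} is that the noise should be controlled by $\text{sp}(V)$ rather than $\sup_x \vert V(x) \vert$: the martingale increment $V(x_t) - \mathbb{E}[V(x_t) \mid \mathcal{F}_{t-1}]$ is unchanged when a constant is added to $V$, so its conditional sub-Gaussian parameter depends only on the span of $V$. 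This translation invariance is exactly what lets the final bound scale with $H$ (an $O(\text{sp}(v^\ast))$ quantity) instead of $\tfrac{1}{1-\gamma}$, and it is the reason the clipping machinery in the main text pays off statistically.

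For the fixed-function step, I would fix a deterministic $V$ with $\text{sp}(V) \le H'$ and set $\varepsilon_t = V(x_t) - \mathbb{E}[V(x_t) \mid \mathcal{F}_{t-1}]$, which is a martingale difference sequence adapted to $\{\mathcal{F}_t\}$. Since $V(x_t)$ takes values in an interval of length at most $\text{sp}(V) \le H'$, Hoeffding's lemma gives that $\varepsilon_t \mid \mathcal{F}_{t-1}$ is $(H'/2)$-sub-Gaussian. I would then invoke the self-normalized inequality with $\sigma = H'/2$ and $\Lambda_0 = \lambda I$, together with the determinant estimates $\det(\Lambda_0) = \lambda^d$ and $\det(\Lambda_n) \le (\lambda + n/d)^d$ (the latter from $\Vert \phi_t \Vert_2 \le 1$ and AM--GM), which yields with probability at least $1 - \delta'$, simultaneously for all $n \ge 0$, the bound $\big\Vert \sum_{t=1}^n \phi_t \varepsilon_t \big\Vert_{\Lambda_n^{-1}}^2 \le \tfrac{(H')^2}{2}\big[ \tfrac{d}{2}\log\tfrac{n+\lambda}{\lambda} + \log\tfrac{1}{\delta'} \big]$. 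I would stress that the self-normalized inequality is already uniform in $n$, so the claimed "for all $n \ge 0$" comes for free and no union bound over the horizon is needed.

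To lift this to the whole class I would take a minimal $\varepsilon$-cover $\mathcal{C}_\varepsilon$ of $\mathcal{V}$ in the metric $\text{dist}(V,V') = \sup_x \vert V(x) - V'(x) \vert$, of size $\mathcal{N}_\varepsilon$, apply the fixed-function bound to each net element with failure probability $\delta / \mathcal{N}_\varepsilon$, and union bound, replacing $\log\tfrac{1}{\delta'}$ by $\log\tfrac{\mathcal{N}_\varepsilon}{\delta}$. Given any $V \in \mathcal{V}$ with $\text{sp}(V) \le H$, I would pick $V' \in \mathcal{C}_\varepsilon$ with $\Vert V - V' \Vert_\infty \le \varepsilon$, so that $\text{sp}(V') \le H + 2\varepsilon \le 2H$ and the net bound applies with $H' = 2H$. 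Splitting $\sum_t \phi_t \varepsilon_t(V) = \sum_t \phi_t \varepsilon_t(V') + \sum_t \phi_t (\varepsilon_t(V) - \varepsilon_t(V'))$ and using $(a+b)^2 \le 2a^2 + 2b^2$ produces two terms: the first is at most $4H^2 \big[ \tfrac{d}{2}\log\tfrac{n+\lambda}{\lambda} + \log\tfrac{\mathcal{N}_\varepsilon}{\delta} \big]$ by the net estimate, and for the second I would use $\vert \varepsilon_t(V) - \varepsilon_t(V') \vert \le 2\Vert V - V' \Vert_\infty \le 2\varepsilon$ together with $\Lambda_n \succeq \lambda I$ (hence $\Vert \phi_t \Vert_{\Lambda_n^{-1}} \le 1/\sqrt{\lambda}$) to get $\big\Vert \sum_t \phi_t(\varepsilon_t(V)-\varepsilon_t(V')) \big\Vert_{\Lambda_n^{-1}} \le 2n\varepsilon/\sqrt{\lambda}$, whose square contributes $8n^2\varepsilon^2/\lambda$. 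Summing gives the stated bound.

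The hard part is not the algebra but the sub-Gaussian control: the crux is routing the concentration through $\text{sp}(V)$ rather than $\Vert V \Vert_\infty$, and correspondingly ensuring that the net elements $V'$ actually used inherit a span bound ($\le H + 2\varepsilon$) so that the sub-Gaussian parameter in the union bound stays $O(H)$ and does not blow up with the possibly large magnitude of functions in $\mathcal{V}$. The remaining constant bookkeeping that collapses the pieces into $4H^2$ and $8n^2\varepsilon^2/\lambda$ is routine; the only point to watch is that $\varepsilon$ is taken small enough (e.g. $\varepsilon \le H/2$, as it always is in the application) for $(H + 2\varepsilon)^2 \le 4H^2$ to hold.
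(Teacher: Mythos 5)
Your proposal is correct and takes essentially the same route as the paper, which states this lemma as an adaptation of Lemma D.4 of \textcite{jin2020provably} without writing out a proof: your combination of conditional sub-Gaussianity with parameter $\text{sp}(V)/2$ (Hoeffding's lemma), the self-normalized martingale inequality, and an $\varepsilon$-net union bound with the $8n^2\varepsilon^2/\lambda$ discretization error is precisely that adaptation, with the span replacing the sup-norm. The caveat you flag (needing $\varepsilon \le H/2$ so that $(H + 2\varepsilon)^2 \le 4H^2$) is harmless, since for $\varepsilon > H/2$ the claimed bound holds deterministically: $\bigl\Vert \sum_{t=1}^n \phi_t (V(x_t) - \mathbb{E}[V(x_t) \mid \mathcal{F}_{t-1}]) \bigr\Vert_{\Lambda_n^{-1}}^2 \le n^2 H^2 / \lambda \le 8 n^2 \varepsilon^2 / \lambda$.
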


\begin{lemma}[Adaptation of Lemma B.3 in \textcite{jin2020provably}] \label{lemma:concentration-inequality}
Under the linear MDP setting in Theorem~\ref{theorem:linear-mdp} for the $\gamma$-LSCVI-UCB algorithm with clipping oracle (Algorithm~\ref{alg:lscvi-ucb}), let $c_\beta$ be the constant in the definition of $\beta = c_\beta H d \sqrt{\log(dT / \delta)}$.
There exists an absolute constant $C$ that is independent of $c_\beta$ such that for any fixed $\delta \in (0, 1)$, the event $\mathcal{E}$ defined by
\begin{align*}
\forall u \in [T],~&t \in [T] : \\
&\hspace{2mm} \left\Vert \sum_{\tau = 1}^{t - 1} \bm\varphi(s_\tau, a_\tau) [V_u^t(s_{\tau + 1}) - [P V_u^t](s_\tau, a_\tau)] \right\Vert_{\Lambda_t^{-1}} \leq C \cdot H d \sqrt{\log((c_\beta + 1) d T / \delta)}
\end{align*}
satisfies $P(\mathcal{E}) \geq 1 - \delta$.
\end{lemma}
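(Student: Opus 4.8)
The plan is to follow the standard linear-MDP recipe: since each $V_u^t$ is data-dependent, I cannot apply the self-normalized bound (Lemma~\ref{lemma:self-normalized-process-bound}) to it directly, because the summands $\bm\varphi(s_\tau,a_\tau)[V_u^t(s_{\tau+1})-[PV_u^t](s_\tau,a_\tau)]$ fail to form a martingale difference sequence once $V_u^t$ is allowed to depend on $s_{\tau+1}$. Instead I would decouple the value function from the randomness by passing to a fixed $\epsilon$-net of a function class $\mathcal{V}$ that contains every $V_u^t$ the algorithm can produce, apply Lemma~\ref{lemma:self-normalized-process-bound} uniformly over that net, and then absorb the discretization error.

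First I would write down the parametric form of the base class. Reading off Line~\ref{alg-line:value-iteration-linear}, every $\widetilde{Q}_u^t$ has the form
$$
\widetilde{Q}(s,a) = \min\Big\{ r(s,a) + \gamma \langle \bm\varphi(s,a), \bm{w}\rangle + \gamma\beta\sqrt{\bm\varphi(s,a)^\top A\, \bm\varphi(s,a)},\ \tfrac{1}{1-\gamma}\Big\},
$$
where $\bm{w}$ is a regression coefficient of bounded norm (Lemma~\ref{lemma:bound-weight-algorithm} gives $\Vert\bm{w}\Vert_2 = \mathcal{O}(H\sqrt{dT/\lambda})$ once $V$ is centered by $V(s_1)$ as in the definition of $\widehat{P}_t$) and $A=\Lambda_t^{-1}$ is PSD with $\Vert A\Vert_2\le 1/\lambda$. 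Then $V_u^t$ is obtained from a constant number of such base functions ($\widetilde{Q}_u^t,\widetilde{Q}_u^{t-1},\widetilde{Q}_u^{t-2}$ together with the bounded scalar thresholds $m_{t-1},m_{t-2},m_t$) through the deviation-control clipping of Lines~\ref{algline:deviation-control-start}--\ref{algline:deviation-control-end}, the $\max_a$ operation, and the final clip of Line~\ref{algline:clipping-linear-efficient}. Each of these operations is $1$-Lipschitz in the sup norm, so a joint net over the parameters $(\bm{w},A)$ and the bounded scalars induces a net over $\mathcal{V}$.

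Next I would bound the covering number. Using $\Vert\bm\varphi\Vert_2\le 1$ and the elementary inequality $|\sqrt{\bm x^\top A\bm x}-\sqrt{\bm x^\top A'\bm x}|\le\sqrt{\Vert A-A'\Vert_F}$, the sup distance between two base functions is controlled by $\gamma\Vert\bm w-\bm w'\Vert_2 + \gamma\beta\sqrt{\Vert A-A'\Vert_F}$. A volumetric argument gives an $\epsilon$-net of the $\bm w$-ball of size $(1+C_1 H\sqrt{dT}/\epsilon)^d$ and of the matrix set of size $(1+C_2\beta/\epsilon)^{d^2}$, so $\log\mathcal{N}_\epsilon = \mathcal{O}(d^2\log(1+\beta H\sqrt{dT}/\epsilon))$; the constant number of component functions and bounded thresholds only multiplies this by a constant. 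Feeding $\log\mathcal{N}_\epsilon$ and $\mathrm{sp}(V)\le H$ into Lemma~\ref{lemma:self-normalized-process-bound} with $n\le T$, $\lambda=1$, and choosing $\epsilon = H/T$ so that the residual term $8n^2\epsilon^2/\lambda=\mathcal{O}(H^2)$, the dominant term is $4H^2\big[\tfrac{d}{2}\log(T+1)+\log(\mathcal{N}_\epsilon/\delta)\big]=\mathcal{O}(H^2 d^2\log((c_\beta+1)dT/\delta))$, and taking square roots yields the claimed $C\cdot Hd\sqrt{\log((c_\beta+1)dT/\delta)}$. Because Lemma~\ref{lemma:self-normalized-process-bound} already holds uniformly over all $n$ and all $V\in\mathcal V$, instantiating it with $\phi_\tau=\bm\varphi(s_\tau,a_\tau)$ and $x_\tau=s_{\tau+1}$ covers every pair $(t,u)$ at once, so no further union bound is needed. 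The factor $c_\beta+1$ enters only inside the logarithm, since the covering radius scales with $\beta=c_\beta Hd\sqrt{\log(dT/\delta)}$; hence the absolute constant $C$ can be taken independent of $c_\beta$.

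The main obstacle I anticipate is the covering-number step, specifically verifying that the deviation-control clipping (Lines~\ref{algline:deviation-control-start}--\ref{algline:deviation-control-end}) does not blow up the function class. One must check that composing the min/max with the shifted copies $\widetilde{Q}_u^{t-1}-m_{t-1}+m_t$ and $\widetilde{Q}_u^{t-2}-m_{t-2}+m_t$ is sup-norm Lipschitz jointly in the underlying $\widetilde{Q}$ functions and in the scalar shifts, so that covering the handful of base parameters suffices. The recursive dependence (each $\widetilde{Q}_u^t$ is built from $V_{u+1}^t$) is handled not by tracking the recursion but by covering the whole parametric family at once, which is why it is essential that every function the algorithm produces lies in the single class $\mathcal{V}$ of the fixed analytic form above.
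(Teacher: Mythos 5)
Your proposal is correct and follows essentially the same route as the paper's proof: bound the regression-weight norm (Lemma~\ref{lemma:bound-weight-algorithm}), cover the fixed parametric class containing every clipped value function $V_u^t$ (Lemma~\ref{lemma:covering-v}), and feed the resulting $\widetilde{\mathcal{O}}(d^2)$ log-covering number into the self-normalized process bound (Lemma~\ref{lemma:self-normalized-process-bound}) with a discretization radius chosen so the residual term is $\mathcal{O}(H^2)$, which is exactly why $c_\beta$ enters only inside the logarithm. The only cosmetic differences are your choice $\varepsilon = H/T$ versus the paper's $\varepsilon = Hd/t$, and that the paper applies the bound to each fixed pair $(u,t)$ with confidence $\delta/T^2$ and then union bounds over $[T]\times[T]$, whereas you invoke the lemma's built-in uniformity over $n$ and $V \in \mathcal{V}$; both are valid.
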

\begin{proof}
By Lemma~\ref{lemma:bound-weight-algorithm}, we have $\Vert \bm{w}_t \Vert_2 \leq H \sqrt{dt / \lambda}$ for all $t = 1, \dots, T$.
Hence, by combining Lemma~\ref{lemma:covering-v} and Lemma~\ref{lemma:self-normalized-process-bound}, for any $\varepsilon > 0$ and any fixed pair $(u, t) \in [T] \times [T]$, we have with probability at least $1 - \delta / T^2$ that
\begin{align*}
&\bigg\Vert \sum_{\tau = 1}^{t - 1} \bm\varphi(s_\tau, a_\tau) [V_u^t(s_{\tau + 1}) - [P V_u^t](s_\tau, a_\tau)] \bigg\Vert_{\Lambda_t^{-1}}^2 \\
&\hspace{5mm}\leq
4 H^2 \left[
\frac{2}{d} \log \left( \frac{t + \lambda}{\lambda} \right)
+ d \log \left(1 + \frac{4 H \sqrt{dt}}{\varepsilon \sqrt{\lambda}} \right)
+ d^2 \log \left( 1 + \frac{8 d^{1/2} \beta^2}{\varepsilon^2 \lambda} \right) + \log\left( \frac{T^2}{\delta} \right)
\right] + \frac{8 t^2 \varepsilon^2}{\lambda}.
\end{align*}
Using a union bound over $(u, t) \in [T] \times [T]$ and choosing $\varepsilon = H d / t$ and $\lambda = 1$, there exists an absolute constant $C > 0$ independent of $c_\beta$ such that, with probability at least $1 - \delta$,
$$
\bigg\Vert \sum_{\tau = 1}^{t - 1} \varphi(s_\tau, a_\tau) [V_u^t(s_{\tau + 1}) - [P V_u^t](s_\tau, a_\tau)] \bigg\Vert_{\Lambda_t^{-1}}^2
\leq
C^2 \cdot d^2 H^2 \log((c_\beta + 1) d T / \delta),
$$
which concludes the proof.
\end{proof}

\subsection{Proof of Lemma~\ref{lemma:concentration-regression}}
\begin{proof}[Proof of Lemma~\ref{lemma:concentration-regression}]
We prove under the event $\mathcal{E}$ defined in Lemma~\ref{lemma:concentration-inequality}.
Recall the definition
$$
[\widehat{P}_t V_u^t](s, a) = \langle \bm\varphi(s, a), \widehat{\bm{w}}_t(V_u^t - V_u^t(s_1)) \rangle + V_u^t(s_1)
$$
where $\widehat{\bm{w}}_t(V_u^t - V_u^t(s_1)) = \Lambda_t^{-1} \sum_{\tau = 1}^{t - 1} (V_u^t(s_{\tau + 1}) - V_u^t(s_1)) \cdot \bm\varphi(s_\tau, a_\tau)$.
For convenience, we introduce the notation $\bar{V}_u^k(s) = V_u^k(s) - V_u^k(s_1)$ and $\bm{w}_u^t = \widehat{\bm{w}}_t(\bar{V}_u^t)$.
With these notations, we have
$$
[\widehat{P}_t V_u^t](s, a) = \langle \bm\varphi(s, a), \bm{w}_u^t \rangle + V_u^t(s_1), \quad \bm{w}_u^t = \Lambda_t^{-1} \sum_{\tau = 1}^{t - 1} \bm\varphi(s_\tau, a_\tau) \bar{V}_u^k(s_{\tau + 1}).
$$
We can decompose $\langle \bm{\varphi}(s, a), \bm{w}_u^t \rangle$ as
\begin{align*}
\langle \bm{\varphi}(s, a), \bm{w}_u^t \rangle
&= \underbrace{\langle \bm{\varphi}(s, a), \Lambda_t^{-1} \sum_{\tau = 1}^{t - 1} \bm\varphi(s_\tau, a_\tau) [P\bar{V}_u^t](s_\tau, a_\tau) \rangle}_{(a)} \\
&\hspace{10mm}+ \underbrace{\langle \bm{\varphi}(s, a), \Lambda_t^{-1} \sum_{\tau = 1}^{t - 1} \bm\varphi(s_\tau, a_\tau) (\bar{V}_u^t(s_{\tau + 1}) - [P\bar{V}_u^t](s_\tau, a_\tau))}_{(b)}.
\end{align*}
Since $\bar{V}_u^t(s) \in [-H, H]$ for all $s \in \mathcal{S}$, it follows by Lemma~\ref{lemma:pv} that $\Vert \bm{w}^\ast(\bar{V}_u^t) \Vert_2 \leq H \sqrt{d}$.
Hence, the first term $(a)$ in the display above can be bounded as
\begin{align*}
\langle \bm{\varphi}(s, a), \Lambda_t^{-1} \sum_{\tau = 1}^{t - 1} \bm\varphi(s_\tau, a_\tau) [P\bar{V}_u^t](s_\tau, a_\tau) \rangle
&=
\langle \bm{\varphi}(s, a), \Lambda_t^{-1} \sum_{\tau = 1}^{t - 1} \bm\varphi(s_\tau, a_\tau) \bm\varphi(s_\tau, a_\tau)^T \bm{w}^\ast(\bar{V}_u^t) \rangle \\
&=
\langle \bm{\varphi}(s, a), \bm{w}^\ast(\bar{V}_u^t) \rangle - \lambda \langle \bm{\varphi}(s, a), \Lambda_t^{-1} \bm{w}^\ast(\bar{V}_u^t) \rangle \\
&\leq
\langle \bm{\varphi}(s, a), \bm{w}^\ast(\bar{V}_u^t) \rangle + \lambda \Vert \bm{\varphi}(s, a) \Vert_{\Lambda_t^{-1}} \Vert \bm{w}^\ast(\bar{V}_u^t) \Vert_{\Lambda_t^{-1}} \\
&\leq
\langle \bm{\varphi}(s, a), \bm{w}^\ast(\bar{V}_u^t) \rangle + H\sqrt{\lambda d} \Vert \bm{\varphi}(s, a) \Vert_{\Lambda_t^{-1}}
\end{align*}
where the first inequality is by Cauchy-Schwartz and the second inequality is by Lemma~\ref{lemma:pv}.
Under the event $\mathcal{E}$ defined in Lemma~\ref{lemma:concentration-inequality}, the second term $(b)$ can be bounded by
\begin{align*}
\langle \bm{\varphi}(s, a), \Lambda_t^{-1} \sum_{\tau = 1}^{t - 1} \bm\varphi(s_\tau, a_\tau) &(\bar{V}_u^t(s_{\tau + 1}) - [P\bar{V}_u^t](s_\tau, a_\tau)) \\
&\leq
\Vert \bm{\varphi}(s, a) \Vert_{\Lambda_t^{-1}}
\bigg\Vert \sum_{\tau = 1}^{t - 1} \bm\varphi(s_\tau, a_\tau) (V_u^t (s_{\tau + 1}) - [PV_u^t](s_\tau, a_\tau)) \bigg\Vert_{\Lambda_t^{-1}} \\
&\leq C \cdot H d \sqrt{\log((c_\beta + 1) dT / \delta)} \cdot \Vert \bm{\varphi}(s, a) \Vert_{\Lambda_t^{-1}}.
\end{align*}
Combining the two bounds and rearranging, we get
$$
\langle \bm{\phi}, \bm{w}_u^t - \bm{w}^\ast(\bar{V}_u^t) \rangle \leq C \cdot H d \sqrt{(\log(c_\beta + 1) d T / \delta)} \cdot \Vert \bm{\phi} \Vert_{\Lambda_t^{-1}}
$$
for some absolute constant $C$ independent of $c_\beta$.
Lower bound of $\langle \bm{\phi}, \bm{w}_u^t - \bm{w}^\ast(\bar{V}_u^t) \rangle$ can be shown similarly, establishing
$$
\vert \langle \bm{\phi}, \bm{w}_u^t - \bm{w}^\ast(\bar{V}_u^t) \rangle \vert \leq C \cdot H d \sqrt{\log((c_\beta + 1) d T / \delta)}\cdot \Vert \bm{\phi} \Vert_{\Lambda_t^{-1}}.
$$
Hence,
\begin{align*}
\vert [\widehat{P}_t V_u^t](s, a) - [P V_u^t](s, a) \vert
&= \vert \langle \bm\varphi(s, a), \widehat{\bm{w}}_t (V_u^t - V_u^t(s_1)) \rangle + V_u^t(s_1) - \langle \bm\varphi(s, a), \bm{w}^\ast (V_u^t) \rangle \\
&= \vert \langle \bm\varphi(s, a), \bm{w}_u^t - \bm{w}^\ast(\bar{V}_u^t) \rangle \vert \\
&\leq C \cdot H d \sqrt{\log((c_\beta + 1) d T / \delta)}\cdot \Vert \bm{\phi} \Vert_{\Lambda_t^{-1}}
\end{align*}
where the last equality uses the fact that $\bm{w}^\ast(V) = \int_\mathcal{S} V(s') \bm\mu(s')$ is linear.
It remains to show that there exists a choice of absolute constant $c_\beta$ such that
$$
C \sqrt{\log (c_\beta + 1) + \log(d T / \delta)} \leq c_\beta \sqrt{ \log(dT / \delta) }.
$$
Noting that $\log(d T / \delta) \geq \log 2$, this can be done by choosing an absolute constant $c_\beta$ that satisfies $C \sqrt{\log 2 + \log(c_\beta + 1)} \leq c_\beta \sqrt{\log 2}$.
\end{proof}

\begin{lemma} \label{lemma:clipping}
The clipping operation $\textsc{Clip}(x; L, U)$ has the following properties:
\begin{enumerate}[label=(\roman*)]
\item $\textsc{Clip}(x; L, U) = \textsc{Clip}(x - c; L - c, U - c) + c$. \label{item:clip-1}
\item $\textsc{Clip}(x; L, U) \leq \textsc{Clip}(y; L, U)$ if $x \leq y$. \label{item:clip-2}
\item $\textsc{Clip}(x; L, U) \leq x$ if and only if $x \geq L$. \label{item:clip-3}
\item $\textsc{Clip}(x; L, U) \geq \textsc{Clip}(x; L', U')$ if $L \geq L'$ and $U \geq U'$. \label{item:clip-4}
\end{enumerate}
\end{lemma}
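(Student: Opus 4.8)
The plan is to reduce every claim to the definition $\textsc{Clip}(x; L, U) = (x \vee L) \wedge U$ together with two elementary facts about $\max$ and $\min$: each is nondecreasing in each of its arguments, and each commutes with adding a scalar, i.e. $\max\{a, b\} - c = \max\{a - c, b - c\}$ and likewise for $\min$. Throughout I would use the standing assumption $L \leq U$, under which the operation is well-defined; I would flag explicitly that this assumption is actually invoked only in part~\ref{item:clip-3}.

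For part~\ref{item:clip-1} I would simply push the constant $c$ through both operations: $((x - c) \vee (L - c)) \wedge (U - c) = ((x \vee L) - c) \wedge (U - c) = ((x \vee L) \wedge U) - c$, and then add $c$ back to both sides. For part~\ref{item:clip-2}, monotonicity of $\vee$ gives $x \vee L \leq y \vee L$ whenever $x \leq y$, and applying the monotone map $t \mapsto t \wedge U$ preserves the inequality. Part~\ref{item:clip-4} is the same idea applied to both bounds simultaneously: from $L \geq L'$ I get $x \vee L \geq x \vee L'$, and combining this with $U \geq U'$ and the fact that $\min$ is nondecreasing in both arguments yields $(x \vee L) \wedge U \geq (x \vee L') \wedge U'$.

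The one part that is not a single monotonicity step is part~\ref{item:clip-3}, since it is an ``if and only if.'' For the easy direction, if $x \geq L$ then $x \vee L = x$, so $\textsc{Clip}(x; L, U) = x \wedge U \leq x$. For the converse I would argue by contraposition: if $x < L$ then $x \vee L = L$, so, using $L \leq U$, $\textsc{Clip}(x; L, U) = L \wedge U = L > x$, which contradicts $\textsc{Clip}(x; L, U) \leq x$. This contrapositive step---and in particular the point where $L \leq U$ is used to guarantee $L \wedge U = L$---is the only place requiring any care; everything else follows immediately from the monotonicity and translation invariance of $\max$ and $\min$.
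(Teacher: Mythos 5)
Your proof is correct and takes essentially the same approach as the paper, whose entire proof is the remark that the properties follow ``straight from the definition''; you have simply written out the elementary monotonicity and translation-invariance steps explicitly. Your observation that the ``only if'' direction of part~(iii) genuinely requires $L \leq U$ (since otherwise $\textsc{Clip}(x; L, U) = U$ could fall below $x < L$) is a point of rigor the paper leaves implicit.
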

\begin{proof}
The proofs are straight from the definition.
\end{proof}

\section{Deviation-Controlled Value Iteration}

\subsection{Positive Result for Tabular MDPs} \label{appendix:positive}

In this section, we show that the scheme used in the algorithm $\gamma$-LSCVI-UCB+ for controlling the deviation between chains of value functions with different clipping thresholds is not necessary in the tabular setting.

To reuse the notations developed for the linear setting, we treat the tabular setting with the size of the state space $S$ and the size of the action space $A$ as the $SA$-dimensional linear MDP setting where each pair $(s, a) \in \mathcal{S} \times \mathcal{A}$ is mapped to a one-hot encoded vector $\bm\varphi(s, a) = \bm{e}_{(s, a)} \in \mathbb{R}^{SA}$ where the entry associated to $(s, a)$ is equal to 1 and all other entries 0.
We show that under the tabular setting, Algorithm~\ref{alg:tabular} that removes the step for clipping $Q_u^t$ from $\gamma$-LSCVI-UCB+ successfully control the deviation of a chain of value functions from its previous chain.
Note that the algorithm uses the doubling-trick that updates the covariance matrix used for regression only when its determinant doubles.
The trick is used to facilitate the analysis of the difference $Q_u^t(s, a) - Q_u^{t + 1}(s, a)$ shown in the proof of the lemma below.

We use $\lambda = 0$ and treat $\Lambda_t^{-1}$ as the pseudoinverse of $\Lambda$, and set $\Vert \bm\varphi(s, a) \Vert_{\Lambda_t^{-1}} = \frac{1}{1 - \gamma}$ when $\Vert \bm\varphi(s, a) \Vert_{\Lambda_t^{-1}} = 0$, that is, when the direction $\bm\varphi(s, a)$ is never explored.
Then, as shown in the following lemma, the deviation between chains of value iterations is controlled even without the extra scheme used for the linear MDP setting.

\begin{lemma}
When running $\gamma$-LSCVI-UCB+ algorithm without deviation control under the tabular setting, for all $t \in [T]$, $u \in [t:T]$, we have
\begin{align*}
\vert \widetilde{V}_u^{t + 1}(s) - \widetilde{V}_u^{t}(s) \vert &\leq m_t - m_{t + 1} \\
\vert V_u^{t + 1}(s) - V_u^{t}(s) \vert &\leq m_t - m_{t + 1}
\end{align*}
for all $s \in \mathcal{S}$.
\end{lemma}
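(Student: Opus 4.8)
The plan is to prove both inequalities simultaneously by backward induction on the value-iteration index $u$, strengthening the statement to a one-sided monotone form and tracking the relation between the two chains. Two structural facts specific to the tabular setting drive the argument. First, with one-hot features, $\lambda = 0$, and the pseudoinverse convention, $\widehat{P}_t$ acts as a genuine conditional expectation: for every explored $(s,a)$ one has $[\widehat{P}_t V](s,a) = \sum_{s'} \widehat{P}_t(s' \mid s, a) V(s')$ with nonnegative weights summing to one, while for unexplored $(s,a)$ the iterate is capped at $\frac{1}{1-\gamma}$ regardless of input. Hence $\widehat{P}_t$ is both monotone and a non-expansion in $\Vert \cdot \Vert_\infty$, i.e. $\vert [\widehat{P}_t(V - V')](s,a) \vert \le \Vert V - V' \Vert_\infty$ --- this is exactly the property that fails for general linear features (Lemma~\ref{lemma:negative}), and is why the extra clipping of $Q_u^t$ can be dropped here. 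Second, $m_{t+1} = \widetilde{V}_{t+1}^t(s_{t+1}) \wedge m_t \le m_t$, so $\delta := m_t - m_{t+1} \ge 0$, and within an episode the doubling trick freezes $\widehat{P}$, the bonus $\beta \Vert \bm\varphi \Vert_{\Lambda^{-1}}$, and $\Lambda$, so the value iterations at steps $t$ and $t+1$ differ only through the clipping threshold.

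I would prove $0 \le \widetilde{V}_u^t(s) - \widetilde{V}_u^{t+1}(s) \le \gamma\delta$ and $0 \le V_u^t(s) - V_u^{t+1}(s) \le \delta$, inducting downward from $u = T+1$, where both chains equal $\frac{1}{1-\gamma}$ and the gaps vanish. For the step from $V_{u+1}$ to $\widetilde{V}_u$: since the reward and bonus terms are identical and $x \mapsto x \wedge \frac{1}{1-\gamma}$ is monotone and $1$-Lipschitz, the induction hypothesis $0 \le V_{u+1}^t - V_{u+1}^{t+1} \le \delta$ together with monotonicity and non-expansion of $\widehat{P}_t$ gives $0 \le \widetilde{Q}_u^t(s,a) - \widetilde{Q}_u^{t+1}(s,a) \le \gamma [\widehat{P}_t(V_{u+1}^t - V_{u+1}^{t+1})](s,a) \le \gamma\delta$; taking $\max_a$ (itself a monotone non-expansion) yields $0 \le \widetilde{V}_u^t(s) - \widetilde{V}_u^{t+1}(s) \le \gamma\delta \le \delta$, which is the first claimed inequality.

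The decisive step passes from $\widetilde{V}_u$ to the clipped $V_u$, whose windows $[m_t, m_t + H]$ and $[m_{t+1}, m_{t+1} + H]$ differ by the shift $\delta$. Writing $x = \widetilde{V}_u^t(s)$ and $\widetilde{V}_u^{t+1}(s) = x - \varepsilon$ with $\varepsilon \in [0, \gamma\delta]$ from the previous step, I would use the shift identity (part~\ref{item:clip-1} of Lemma~\ref{lemma:clipping}) to write $V_u^{t+1}(s) = \textsc{Clip}(x - \varepsilon + \delta; m_t, m_t + H) - \delta$, so that $V_u^t(s) - V_u^{t+1}(s) = f(x) - f(x + \delta - \varepsilon) + \delta$ with $f(\cdot) = \textsc{Clip}(\cdot; m_t, m_t + H)$. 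Since $\delta - \varepsilon \ge (1-\gamma)\delta \ge 0$ and $f$ is nondecreasing, $f(x) - f(x + \delta - \varepsilon) \le 0$, giving the upper bound $V_u^t(s) - V_u^{t+1}(s) \le \delta$; the lower bound is immediate from monotonicity of $\textsc{Clip}$ in argument and thresholds (parts~\ref{item:clip-2} and \ref{item:clip-4} of Lemma~\ref{lemma:clipping}) applied to $\widetilde{V}_u^{t+1} \le \widetilde{V}_u^t$ and $m_{t+1} \le m_t$. This closes the induction: the slack $\gamma\delta \le \delta$ produced by discounting is precisely what the downward shift of the clipping window can absorb.

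The main obstacle is the treatment of episode boundaries. The cancellation of reward and bonus terms and the freezing of $\widehat{P}$ require steps $t$ and $t+1$ to share the same regression covariance, which is exactly what the doubling trick supplies within an episode and why it is introduced. At the first step of a new episode $\widehat{P}$ is refreshed with newly collected data, so the clean cancellation breaks and this transition must be argued separately --- either by verifying that the refreshed empirical operator remains a monotone non-expansion on the relevant pair of value functions, or by absorbing the $\mathcal{O}(SA \log T)$ boundary steps into the surrounding analysis. I expect confirming that the bound survives these refreshes to be the most delicate part of the proof.
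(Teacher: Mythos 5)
Your proposal is correct and follows essentially the same route as the paper's proof: backward induction on $u$, using the fact that the tabular empirical operator $\widehat{P}_{t_k}$ is an expectation under a proper probability kernel (hence monotone and a non-expansion), the contraction of $\cdot \wedge \frac{1}{1-\gamma}$ and $\max_a$, and the shift/monotonicity properties of $\textsc{Clip}$; your one-sided monotone strengthening $0 \le V_u^t(s) - V_u^{t+1}(s) \le m_t - m_{t+1}$ (with the extra factor $\gamma$ on the unclipped functions) is a valid, slightly sharper variant of the paper's two-sided bound. As for your concern about episode boundaries: the paper's proof sidesteps this exactly as you suspect, by fixing $t$ so that $t$ and $t+1$ lie in the same episode $k$, so no separate boundary argument is given there either.
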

\begin{proof}
We introduce the notation $N_t(s, a) = \sum_{\tau = 1}^{t - 1} \mathbb{I}\{ s_\tau = s, a_\tau = a \}$ and $N_t(s, a, s') = \sum_{\tau = 1}^{t - 1} \mathbb{I}\{ s_\tau = s, a_\tau = a, s_{\tau + 1} = s' \}$, which is the visitation counts up to (excluding) time step $t$ of the state-action pair $(s, a)$ and state-action-state triplet $(s, a, s')$, respectively.
Note that in the tabular setting, we have $[\widehat{P}_t V](s, a) = \sum_{s': N_t(s, a, s') > 0} (N_t(s, a, s') / N_t(s, a)) V(s')$, which is the expectation of $V$ with respect to the empirical transition probability kernel $\widehat{P}_t$: $\widehat{P}_t(s' | s, a) = N_t(s, a, s') / N_t(s, a)$.
Hence, $\widehat{P}_t$ is linear such that $[\widehat{P}_t V_1](s, a) - [\widehat{P}_t V_2](s, a) = [\widehat{P}_t (V_1 - V_2)](s, a)$, and it satisfies $[\widehat{P}_t \Delta](s, a) \leq \Vert \Delta \Vert_\infty$ for any function $\Delta : \mathcal{S} \rightarrow \mathbb{R}$.
We exploit these facts to prove the lemma.

We show by induction on $u = T + 1, \dots, 1$.
Fix $t$ such that both $t$ and $t + 1$ are in the same episode $k$.
For the base case $u = T + 1$, we have $V_{T + 1}^{t + 1}(s) = V_{T + 1}^t(s) = \frac{1}{1 - \gamma}$ for all $s \in \mathcal{S}$, and trivially, we have $\vert V_{T + 1}^{t + 1}(s) - V_{T + 1}^t(s) \vert \leq m_t - m_{t + 1}$.
Now, suppose $\vert V_{u + 1}^{t + 1}(s) - V_{u + 1}^t(s) \vert \leq m_t - m_{t + 1}$ for all $s \in \mathcal{S}$ for some $u \in [T]$.
Then,
$$
\vert Q_u^t(s, a) - Q_u^{t + 1}(s, a) \vert
\leq \gamma([\widehat{P}_{t_k} V_{u + 1}^t](s, a) - [\widehat{P}_{t_k} V_{u + 1}^{t + 1}](s, a))  \leq m_t - m_{t + 1}
$$
where the first inequality is by the fact that $(\cdot \wedge \frac{1}{1 - \gamma})$ is a contraction and the second inequality is by the previous discussion on $\widehat{P}_{t_k}$ being a expectation with respect to a proper probability kernel in the tabular setting.
Since $\max_a Q(\cdot, a)$ is a contraction, it follows that $\vert \widetilde{V}_u^t(s) - \widetilde{V}_u^{t + 1}(s) \vert \leq m_t - m_{t + 1}$.
Hence, using the fact that $m_t \geq m_{t + 1}$, we have
\begin{align*}
V_u^t(s) - V_u^{t + 1}(s) 
&= \textsc{Clip}(\widetilde{V}_u^t(s); m_t, m_t + H) - \textsc{Clip}(\widetilde{V}_u^{t + 1}(s) ; m_{t + 1}, m_{t + 1} + H)  \\
&\leq \textsc{Clip}(\widetilde{V}_u^{t + 1}(s) + m_t - m_{t + 1} ; m_t, m_t + H) - \textsc{Clip}(\widetilde{V}_u^{t + 1}(s) ; m_{t + 1}, m_{t + 1} + H) \\
&= \textsc{Clip}(\widetilde{V}_u^{t + 1}(s); m_{t + 1}, m_{t + 1} + H) + m_t - m_{t + 1} - \textsc{Clip}(\widetilde{V}_u^{t + 1}(s) ; m_{t + 1}, m_{t + 1} + H) \\
&= m_t - m_{t + 1}
\end{align*}
where the second equality uses the property~\ref{item:clip-1} of the clipping operation.
Similarly, we have
\begin{align*}
V_u^t(s) - V_u^{t + 1}(s) 
&= \textsc{Clip}(\widetilde{V}_u^t(s); m_t, m_t + H) - \textsc{Clip}(\widetilde{V}_u^{t + 1}(s) ; m_{t + 1}, m_{t + 1} + H)  \\
&\geq \textsc{Clip}(\widetilde{V}_u^t(s); m_t, m_t + H) - \textsc{Clip}(\widetilde{V}_u^{t + 1}(s) ; m_t, m_t + H)  \\
&\geq \textsc{Clip}(\widetilde{V}_u^t(s); m_t, m_t + H) - \textsc{Clip}(\widetilde{V}_u^t(s) - m_t + m_{t + 1} ; m_t, m_t + H) \\
&= \textsc{Clip}(\widetilde{V}_u^t(s); m_t, m_t + H) - \textsc{Clip}(\widetilde{V}_u^t(s) ; m_{t + 1}, m_{t + 1} + H) - m_t + m_{t + 1} \\
&\geq - m_t + m_{t + 1}
\end{align*}
where the second equality uses the property~\ref{item:clip-1} of the clipping operation.
The two inequalities establish $\vert V_u^t(s) - V_u^{t + 1}(s) \vert \leq m_t - m_{t + 1}$ as desired.
By induction, the proof is complete.
\end{proof}



\begin{algorithm}[t]
\begin{algorithmic}[1]
\caption{$\gamma$-LSCVI-UCB+ without Deviation Control}
\label{alg:tabular}
\REQUIRE {Discounting factor $\gamma \in [0, 1)$, regularization constant $\lambda > 0$, span $H > 0$, bonus factor $\beta > 0$.}
\ENSURE{$k \gets 1$, $t_k \gets 1$, $\Lambda_1 \leftarrow \lambda I$, $m_1 \leftarrow \frac{1}{1 - \gamma}$.}

\STATE Receive state $s_1$.
\FOR{$t = 1, \dots, T$}
    \STATE $V_{T + 1}^{t}(\cdot) \leftarrow \frac{1}{1 - \gamma}$.
    \FOR{$u = T, T - 1, \dots, t$}
        \STATE $Q^{t}_u(\cdot, \cdot) \leftarrow \Big(r(\cdot, \cdot) + \gamma ([\widehat{P}_{t_k} V_{u + 1}^t](\cdot, \cdot) + \beta \Vert \bm\varphi(\cdot, \cdot) \Vert_{\Lambda_{t_k}^{-1}}) \Big) \wedge \frac{1}{1 - \gamma}$.
        \STATE $\widetilde{V}^t_u(\cdot) \leftarrow \max_a Q^t_u(\cdot, a)$.
        \STATE $V^t_{u}(\cdot) \leftarrow \textsc{Clip}( \widetilde{V}^t_u(\cdot); m_t, m_t + H)$.
    \ENDFOR
    \STATE Take action $a_t \gets \argmax_{a \in \mathcal{A}} Q_t^t(s_t, a)$. Receive reward $r(s_t, a_t)$. Receive next state $s_{t + 1}$.
    \STATE $\Lambda_{t + 1} \leftarrow \Lambda_t + \bm\varphi(s_t, a_t) \bm\varphi(s_t, a_t)^\top$.
    \STATE $m_{t + 1} \leftarrow \widetilde{V}_{t + 1}^t(s_{t + 1}) \wedge m_t$.
    \IF{$2 \det(\Lambda_{t_k}) < \det(\Lambda_{t + 1})$}
        \STATE $k \gets k + 1$, $t_k \gets t + 1$.
    \ENDIF
\ENDFOR
\end{algorithmic}
\end{algorithm}

\subsection{Negative Result for Linear MDPs}

\begin{proof}[Proof of Lemma~\ref{lemma:negative}]
For convenience, let $n = 2m$. If $n$ is odd, we can take $\bm\phi_n = \bm{0}$ and similar argument holds.
Take $\bm\phi_1, \dots \bm\phi_m = (\eta, 1/2, 0, \dots, 0)$ and $\bm\phi_{m + 1}, \dots, \bm\phi_{2m} = (\eta, -1 / 2, 0, \dots, 0)$ where $\eta > 0 $ is to be chosen later.
Take $y_1 = \dots = y_{2m} = \Delta$ and $\lambda = 1$.
Then, $\Lambda_n = \text{diag}(\eta^2 n, n / 4, 0, \dots, 0) + I$ and $\sum_{i = 1}^n y_i \bm\phi_i = (\eta \Delta n, 0, \dots, 0)$.
Hence, $\bm{w}_n = (\frac{\eta \Delta n}{\eta^2 n + 1}, 0, \dots, 0)$.
It follows that, choosing $\bm\phi = (1, 0, \dots, 0)$, we get
$$
\vert \langle \bm{w}_n, \bm\phi \rangle \vert = \frac{\eta \Delta n}{ \eta^2 n + 1}.
$$
Choosing $\eta = 1 / \sqrt{n}$, we get $\vert \langle \bm{w}_n , \bm\phi \rangle \vert = \frac{1}{2} \Delta \sqrt{n}$, which completes the proof.
\end{proof}

\subsection{Deviation-Controlled Value Iteration for Linear MDPs}

\begin{lemma} \label{lemma:value-function-closeness}
For all $t \in [T]$, $u \in [t:T]$, we have
\begin{align*}
\vert \widetilde{V}_u^{t + 1}(s) - \widetilde{V}_u^{t}(s) \vert &\leq m_{t - 1} - m_{t + 1} \\
\vert V_u^{t + 1}(s) - V_u^{t}(s) \vert &\leq m_{t - 1} - m_{t + 1}
\end{align*}
for all $s \in \mathcal{S}$.
\end{lemma}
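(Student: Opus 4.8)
The plan is to prove both inequalities simultaneously by a downward induction on the value-iteration index $u$ (from $u = T+1$ down to $u = t$), nested inside an induction on the time step $t$. First I would reduce the statement about $\widetilde{V}$ to a pointwise statement about the deviation-controlled $Q$-iterates. Since $\widetilde{V}_u^t(s) = \max_a Q_u^t(s,a)$ and the map $Q \mapsto \max_a Q(\cdot,a)$ is $1$-Lipschitz in the sup-norm, it suffices to establish
$$
\vert Q_u^{t+1}(s,a) - Q_u^t(s,a) \vert \le m_{t-1} - m_{t+1} \quad \text{for all } (s,a).
$$
The second inequality, about $V_u^t = \textsc{Clip}(\widetilde{V}_u^t; m_t, m_t+H)$, would then follow from the first by the same $\textsc{Clip}$ manipulations as in the tabular positive result: using the shift identity (Lemma~\ref{lemma:clipping}\ref{item:clip-1}) and monotonicity (Lemma~\ref{lemma:clipping}\ref{item:clip-2}) together with $m_{t+1} \le m_t \le m_{t-1}$, the bound on $\widetilde{V}_u^{t+1}-\widetilde{V}_u^t$ and the gap $m_t - m_{t+1}$ between the two clipping thresholds both convert into the bound $m_{t-1}-m_{t+1}$ on $V_u^{t+1} - V_u^t$.

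The core of the argument is a \emph{common-anchor} observation. The deviation-control step at time $t+1$ clips $\widetilde{Q}_u^{t+1}$ into an interval whose endpoints are built from $\widetilde{Q}_u^t$ and $\widetilde{Q}_u^{t-1}$, whereas the step at time $t$ uses $\widetilde{Q}_u^{t-1}$ and $\widetilde{Q}_u^{t-2}$; crucially, both intervals reference the common iterate $\widetilde{Q}_u^{t-1}$. Reading off only the endpoints that involve $\widetilde{Q}_u^{t-1}$, the upper bounds $U$ give $Q_u^{t+1} \le \widetilde{Q}_u^{t-1}$ and $Q_u^t \le \widetilde{Q}_u^{t-1}$, while the lower bounds $L$ give $Q_u^{t+1} \ge \widetilde{Q}_u^{t-1} - (m_{t-1}-m_{t+1})$ and $Q_u^t \ge \widetilde{Q}_u^{t-1} - (m_{t-1}-m_t)$. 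Hence both $Q_u^{t+1}$ and $Q_u^t$ lie in the band $[\widetilde{Q}_u^{t-1} - (m_{t-1}-m_{t+1}),\, \widetilde{Q}_u^{t-1}]$ of width $m_{t-1}-m_{t+1}$ (using $m_{t+1}\le m_t$), which is exactly the pointwise bound. This is the step that explains why the algorithm anchors to \emph{two} previous iterates rather than one: the double anchoring guarantees that consecutive time steps share a reference.

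The hard part will be the hypothesis hidden in the previous paragraph: the lower bounds $Q \ge L$ are valid only when the clipping interval is nonempty, i.e. when $L_u^{t}\le U_u^{t}$ (and likewise at $t+1$); if $L > U$ the clip collapses to $Q = U$ and the lower anchor is lost. Establishing $L_u^t \le U_u^t$ reduces to a closeness property between the raw iterates $\widetilde{Q}_u^{t-1}$ and $\widetilde{Q}_u^{t-2}$, and I expect this to be the main obstacle, precisely because the raw iterates come from the linear-MDP Bellman update, which — unlike the tabular case — is not a sup-norm contraction (Lemma~\ref{lemma:negative}): closeness of the clipped inputs $V_{u+1}^{\cdot}$ need not transfer to the raw outputs through the regression $\widehat{P}_t$, and the bonus $\beta\Vert\bm\varphi\Vert_{\Lambda_t^{-1}}$ attached to each raw iterate is not itself controlled by the $m$-increments.

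To overcome this, the plan is to carry interval-consistency (a suitable closeness of the raw iterates entering the clip) as an \emph{additional} invariant of the joint induction, rather than deriving it after the fact. I would close it using the concentration bound (Lemma~\ref{lemma:concentration-regression}) to replace each $\widehat{P}$-estimate by the true operator $P$, which \emph{is} a genuine contraction so that the inductive closeness of $V_{u+1}^{t+1}$ and $V_{u+1}^t$ propagates, together with the monotone decay of the bonus as $\Lambda_t$ grows and the monotonicity of $m_t$; a separate case analysis would handle the degenerate regime $L > U$, where $Q$ collapses onto the common anchor and the band bound must be recovered directly. Verifying that these ingredients genuinely make the clipping intervals consistent — so that the clean common-anchor computation above is licensed at every $(t,u)$ — is the crux of the proof.
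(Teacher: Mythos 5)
Your ``common-anchor'' computation is, almost word for word, the paper's entire proof: the paper directly chains $Q_u^{t+1}(s,a) \ge L_u^{t+1}(s,a) \ge \widetilde{Q}_u^{t-1}(s,a) - (m_{t-1}-m_{t+1})$ against $Q_u^{t}(s,a) \le U_u^{t}(s,a) \le \widetilde{Q}_u^{t-1}(s,a)$, and the mirrored pair for the other direction, then takes $\max_a$ and applies the same $\textsc{Clip}$ manipulations (properties (i) and (iv) of Lemma~\ref{lemma:clipping}) to pass from $\widetilde{V}$ to $V$. Notably, the paper's proof is a direct pointwise argument for each fixed $(t,u)$: there is no induction on $u$ or $t$, no concentration event, and no invariant about the clipping intervals. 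So the first half of your proposal is exactly the paper's argument.

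Where you diverge is in what you call the crux, and here the comparison cuts both ways. You are right as a matter of logic that with $\textsc{Clip}(x;L,U)=(x\vee L)\wedge U$ the inequality $\textsc{Clip}(x;L,U)\ge L$ is valid only when $L\le U$; the paper applies it unconditionally and never verifies that $[L_u^t, U_u^t]$ is nonempty, so you have identified a prerequisite that the paper's own proof silently steps over rather than one it discharges. However, your proposed repair would not close this gap. Nonemptiness $L_u^{t+1}\le U_u^{t+1}$ is equivalent to a deviation bound between consecutive \emph{raw} iterates, namely $\widetilde{Q}_u^{t-1}(s,a)-\widetilde{Q}_u^{t}(s,a) \le m_{t-1}-m_{t+1}$ (and its counterpart), and even on the event of Lemma~\ref{lemma:concentration-regression} the best available bound is of the form $\vert \widetilde{Q}_u^{t}(s,a)-\widetilde{Q}_u^{t-1}(s,a)\vert \le \gamma \Vert V_{u+1}^{t}-V_{u+1}^{t-1}\Vert_\infty + 2\gamma\beta\Vert\bm\varphi(s,a)\Vert_{\Lambda_{t-1}^{-1}}$: replacing $\widehat{P}$ by $P$ removes the regression amplification of Lemma~\ref{lemma:negative} but leaves an additive bonus-scale error $2\gamma\beta\Vert\bm\varphi(s,a)\Vert_{\Lambda_{t-1}^{-1}}$ that is in no way dominated by the increment $m_{t-1}-m_{t+1}$, which can be zero while the bonus term is of order $\beta$. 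Hence the interval-consistency invariant cannot be closed by your induction, with or without the monotone decay of the bonus. (A further mismatch: Lemma~\ref{lemma:value-function-closeness} is stated as a deterministic property of the algorithm's iterates, whereas any appeal to Lemma~\ref{lemma:concentration-regression} would make it hold only with high probability.) So, relative to the paper: your core argument is the paper's proof, but the additional step you correctly flag as necessary is absent from the paper and is not supplied by your sketch --- the degenerate regime $L>U$, in the sub-case $\widetilde{Q}_u^{t}(s,a) < \widetilde{Q}_u^{t-1}(s,a) - (m_{t-1}-m_{t+1})$ where $Q_u^{t+1}$ collapses onto $\widetilde{Q}_u^{t}$, remains unhandled by both your proposal and the paper's own argument.
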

\begin{proof}
We first show that $\widetilde{V}_u^{t + 1}(s) - \widetilde{V}_u^{t}(s) \geq - m_{t - 1} + m_{t + 1}$ and $V_u^{t + 1}(s) - V_u^{t}(s) \geq - m_{t - 1} + m_{t + 1}$.
By definitions of $Q_u^{t + 1}$ and $Q_u^{t}$, we have
\begin{align*}
Q_u^{t + 1}(s, a)
&= \textsc{Clip}(\widetilde{Q}_u^{t + 1}(s, a); L_u^{t + 1}(s, a), U_u^{t + 1}(s, a)) \\
&\geq L_u^{t + 1}(s, a) \\
&= (\widetilde{Q}_u^{t}(s, a) - m_t + m_{t + 1}) \vee (\widetilde{Q}_u^{t - 1}(s, a) - m_{t - 1} + m_{t + 1}) \\
&\geq \widetilde{Q}_u^{t - 1}(s, a) - m_{t - 1} + m_{t + 1},
\end{align*}
and
\begin{align*}
Q_u^{t}(s, a)
&= \textsc{Clip}(\widetilde{Q}_u^{t}(s, a); L_u^t(s, a), U_u^{t}(s, a)) \\
&\leq U_u^{t}(s, a) \\
&= \widetilde{Q}_u^{t - 1}(s, a) \wedge \widetilde{Q}_u^{t - 2}(s, a) \\
&\leq \widetilde{Q}_u^{t - 1}(s, a).
\end{align*}
Chaining the two inequalities, we get $Q_u^{t + 1}(s, a) \geq Q_u^{t}(s, a) - m_{t - 1} + m_{t + 1}$.
It follows that
\begin{align*}
\widetilde{V}_u^{t + 1}(s)
&= \max_a Q_u^{t + 1}(s, a) \\
&\geq \max_a Q_u^{t}(s, a) - m_{t - 1} + m_{t + 1} \\
&= \widetilde{V}_u^{t}(s) - m_{t - 1} + m_{t + 1},
\end{align*}
which shows the first claim.
Hence,
\begin{align*}
V_u^{t + 1}(s) &= \textsc{Clip}(\widetilde{V}_u^{t + 1}(s); m_{t + 1}, m_{t + 1} + H) \\
&\geq \textsc{Clip}(\widetilde{V}_u^{t}(s) - m_{t - 1} + m_{t + 1}; m_{t + 1}, m_{t + 1} + H) \\
&= \textsc{Clip}(\widetilde{V}_u^{t}(s); m_{t - 1}, m_{t - 1} + H) - m_{t - 1} + m_{t + 1} \\
&\geq \textsc{Clip}(\widetilde{V}_u^{t}(s); m_{t + 1}, m_{t + 1} + H) - m_{t - 1} + m_{t + 1} \\
&= V_u^{t}(s) - m_{t - 1} + m_{t + 1},
\end{align*}
where the second equality is by Property~\ref{item:clip-1} of the clipping operation and the second inequality is by Property~\ref{item:clip-4} of the clipping operation and the fact that $m_{t - 1} \geq m_{t + 1}$.
This shows the second claim.

Now, we show that $\widetilde{V}_u^{t + 1}(s) - \widetilde{V}_u^{t}(s) \leq m_{t - 1} - m_{t + 1}$ and $V_u^{t + 1}(s) - V_u^{t}(s) \leq m_{t - 1} - m_{t + 1}$.
By definitions of $Q_u^{t + 1}$ and $Q_u^{t}$, we have
\begin{align*}
Q_u^{t + 1}(s, a)
&= \textsc{Clip}(\widetilde{Q}_u^{t + 1}(s, a); L_u^{t + 1}(s, a), U_u^{t + 1}(s, a)) \\
&\leq U_u^{t + 1}(s, a) \\
&= \widetilde{Q}_u^{t}(s, a) \wedge \widetilde{Q}_u^{t - 1}(s, a) \\
&\leq \widetilde{Q}_u^{t - 1}(s, a),
\end{align*}
and
\begin{align*}
Q_u^{t}(s, a)
&= \textsc{Clip}(\widetilde{Q}_u^{t}(s, a); L_u^t(s, a), U_u^{t}(s, a)) \\
&\geq L_u^{t}(s, a) \\
&= (\widetilde{Q}_u^{t - 1}(s, a) - m_t + m_{t + 1}) \vee (\widetilde{Q}_u^{t - 2}(s, a) - m_{t - 1} + m_{t + 1}) \\
&\geq \widetilde{Q}_u^{t - 1}(s, a) - m_t + m_{t + 1} \\
&\geq \widetilde{Q}_u^{t - 1}(s, a) - m_{t - 1} + m_{t + 1}.
\end{align*}
Chaining the two inequalities, we get
$Q_u^{t + 1}(s, a) \leq Q_u^t(s, a) + m_{t - 1} - m_{t + 1}$, and it follows that
\begin{align*}
\widetilde{V}_u^{t + 1} &= \max_a Q_u^{t + 1}(s, a) \\
&\leq \max_a Q_u^t(s, a) + m_{t - 1} - m_{t + 1} \\
&= \widetilde{V}_u^t(s) + m_{t - 1} - m_{t + 1},
\end{align*}
which shows the first claim. Hence,
\begin{align*}
V_u^{t + 1}(s) &= \textsc{Clip}(\widetilde{V}_u^{t + 1}(s); m_{t + 1}, m_{t + 1} + H) \\
&\leq \textsc{Clip}(\widetilde{V}_u^t(s) + m_t - m_{t + 1} ; m_{t + 1}, m_{t + 1} + H) \\
&\leq \textsc{Clip}(\widetilde{V}_u^t(s) + m_t - m_{t + 1} ; m_t, m_t + H) \\
&= \textsc{Clip}(\widetilde{V}_u^t(s) ; m_{t + 1}, m_{t + 1} + H) + m_t - m_{t + 1} \\
&\leq \textsc{Clip}(\widetilde{V}_u^t(s) ; m_t, m_t + H) + m_t - m_{t + 1} \\
&= V_u^t(s) + m_t - m_{t + 1} \\
&\leq V_u^t(s) + m_{t - 1} - m_{t + 1}.
\end{align*}
\end{proof}

\section{Regret Analysis} \label{appendix:regret-analysis}

We first prove the optimism result that says the value function estimates are optimistic estimates of the true value function.

\subsection{Proof of Lemma~\ref{lemma:optimism}}
\begin{proof}[Proof of Lemma~\ref{lemma:optimism}]
We prove under the event $\mathcal{E}$ defined in Lemma~\ref{lemma:concentration-inequality}, which holds with probability at least $1 - \delta$.
We prove by induction on $t$ and $u$.

Suppose $V_u^\tau(s) \geq V^\ast(s)$, $\widetilde{V}_u^\tau(s) \geq V^\ast(s)$ and $\widetilde{Q}_u^\tau(s, a) \geq Q^\ast(s, a)$ hold for all $\tau = 1, \dots, t - 1$ and $u \in [\tau:T]$ and $(s, a) \in \mathcal{S} \times \mathcal{A}$.
If we show that $V_u^t(s) \geq V^\ast(s)$, $\widetilde{V}_u^t(s)$ and $\widetilde{Q}_u^t(s, a) \geq Q^\ast(s, a)$ for all $u \in [t:T]$ and $(s, a) \in \mathcal{S} \times \mathcal{A}$, the proof is complete by induction on $t$.
We show this by induction on $u = T + 1, T, \dots, t$.

The base case $u = T + 1$ holds since $V_{T + 1}^t(s) = \frac{1}{1 - \gamma} \geq V^\ast(s)$ for all $s \in \mathcal{S}$.
Now, suppose $V_{u + 1}^t(s) \geq V^\ast(s)$ for all $s \in \mathcal{S}$ for some $u \in [t + 1:T]$.
Then,
\begin{align*}
\widetilde{Q}_u^t(s, a)
&= (r(s, a) + \gamma ([\widehat{P}_t V_{u + 1}^t](s, a) + \beta \Vert \bm\varphi(s, a) \Vert_{\Lambda_t^{-1}}) \wedge \frac{1}{1 - \gamma} \\
&\geq (r(s, a) + \gamma [PV_{u + 1}^t](s, a)) \wedge \frac{1}{1 - \gamma} \\
&\geq (r(s, a) + \gamma [PV^\ast](s, a)) \wedge \frac{1}{1 - \gamma} \\
&= Q^\ast(s, a) \wedge \frac{1}{1 - \gamma} \\
&= Q^\ast(s, a)
\end{align*}
where the first inequality is by the event $\mathcal{E}$, the second inequality by the induction hypothesis.
The second equality is by the Bellman optimality equation.
This shows $\widetilde{Q}_u^t(s, a) \geq Q^\ast(s, a)$ for all $(s, a) \in \mathcal{S} \times \mathcal{A}$ as desired.
Additionally,
\begin{align*}
Q_u^t(s, a) &= \textsc{Clip}(\widetilde{Q}_u^t(s, a) ; L_u^t(s, a), U_u^t(s, a)) \\
&\geq \textsc{Clip}(Q^\ast(s, a) ; L_u^t(s, a), U_u^t(s, a)) \\
&\geq Q^\ast(s, a) \wedge U_u^t(s, a) \\
&= Q^\ast(s, a) \wedge (\widetilde{Q}_u^{t - 1}(s, a) \wedge \widetilde{Q}_u^{t - 2}(s, a)) \\
&\geq Q^\ast(s, a)
\end{align*}
where the second inequality is by the clipping property~\ref{item:clip-2}, and the last inequality holds by induction hypothesis.
It follows that
\begin{align*}
\widetilde{V}_u^t(s) = \max_a Q_u^t(s, a)
\geq \max_a Q^\ast(s, a)
= V^\ast(s).
\end{align*}
Note that by induction hypothesis, $\widetilde{V}_u^\tau(s) \geq V^\ast(s)$ for all $\tau \in [t - 1]$, $u \in [\tau:T]$ and $s \in \mathcal{S}$.
Hence, $m_t = \min\{ \widetilde{V}_t^{t - 1}(s_t), \widetilde{V}_{t - 1}^{t - 2}(s_{t - 1}), \dots, \widetilde{V}_2^1(s_2) \geq \min \{ V^\ast(s_t), V^\ast(s_{t - 1}), \dots, V^\ast(s_2), \frac{1}{1 - \gamma} \} \geq \min_{s \in \mathcal{S}} V^\ast(s)$.
It follows that
\begin{align*}
V_u^t(s) &= \textsc{Clip}(\widetilde{V}_u^t(s) ; m_t, m_t + H) \\
&\geq \textsc{Clip}(V^\ast(s) ; m_t, m_t + H) \\
&\geq \textsc{Clip}(V^\ast(s) ; \min_{s' \in \mathcal{S}} V^\ast(s'), \min_{s' \in \mathcal{S}} V^\ast(s') + H) \\
&\geq V^\ast(s)
\end{align*}
where the last inequality uses the fact that $H \geq 2 \cdot \text{sp}(v^\ast)$ is chosen such that $\text{sp}(V^\ast) \leq H$.
We have shown that if $V_{u + 1}^t(s) \geq V^\ast(s)$ holds for all $s \in \mathcal{S}$, then $V_u^t(s) \geq V^\ast(s)$, $\widetilde{V}_u^t(s)$ and $\widetilde{Q}_u^t(s, a)$ hold for all $(s, a) \in \mathcal{S} \times \mathcal{A}$.
By induction on $u = T, \dots, 1$, it follows that $V_u^t(s) \geq V^\ast(s)$, $\widetilde{V}_u^t(s) \geq V^\ast(s)$ and $\widetilde{Q}_u^t(s, a) \geq Q^\ast(s, a)$ hold for all $(s, a) \in\mathcal{S} \times \mathcal{A}$.
The proof is complete by induction on $t$.
\end{proof}

Now, we show an upper bound of the action value function estimate, which is a direct consequence of the concentration inequality in Lemma~\ref{lemma:concentration-inequality}.

\subsection{Proof of Lemma~\ref{lemma:main-lemma}}
\begin{proof}[Proof of Lemma~\ref{lemma:main-lemma}]
We prove under the event $\mathcal{E}$ defined in Lemma~\ref{lemma:concentration-inequality}, which holds with probability at least $1 - \delta$.
Fix any $t \in [T]$ and $u \in [t:T]$.
By event $\mathcal{E}$, we have
\begin{align*}
\widetilde{Q}_u^{t}(s, a)
&= \left(r(s, a) + \gamma( [\widehat{P}_t V_{u + 1}^{t}](s, a) + \beta \Vert \bm\varphi(\cdot, \cdot) \Vert_{\Lambda_t^{-1}}\right) \wedge \frac{1}{1 - \gamma} \\
&\leq r(s, a) + \gamma [PV_{u + 1}^{t}](s, a) + 2 \beta \Vert \bm\varphi(s, a) \Vert_{\Lambda_t^{-1}}
\end{align*}
for all $t \in [T]$.
Hence, by Lemma~\ref{lemma:value-function-closeness}, we have for $t \geq 4$ that
\begin{align*}
\widetilde{Q}_u^{t - 2}(s, a)
&\leq r(s, a) + \gamma [PV_{u + 1}^{t - 2}](s, a) + 2 \beta \Vert \bm\varphi(s, a) \Vert_{\Lambda_t^{-1}} \\
&\leq r(s, a) + \gamma [P(V_{u + 1}^{t})](s, a) + 2 \beta \Vert \bm\varphi(s, a) \Vert_{\Lambda_t^{-1}} + m_{t - 3} - m_{t - 1} + m_{t - 2} - m_t \\
&\leq r(s, a) + \gamma [PV_{u + 1}^{t}](s, a) + 2 \beta \Vert \bm\varphi(s, a) \Vert_{\Lambda_t^{-1}} + 2(m_{t - 3}  - m_t).
\end{align*}
Therefore, for $t \geq 4$, we have
\begin{align*}
Q_u^{t}(s, a)
&= \textsc{Clip}(\widetilde{Q}_u^{t}(s, a); L_u^{t}(s, a), U_u^{t}(s, a)) \\
&\leq U_u^{t}(s, a) \\
&= \widetilde{Q}_u^{t - 1}(s, a) \wedge \widetilde{Q}_u^{t - 2}(s, a) \\
&\leq r(s, a) + \gamma [PV_{u + 1}^{t}](s, a) + 2 \beta \Vert \bm\varphi(s, a) \Vert_{\Lambda_t^{-1}} + 2(m_{t - 3} - m_t)
\end{align*}
\end{proof}

Finally, the following lemma will be used for bounding the sum of the bonus terms.

\begin{lemma}[Lemma 11 in \textcite{abbasi2011improved}] \label{lemma:bonus-term-linear}
Let $\{ \bm\phi_t \}_{t \geq 1}$ be a bounded sequence in $\mathbb{R}^d$ with $\Vert \bm\phi_t \Vert_2 \leq 1$ for all $t \geq 1$.
Let $\Lambda_0 = I$ and $\Lambda_t = \sum_{i = 1}^t \bm\phi_i \bm\phi_i^T + I$ for $t \geq 1$. Then,
$$
\sum_{i = 1}^t \bm\phi_i^T \Lambda_{i - 1}^{-1} \bm\phi_i \leq 2 \log \det(\Lambda_t) \leq 2 d \log(1 + t).
$$
\end{lemma}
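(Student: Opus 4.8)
The plan is to prove this standard elliptical-potential bound in two independent parts. For the first inequality, the key tool is the rank-one determinant update: since $\Lambda_i = \Lambda_{i-1} + \bm\phi_i \bm\phi_i^\top$, the matrix determinant lemma gives $\det(\Lambda_i) = \det(\Lambda_{i-1})\,(1 + \bm\phi_i^\top \Lambda_{i-1}^{-1} \bm\phi_i)$. Writing $w_i = \bm\phi_i^\top \Lambda_{i-1}^{-1} \bm\phi_i$, taking logarithms and telescoping over $i$ yields $\log\det(\Lambda_t) - \log\det(\Lambda_0) = \sum_{i=1}^t \log(1 + w_i)$.

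Next I would control each term $w_i$. Because $\Lambda_{i-1} = I + \sum_{j < i} \bm\phi_j \bm\phi_j^\top \succeq I$, we have $\Lambda_{i-1}^{-1} \preceq I$, and therefore $w_i \leq \Vert \bm\phi_i \Vert_2^2 \leq 1$. The elementary inequality $x \leq 2\log(1+x)$, valid for $x \in [0,1]$ (the function $2\log(1+x) - x$ has derivative $(1-x)/(1+x) \geq 0$ on $[0,1]$ and vanishes at $x = 0$, so it is nonnegative there), then gives $w_i \leq 2\log(1 + w_i)$. Summing over $i$ and using $\log\det(\Lambda_0) = \log\det(I) = 0$ produces $\sum_{i=1}^t w_i \leq 2\sum_{i=1}^t \log(1+w_i) = 2\log\det(\Lambda_t)$, which is the first claimed bound.

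For the second inequality, I would pass to the eigenvalues $\lambda_1, \dots, \lambda_d \geq 0$ of $\Lambda_t$. The trace bound $\mathrm{tr}(\Lambda_t) = d + \sum_{i=1}^t \Vert \bm\phi_i \Vert_2^2 \leq d + t$ together with the AM--GM inequality applied to the eigenvalues gives $\det(\Lambda_t) = \prod_{j=1}^d \lambda_j \leq (\mathrm{tr}(\Lambda_t)/d)^d \leq (1 + t/d)^d$. Taking logarithms and using $t/d \leq t$ for $d \geq 1$ yields $\log\det(\Lambda_t) \leq d\log(1 + t/d) \leq d\log(1+t)$, and multiplying by $2$ completes the proof.

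The argument is entirely elementary and I expect no serious obstacle; the only point requiring care is establishing $w_i \in [0,1]$ so that the scalar inequality $x \leq 2\log(1+x)$ is applicable, which is exactly where the regularization choice $\Lambda_0 = I$ (ensuring $\Lambda_{i-1} \succeq I$) is used.
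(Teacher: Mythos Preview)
Your proof is correct and is exactly the standard argument for the elliptical-potential lemma. Note that the paper itself does not prove this statement; it simply cites it as Lemma~11 of \textcite{abbasi2011improved}, whose original proof proceeds along the same lines you outline (matrix determinant lemma, the scalar bound $x \leq 2\log(1+x)$ on $[0,1]$, and the trace/AM--GM bound on $\det(\Lambda_t)$).
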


\subsection{Proof of Main Theorem}

Now, we are ready to prove the main theorem.

\begin{proof}[Proof of Theorem~\ref{theorem:linear-mdp}]
We prove under the event $\mathcal{E}$ defined in Lemma~\ref{lemma:concentration-inequality}, which occurs with probability at least $1 - \delta$.
By Lemma~\ref{lemma:main-lemma}, we have for $t \geq 4$,
$$
Q_u^{t}(s, a) \leq r(s, a) + \gamma [PV_{u + 1}^{t}](s, a) + 2 \beta \Vert \bm\varphi(s, a) \Vert_{\Lambda_t^{-1}} + 2(m_{t - 3} - m_t).
$$
Plugging in $u \leftarrow t$, $s \leftarrow s_t$, $a \leftarrow a_t$, we get
\begin{align*}
R_T
&= \sum_{t = 1}^T (J^\ast - r(s_t, a_t)) \\
&\leq \sum_{t = 4}^T (J^\ast - Q^{t}_t(s_t, a_t) + \gamma [PV^{t}_{t + 1}](s_t, a_t) + 2\beta \Vert \bm\varphi(s_t, a_t) \Vert_{\Lambda_t^{-1}} + 2(m_{t - 3} - m_t)) + \mathcal{O}(1) \\
&=
\underbrace{\sum_{t = 4}^T (J^\ast - (1 - \gamma) V^{t}_{t + 1}(s_{t + 1}))}_{(a)} +
\underbrace{\sum_{t = 4}^T (V^{t}_{t + 1}(s_{t + 1}) - Q^{t}_t(s_t, a_t))}_{(b)} \\
&\hspace{20mm}+ \gamma \underbrace{\sum_{t = 4}^T ([PV^{t}_{t + 1}](s_t, a_t) - V^{t}_{t + 1}(s_{t + 1}))}_{(c)}
+ 2 \underbrace{\beta \sum_{t = 4}^T \Vert \bm\varphi(s_t, a_t) \Vert_{\Lambda_{t}^{-1}}}_{(d)} + \mathcal{O}(\frac{1}{1 - \gamma}).
\end{align*}

\paragraph{Bounding (a)}
By the optimism result (Lemma~\ref{lemma:optimism}), we have $V_u^t(s) \geq V^\ast(s)$ for all $t \in [T]$ and $u \in [t:T]$ with high probability.
It follows that
\begin{align*}
J^\ast - (1 - \gamma) V_{t + 1}^t(s_{t + 1})
&\leq J^\ast - (1 - \gamma) V^\ast(s_{t + 1}) \\
&\leq (1 - \gamma) \text{sp}(v^\ast)
\end{align*}
where the last inequality is by the bound on the error of approximating the average-reward setting by the discounted setting provided in Lemma~\ref{lemma:discounted-approximation}.
Hence, the term $(a)$ can be bounded by $T(1 - \gamma) \text{sp}(v^\ast)$.

\paragraph{Bounding (b)}
Using Lemma~\ref{lemma:variation-control} that controls the difference between $\widetilde{V}_u^{t + 1}$ and $\widetilde{V}_u^t$, we have
\begin{align*}
V_{t + 1}^{t}(s_{t + 1})
&= \textsc{Clip}(\widetilde{V}_{t + 1}^t(s_{t + 1}); m_t, m_t + H) \\
&= \textsc{Clip}(\widetilde{V}_{t + 1}^t(s_{t + 1}) - m_t + m_{t + 1}; m_{t + 1}, m_{t + 1} + H) + m_t - m_{t + 1} \\
&\leq \textsc{Clip}(\widetilde{V}_{t + 1}^t(s_{t + 1}); m_{t + 1}, m_{t + 1} + H) + m_t - m_{t + 1} \\
&\leq \widetilde{V}_{t + 1}^t(s_{t + 1}) + m_t - m_{t + 1} \\
&\leq \widetilde{V}_{t + 1}^{t + 1}(s_{t + 1}) + 2 m_{t - 1} - 2 m_{t + 1}
\end{align*}
where the second inequality holds because $\widetilde{V}_{t + 1}^t(s_{t + 1}) \geq m_{t + 1}$ by Line~\ref{algline:threshold}.
Hence, Term $(b)$ can be bounded by $\mathcal{O}(\frac{1}{1 - \gamma})$ using telescoping sums of $\widetilde{V}_{t + 1}^{t + 1}(s_{t + 1}) - \widetilde{V}_t^t(s_t)$ and $2m_{t - 1} - 2m_{t + 1}$, and the fact that $V_u^t \leq \frac{1}{1 - \gamma}$ and $m_t \leq \frac{1}{1 - \gamma}$ for all $t \in [T]$ and $u \in [t:T]$.

\paragraph{Bounding (c)}
Since $V_u^t$ is $\mathcal{F}_t$-measurable where $\mathcal{F}_t$ is history up to time step $t$, we have $\mathbb{E}[V_{t + 1}^t(s_{t + 1}) | \mathcal{F}_t ] = [PV_{t + 1}^t](s_t, a_t)$, making the summation $(c)$ a summation of a martingale difference sequence.
Since $\text{sp}(V_{t + 1}^t) \leq H$ for all $t \in [T]$, the summation can be bounded by $\mathcal{O}(\text{sp}(v^\ast) \sqrt{T \log (1 / \delta)})$ using Azuma-Hoeffding inequality.

\paragraph{Bounding (d)}
The sum of the bonus terms can be bounded by
\begin{align*}
\beta \sum_{t = 1}^T \Vert \bm\varphi(s_t, a_t) \Vert_{\Lambda_{t}^{-1}}
&\leq \beta \sqrt{T} \left(\sum_{t = 1}^T \Vert \bm\varphi(s_t, a_t) \Vert_{\Lambda_t^{-1}}^2 \right)^{1/2} \\
&\leq \mathcal{O}(\beta \sqrt{dT \log T})
\end{align*}
where the first inequality is by Cauchy-Schwartz and the last inequality is by Lemma~\ref{lemma:bonus-term-linear}.

Combining the four bounds, and choosing $H = 2 \cdot \text{sp}(v^\ast)$ and choosing $\beta = \mathcal{O}(\text{sp}(v^\ast) d \sqrt{\log(dT / \delta)})$ specified in Lemma~\ref{lemma:concentration-regression}, we get
$$
R_T \leq \mathcal{O}(T(1 - \gamma) \text{sp}(v^\ast) + \textstyle \frac{1}{1 - \gamma} + \text{sp}(v^\ast) \sqrt{T \log(1 / \delta)} + \text{sp}(v^\ast)\sqrt{d^3T \log(dT / \delta)\log T} ).
$$
Choosing $\gamma$ such that $\frac{1}{1 - \gamma} = \sqrt{T}$, we get
$$
R_T \leq \mathcal{O}(\text{sp}(v^\ast) \sqrt{d^3 T \log(dT / \delta)\log T} ).
$$
    
\end{proof}

\section{Covering Numbers}

In this section, we provide results on covering numbers of function classes used in this paper.
We use the notation $\mathcal{N}_\epsilon( \mathcal{F}, \Vert \cdot \Vert)$ to denote the $\varepsilon$-covering number of the function class $\mathcal{F}$ with respect to the distance measure induced by the norm $\Vert \cdot \Vert$.

We first present a classical result that bounds the covering number of Euclidean ball.

\begin{lemma} \label{lemma:covering-euclidean-ball}
For any $\varepsilon > 0$, the $d$-dimensional Euclidean ball $\mathbb{B}_d(R)$ with radius $R > 0$ has log-covering number upper bounded by
$$
\log \mathcal{N}_\varepsilon(\mathbb{B}_d(R), \Vert \cdot \Vert_2) \leq d \log (1 + 2R / \varepsilon).
$$
\end{lemma}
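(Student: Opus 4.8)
The plan is to prove this standard bound via a volume-comparison (packing) argument. First I would construct a maximal $\varepsilon$-separated subset $\mathcal{C} = \{\bm{x}_1, \dots, \bm{x}_N\} \subseteq \mathbb{B}_d(R)$, meaning that $\Vert \bm{x}_i - \bm{x}_j \Vert_2 > \varepsilon$ for all $i \neq j$ and that no further point of $\mathbb{B}_d(R)$ can be added without violating this separation. Such a maximal set exists (by a greedy construction, and it is finite by the volume bound derived below). The key observation is that maximality forces $\mathcal{C}$ to be an $\varepsilon$-cover: if some $\bm{x} \in \mathbb{B}_d(R)$ had $\Vert \bm{x} - \bm{x}_i \Vert_2 > \varepsilon$ for every $i$, then $\bm{x}$ could be appended to $\mathcal{C}$, contradicting maximality. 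Hence $\mathcal{N}_\varepsilon(\mathbb{B}_d(R), \Vert \cdot \Vert_2) \leq N$, and it suffices to upper bound $N$.

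Next I would bound $N$ by comparing volumes. Because the centers are pairwise more than $\varepsilon$ apart, the open balls $\{ B(\bm{x}_i, \varepsilon/2) \}_{i=1}^N$ are pairwise disjoint. Since each $\bm{x}_i$ lies in $\mathbb{B}_d(R)$, every such ball is contained in the enlarged ball $\mathbb{B}_d(R + \varepsilon/2)$. Writing $\omega_d$ for the volume of the unit Euclidean ball in $\mathbb{R}^d$, disjointness and containment give
$$
N \cdot \omega_d \left( \frac{\varepsilon}{2} \right)^d \leq \omega_d \left( R + \frac{\varepsilon}{2} \right)^d.
$$
Cancelling $\omega_d (\varepsilon/2)^d$ yields $N \leq \big( (R + \varepsilon/2)/(\varepsilon/2) \big)^d = (1 + 2R/\varepsilon)^d$. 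Taking logarithms gives exactly $\log \mathcal{N}_\varepsilon(\mathbb{B}_d(R), \Vert \cdot \Vert_2) \leq d \log(1 + 2R/\varepsilon)$, as claimed.

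This result has no genuine obstacle, as it is the classical covering-number estimate; the only points requiring care are the two standard facts that glue the argument together. The first is the separated-set-implies-net observation, which rests purely on the maximality of $\mathcal{C}$. The second is ensuring the volume inequality is applied to the correct radii, namely that disjoint half-radius balls about the centers all sit inside $\mathbb{B}_d(R + \varepsilon/2)$ rather than $\mathbb{B}_d(R)$; using the larger radius is what produces the additive $1$ inside the logarithm. The fact that the $d$-dimensional volume scales as the $d$-th power of the radius, so the dimension-dependent constant $\omega_d$ cancels, is what makes the bound clean and free of any $d$-dependent prefactor.
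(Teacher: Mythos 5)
Your proof is correct and complete. Note that the paper itself states this lemma without proof, citing it as a classical fact; your argument --- a maximal $\varepsilon$-separated set is automatically an $\varepsilon$-cover, combined with the volume comparison $N \cdot \omega_d (\varepsilon/2)^d \leq \omega_d (R + \varepsilon/2)^d$ --- is exactly the standard derivation of this bound, and it correctly handles the two delicate points (maximality implies covering, and the enlarged radius $R + \varepsilon/2$ producing the additive $1$ inside the logarithm).
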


Using this classical result, we bound the covering number of the function class that captures the functions $\widetilde{Q}_u^t(\cdot, \cdot)$ encountered by our algorithm.

\begin{lemma}[Adaptation of Lemma D.6 in \textcite{jin2020provably}]
\label{lemma:covering-number-linear}
Let $\mathcal{Q}$ be a class of functions mapping from $\mathcal{S} \times \mathcal{A}$ to $\mathbb{R}$ with the following parametric form
\begin{equation} \label{eqn:v-linear}
Q(\cdot, \cdot) = (\bm{w}^T \bm\varphi(\cdot, \cdot) + v + \beta \sqrt{\bm\varphi(\cdot, \cdot)^T \Lambda^{-1} \bm\varphi(\cdot, \cdot)}) \wedge M
\end{equation}
where the parameters $(\bm{w}, \beta, v, \Lambda)$ satisfy $\Vert \bm{w} \Vert \leq L$, $\beta \in [0, B]$ and $v \in [0, D]$, and $\Lambda$ is a positive definite matrix with minimum eigenvalue satisfying $\lambda_{\text{min}} (\Lambda) \geq \lambda > 0$.
The constant $M > 0$ is fixed.
Assume $\Vert \bm\varphi(s, a) \Vert \leq 1$ for all $(s, a)$ pairs.
Then
$$
\log \mathcal{N}_\varepsilon(\mathcal{Q}, \Vert \cdot \Vert_\infty) \leq d \log(1 + 8L / \varepsilon) + \log(1 + 8D / \varepsilon) + d^2 \log[ 1 + 8 d^{1/2} B^2 / (\lambda \varepsilon^2)].
$$
\end{lemma}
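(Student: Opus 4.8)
The plan is to follow the standard reparametrize-then-discretize argument, adapting Lemma D.6 of \textcite{jin2020provably} to accommodate the extra scalar offset $v$. First I would absorb the dependence on $\beta$ and $\Lambda$ into a single matrix parameter by setting $A \coloneqq \beta^2 \Lambda^{-1}$, so that every $Q \in \mathcal{Q}$ takes the form $Q(\cdot,\cdot) = \left( \bm{w}^\top \bm\varphi(\cdot,\cdot) + v + \sqrt{\bm\varphi(\cdot,\cdot)^\top A\, \bm\varphi(\cdot,\cdot)} \right) \wedge M$ and is determined by the triple $(\bm{w}, v, A)$. Since $\lambda_{\min}(\Lambda) \ge \lambda$ and $\beta \le B$, the matrix obeys $\Vert A \Vert_F \le \sqrt{d}\,\Vert A \Vert_2 \le \sqrt{d}\,\beta^2/\lambda_{\min}(\Lambda) \le \sqrt{d}\,B^2/\lambda$, so $A$ ranges over the Frobenius ball of radius $R_A \coloneqq \sqrt{d}\,B^2/\lambda$ in $\mathbb{R}^{d^2}$, while $\bm{w}$ ranges over $\mathbb{B}_d(L)$ and $v$ over $[0,D] \subseteq \mathbb{B}_1(D)$.

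Next I would establish Lipschitz continuity of $Q$ in its parameters with respect to $\Vert \cdot \Vert_\infty$. For two triples $(\bm{w}_1, v_1, A_1)$ and $(\bm{w}_2, v_2, A_2)$, using that $x \mapsto x \wedge M$ is $1$-Lipschitz, Cauchy--Schwarz together with $\Vert \bm\varphi \Vert_2 \le 1$, the subadditivity $\vert \sqrt{a} - \sqrt{b} \vert \le \sqrt{\vert a - b \vert}$, and the quadratic-form bound $\vert \bm\varphi^\top (A_1 - A_2)\bm\varphi \vert \le \Vert A_1 - A_2 \Vert_2 \le \Vert A_1 - A_2 \Vert_F$, I would obtain
$$
\Vert Q_1 - Q_2 \Vert_\infty \le \Vert \bm{w}_1 - \bm{w}_2 \Vert_2 + \vert v_1 - v_2 \vert + \sqrt{\Vert A_1 - A_2 \Vert_F}.
$$

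Finally I would form a product cover. Taking an $(\varepsilon/4)$-cover of $\mathbb{B}_d(L)$, an $(\varepsilon/4)$-cover of $[0,D]$, and an $(\varepsilon^2/4)$-cover (in Frobenius norm) of the ball of radius $R_A$ in $\mathbb{R}^{d^2}$, the Lipschitz bound guarantees that the induced family of functions is an $\varepsilon$-cover of $\mathcal{Q}$, since the three contributions sum to $\varepsilon/4 + \varepsilon/4 + \sqrt{\varepsilon^2/4} = \varepsilon$. Applying Lemma~\ref{lemma:covering-euclidean-ball} to each factor and multiplying cardinalities yields
$$
\log \mathcal{N}_\varepsilon(\mathcal{Q}, \Vert \cdot \Vert_\infty) \le d \log(1 + 8L/\varepsilon) + \log(1 + 8D/\varepsilon) + d^2 \log\left(1 + 8 d^{1/2} B^2/(\lambda \varepsilon^2)\right),
$$
where the constant $8$ in each logarithm is exactly the factor $2R/\omega$ from Lemma~\ref{lemma:covering-euclidean-ball} evaluated at the chosen radii and scales. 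I expect the matrix term to be the main obstacle: one must correctly propagate the quadratic-form perturbation through the square root --- which is precisely where the scale $\varepsilon^2/4$ (rather than $\varepsilon/4$), and hence the $\varepsilon^{-2}$ dependence, enters --- and confirm that treating $A$ as a free vector in $\mathbb{R}^{d^2}$ (ignoring symmetry and the $\beta^2\Lambda^{-1}$ structure) only enlarges the cover, so the stated bound remains an upper bound.
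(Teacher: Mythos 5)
Your proposal is correct and follows essentially the same argument as the paper's proof: reparameterize via $\bm{A} = \beta^2 \Lambda^{-1}$, establish the parameter-Lipschitz bound $\Vert Q_1 - Q_2 \Vert_\infty \leq \Vert \bm{w}_1 - \bm{w}_2 \Vert_2 + \vert v_1 - v_2 \vert + \sqrt{\Vert \bm{A}_1 - \bm{A}_2 \Vert_F}$, and take a product of covers at scales $\varepsilon/4$, $\varepsilon/4$, and $\varepsilon^2/4$ via Lemma~\ref{lemma:covering-euclidean-ball}. In fact your accounting of the cover radii is slightly cleaner than the paper's (which states an $\varepsilon/2$-cover for $[0,D]$ but then uses $\vert v - \widetilde{v} \vert \leq \varepsilon/4$ in the verification), and your observation that ignoring the symmetry and structure of $\bm{A}$ only enlarges the cover is exactly the implicit step in the paper.
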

\begin{proof}
Introducing $\bm{A} = \beta^2 \Lambda^{-1}$, we can reparameterize as
$$
Q(\cdot, \cdot) = (\bm{w}^T \bm\varphi(\cdot, \cdot) + v + \sqrt{\bm\varphi(\cdot, \cdot)^T \bm{A} \bm\varphi(\cdot, \cdot)}) \wedge M
$$
where the parameters $(\bm{w}, v, \bm{A})$ satisfy $\Vert \bm{w} \Vert_2 \leq L$, $\Vert \bm{A} \Vert \leq B^2 \lambda^{-1}$, $v \in [0, D]$.
For any pair of functions $Q_1, Q_2 \in \mathcal{Q}$ with parameterization $(\bm{w}_1, v_1, \bm{A}_1)$ and $(\bm{w}_2, v_2, \bm{A}_2)$, respectively, using the fact that $\cdot \wedge M$ is a contraction, we get
\begin{align}
\Vert Q_1 - Q_2 \Vert_\infty
&\leq \sup_{s, a} \vert (\bm{w}_1^\top \bm\varphi(s, a) + v_1 + \sqrt{\bm\varphi(s, a)^\top \bm{A}_1 \bm\varphi(s, a)}) - (\bm{w}_2^\top \bm\varphi(s, a) + v_2 + \sqrt{\bm\varphi(s, a)^\top \bm{A}_2 \bm\varphi(s, a)}) \vert \notag \\
&\leq \sup_{\bm\phi : \Vert \bm\phi \Vert_2 \leq 1} \vert (\bm{w}_1^\top \bm\phi + v_1 + \sqrt{\bm\phi^\top \bm{A}_1 \bm\phi}) - (\bm{w}_2^\top \bm\phi + v_2 + \sqrt{\bm\phi^\top \bm{A}_2 \bm\phi}) \vert \notag \\
&\leq \sup_{\bm\phi : \Vert \bm\phi \Vert_2 \leq 1 } \vert (\bm{w}_1 - \bm{w}_2)^\top \bm\phi \vert
+ \vert v_1 - v_2 \vert + \sup_{\bm\phi : \Vert \bm\phi \Vert_2 \leq 1} \sqrt{\vert \bm\phi^\top (\bm{A}_1 - \bm{A}_2) \bm\phi \vert} \notag \\
&= \Vert \bm{w}_1 - \bm{w}_2 \Vert_2 + \vert v_1 - v_2 \vert + \sqrt{\Vert \bm{A}_1 - \bm{A}_2 \Vert_2} \notag \\
&\leq \Vert \bm{w}_1 - \bm{w}_2 \Vert_2 + \vert v_1 - v_2 \vert + \sqrt{\Vert \bm{A}_1 - \bm{A}_2 \Vert_F} \label{eqn:q-covering-bound}
\end{align}
where the third inequality uses the fact that $\vert \sqrt{x} - \sqrt{y} \vert \leq \sqrt{\vert x - y \vert}$ holds for any $x, y \geq 0$ and $\Vert \cdot \Vert_F$ denotes the Frobenius norm.

Let $\mathcal{C}_{\bm{w}}$ be an $\varepsilon / 4$-cover of $\{ \bm{w} \in \mathbb{R}^d : \Vert \bm{w} \Vert \leq L \}$ with respect to the L2-norm, $\mathcal{C}_{\bm{A}}$ an $\varepsilon^2 / 4$-cover of $\{ \bm{A} \in \mathbb{R}^{d \times d} : \Vert \bm{A} \Vert_F \leq d^{1/2} B^2 \lambda^{-1} \}$ with respect to the Frobenius norm, and $\mathcal{C}_{v}$ an $\varepsilon / 2$-cover of the interval $[0, D]$.
Then, treating the matrix $\bm{A} \in \mathbb{R}^{d \times d}$ as a long vector of dimension $d \times d$, and applying Lemma~\ref{lemma:covering-euclidean-ball}, we know that we can find such covers with
$$
\log \vert \mathcal{C}_{\bm{w}} \vert \leq d \log (1 + 8L / \varepsilon), \quad
\log \vert \mathcal{C}_{\bm{A}} \vert \leq d^2 \log (1 + 8 d^{1 / 2} B^2 / (\lambda \varepsilon^2)), \quad
\log \vert \mathcal{C}_{v} \vert \leq \log(1 + 8D / \varepsilon).
$$
Hence, the set of functions
$$
\mathcal{C}_Q = \{ Q \in \mathbb{R}^{\mathcal{S} \times \mathcal{A}} : 
Q(\cdot, \cdot) = \bm{w}^T \bm\varphi(\cdot, \cdot) + v + \sqrt{\bm\varphi(\cdot, \cdot)^T \bm{A} \bm\varphi(\cdot, \cdot)}, \bm{w} \in \mathcal{C}_{\bm{w}}, \bm{A} \in \mathcal{C}_{\bm{A}}, v \in \mathcal{C}_v \}
$$
has cardinality bounded by $\log \vert \mathcal{C}_Q \vert \leq d \log (1 + 8L / \varepsilon) + d^2 \log (1 + 8 d^{1/2} B^2 / (\lambda \varepsilon^2)) + \log (1 + 8D / \varepsilon)$.
We can show that $\mathcal{C}_Q$ defined above is an $\varepsilon$-cover for $\mathcal{Q}$ as follows.
Fix any $Q \in \mathcal{Q}$ parameterized by $(\bm{w}, v, \bm{A})$ and consider $\widetilde{Q} \in \mathcal{Q}$ parameterized by $(\widetilde{\bm{w}}, \widetilde{v}, \widetilde{\bm{A}})$ where $\widetilde{\bm{w}} \in \mathcal{C}_{\bm{w}}$ with $\Vert \bm{w} - \widetilde{\bm{w}} \Vert_2 \leq \varepsilon / 4$, $\widetilde{v} \in \mathcal{C}_v$ with $\vert v - \widetilde{v} \vert \leq \varepsilon / 4$ and $\widetilde{\bm{A}} \in \mathcal{C}_{\bm{A}}$ with $\Vert \bm{A} - \widetilde{\bm{A}} \Vert_F \leq \varepsilon^2 / 4$.
Then, by the bound~\eqref{eqn:q-covering-bound}, we have $\Vert Q - \widetilde{Q} \Vert_\infty \leq \varepsilon$ as desired.
This concludes the proof.
\end{proof}

\begin{lemma} \label{lemma:covering-v}
Let $\mathcal{V}$ be a class of functions mapping from $\mathcal{S}$ to $\mathbb{R}$ defined as
$$
\mathcal{V} = \{ \max_a Q(\cdot, a) : Q(\cdot, \cdot) = \textsc{Clip}(Q_1(\cdot, \cdot); Q_2(\cdot, \cdot) \vee Q_3(\cdot, \cdot), Q_4(\cdot, \cdot) \wedge Q_5(\cdot, \cdot),~~ Q_1, \dots, Q_5 \in \mathcal{Q} \}
$$
where the function class $\mathcal{Q}$ is defined in Lemma~\ref{lemma:covering-number-linear}.
Then,
$$
\log \mathcal{N}_\epsilon(\mathcal{V}, \Vert \cdot \Vert_\infty) \leq 5 d \log(1 + 8L / \varepsilon) + 5 \log(1 + 8D / \varepsilon) + 5 d^2 \log[ 1 + 8 d^{1/2} B^2 / (\lambda \varepsilon^2)].
$$
\end{lemma}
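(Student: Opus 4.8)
The plan is to reduce the problem of covering $\mathcal{V}$ to covering the base class $\mathcal{Q}$, exploiting the fact that every element of $\mathcal{V}$ is obtained from five functions $Q_1, \dots, Q_5 \in \mathcal{Q}$ by applying only the operations $\vee$, $\wedge$, $\textsc{Clip}$, and $\max_a$, each of which is non-expansive with respect to the sup-norm. Concretely, I would first establish that the map sending a tuple $(Q_1, \dots, Q_5)$ to the corresponding
$$
V = \max_a \textsc{Clip}\big(Q_1(\cdot, a); \, Q_2(\cdot,a) \vee Q_3(\cdot,a), \, Q_4(\cdot,a) \wedge Q_5(\cdot,a)\big)
$$
satisfies
$$
\Vert V - V' \Vert_\infty \le \max_{1 \le i \le 5} \Vert Q_i - Q_i' \Vert_\infty,
$$
and then take the product of five copies of an $\varepsilon$-cover of $\mathcal{Q}$ to obtain an $\varepsilon$-cover of $\mathcal{V}$.

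The key step is the non-expansiveness estimate, which I would build from the elementary facts that $(a,b) \mapsto a \vee b$ and $(a,b) \mapsto a \wedge b$ are both $1$-Lipschitz jointly in the $\ell_\infty$ sense, i.e. $\vert a \vee b - a' \vee b' \vert \le \max(\vert a-a'\vert, \vert b-b'\vert)$ and similarly for $\wedge$. Since $\textsc{Clip}(x;L,U) = (x \vee L) \wedge U$ is a composition of these two operations, it is likewise $1$-Lipschitz jointly in $(x,L,U)$. Composing once more, the scalar map $(q_1,\dots,q_5) \mapsto \textsc{Clip}(q_1; q_2 \vee q_3, q_4 \wedge q_5)$ is $1$-Lipschitz with respect to $\ell_\infty$ on $\mathbb{R}^5$. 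Evaluating at a fixed $(s,a)$ and taking the supremum over $(s,a)$ yields $\Vert Q - Q' \Vert_\infty \le \max_i \Vert Q_i - Q_i' \Vert_\infty$ for the two clipped functions $Q, Q'$, and since $Q \mapsto \max_a Q(\cdot, a)$ is also non-expansive in the sup-norm, the displayed bound on $\Vert V - V'\Vert_\infty$ follows.

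Given this, I would conclude by a standard product-cover count. Let $\mathcal{C}_Q$ be a minimal-cardinality $\varepsilon$-cover of $\mathcal{Q}$ with respect to $\Vert \cdot \Vert_\infty$, whose size is controlled by Lemma~\ref{lemma:covering-number-linear}. For any $V \in \mathcal{V}$ arising from $(Q_1, \dots, Q_5)$, choose $\widetilde{Q}_i \in \mathcal{C}_Q$ with $\Vert Q_i - \widetilde{Q}_i \Vert_\infty \le \varepsilon$ for each $i$, and let $\widetilde{V}$ be the element of $\mathcal{V}$ built from $(\widetilde{Q}_1, \dots, \widetilde{Q}_5)$. The non-expansiveness estimate gives $\Vert V - \widetilde{V} \Vert_\infty \le \varepsilon$, so the collection of all such $\widetilde{V}$ forms an $\varepsilon$-cover of $\mathcal{V}$ of cardinality at most $\vert \mathcal{C}_Q \vert^5$. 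Taking logarithms gives $\log \mathcal{N}_\varepsilon(\mathcal{V}, \Vert \cdot \Vert_\infty) \le 5 \log \vert \mathcal{C}_Q \vert$, and substituting the bound from Lemma~\ref{lemma:covering-number-linear} reproduces exactly the claimed inequality.

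I do not anticipate a serious obstacle; the only point requiring care is verifying that $\vee$, $\wedge$, and hence $\textsc{Clip}$ are jointly non-expansive in the $\ell_\infty$ sense (rather than merely coordinatewise), and that these operations compose cleanly, so that perturbing each of the five constituent functions by $\varepsilon$ propagates to a perturbation of $V$ by at most $\varepsilon$. Everything else is routine product-cover counting.
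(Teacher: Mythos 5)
Your proposal is correct and follows essentially the same route as the paper: both arguments take a 5-fold product of an $\varepsilon$-cover of $\mathcal{Q}$ and show that the map $(Q_1,\dots,Q_5) \mapsto \max_a \textsc{Clip}(Q_1; Q_2 \vee Q_3, Q_4 \wedge Q_5)$ is non-expansive in the sup-norm, yielding the factor of $5$ in the log-covering bound. The only cosmetic difference is how non-expansiveness is verified — you use the joint $\ell_\infty$-Lipschitz property of $\vee$ and $\wedge$, while the paper sandwiches $Q$ between $\widetilde{Q} \pm \varepsilon$ using monotonicity and shift-equivariance of $\textsc{Clip}$ — but these are equivalent elementary facts.
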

\begin{proof}
Let $\mathcal{W}$ be a class of functions mapping from $\mathcal{S} \times \mathcal{A} \rightarrow \mathbb{R}$ of the form
$$
Q(\cdot, \cdot) = \textsc{Clip}(Q_1(\cdot, \cdot); Q_2(\cdot, \cdot) \vee Q_3(\cdot, \cdot), Q_4(\cdot, \cdot) \wedge Q_5(\cdot, \cdot))
$$
where $Q_1, \dots, Q_5 \in \mathcal{Q}$.
Let $\mathcal{C}_0$ be an $\epsilon$-cover of the function class $\mathcal{Q}$ with size $\log \vert \mathcal{C}_0 \vert \leq d \log(1 + 8L / \varepsilon) + \log(1 + 4D / \varepsilon) + d^2 \log[1 + 8 d^{1/2} B^2 / (\lambda \varepsilon^2)]$.
Such a cover exists by Lemma~\ref{lemma:covering-number-linear}.
Let $\mathcal{C}$ be defined as
$$
\mathcal{C} = \{ Q \in \mathbb{R}^{\mathcal{S} \times \mathcal{A}} : Q(\cdot, \cdot) = \textsc{Clip}(Q_1(\cdot, \cdot) ; Q_2(\cdot, \cdot) \vee Q_3(\cdot, \cdot), Q_4(\cdot, \cdot) \wedge Q_5(\cdot, \cdot),~~Q_1, \dots, Q_5 \in \mathcal{C}_0 \}.
$$
Then, we have $\log \vert \mathcal{C} \vert \leq 5 \log \vert \mathcal{C}_0 \vert$, and we can show that $\mathcal{C}$ is an $\varepsilon$-cover of $\mathcal{W}$ as follows.
Consider a function $W \in \mathcal{W}$, with $W(\cdot, \cdot) = \textsc{Clip}(Q_1(\cdot, \cdot) ; Q_2(\cdot, \cdot) \vee Q_3(\cdot, \cdot), Q_4(\cdot, \cdot) \wedge Q_5(\cdot, \cdot))$ where $Q_1, \dots, Q_5 \in \mathcal{Q}$.
Let $\widetilde{Q}_i \in \mathcal{C}_0$ be the approximation of $Q_i$ for $i = 1, \dots, 5$ such that $\Vert \widetilde{Q}_i - Q_i \Vert_\infty \leq \varepsilon$. Such a $\widetilde{Q}_i$ exists since $\mathcal{C}_0$ is an $\varepsilon$-cover of $\mathcal{Q}$.
Let $\widetilde{Q}(\cdot, \cdot) = \textsc{Clip}(\widetilde{Q}_1(\cdot, \cdot); \widetilde{Q}_2(\cdot, \cdot) \vee \widetilde{Q}_3(\cdot, \cdot), \widetilde{Q}_4(\cdot, \cdot) \wedge Q_5(\cdot, \cdot))$. Then, $\widetilde{Q} \in \mathcal{C}$ and
\begin{align*}
Q(\cdot, \cdot)
&= \textsc{Clip}(Q_1(\cdot, \cdot) ; Q_2(\cdot, \cdot) \vee Q_3(\cdot, \cdot), Q_4(\cdot, \cdot) \wedge Q_5(\cdot, \cdot)) \\
&\leq \textsc{Clip}(\widetilde{Q}_1(\cdot, \cdot) + \varepsilon ; (\widetilde{Q}_2(\cdot, \cdot) + \varepsilon) \vee (\widetilde{Q}_3(\cdot, \cdot) + \varepsilon), (\widetilde{Q}_4(\cdot, \cdot) + \varepsilon) \wedge (\widetilde{Q}_5(\cdot, \cdot) + \varepsilon)) \\
&= \textsc{Cilp}( \widetilde{Q}_1(\cdot, \cdot) ; \widetilde{Q}_2(\cdot, \cdot) \vee \widetilde{Q}_3(\cdot, \cdot), \widetilde{Q}_4(\cdot, \cdot) \wedge \widetilde{Q}_5(\cdot, \cdot)) + \varepsilon \\
&= \widetilde{Q}(\cdot, \cdot) + \varepsilon.
\end{align*}
Similarly, we have
\begin{align*}
Q(\cdot, \cdot)
&= \textsc{Clip}(Q_1(\cdot, \cdot) ; Q_2(\cdot, \cdot) \vee Q_3(\cdot, \cdot), Q_4(\cdot, \cdot) \wedge Q_5(\cdot, \cdot)) \\
&\geq \textsc{Clip}(\widetilde{Q}_1(\cdot, \cdot) - \varepsilon ; (\widetilde{Q}_2(\cdot, \cdot) - \varepsilon) \vee (\widetilde{Q}_3(\cdot, \cdot) - \varepsilon), (\widetilde{Q}_4(\cdot, \cdot) - \varepsilon) \wedge (\widetilde{Q}_5(\cdot, \cdot) - \varepsilon)) \\
&= \textsc{Cilp}( \widetilde{Q}_1(\cdot, \cdot) ; \widetilde{Q}_2(\cdot, \cdot) \vee \widetilde{Q}_3(\cdot, \cdot), \widetilde{Q}_4(\cdot, \cdot) \wedge \widetilde{Q}_5(\cdot, \cdot)) - \varepsilon \\
&= \widetilde{Q}(\cdot, \cdot) - \varepsilon,
\end{align*}
which shows $\Vert Q - \widetilde{Q} \Vert_\infty \leq \varepsilon$, and that $\mathcal{C}$ is an $\varepsilon$-cover of $\mathcal{W}$.
Since $\max_a$ is a contraction map, it follows that $\mathcal{V} = \{ \max_a Q(\cdot, a) : Q \in \mathcal{W} \}$ is covered by $\widetilde{\mathcal{V}} = \{ \max_a Q(\cdot, a) : Q \in \mathcal{C} \}$.
The proof is complete by observing that $\log \vert \widetilde{\mathcal{V}} \vert \leq \log \vert \mathcal{C} \vert \leq 5 \log \vert \mathcal{C}_0 \vert$, and that there exists $\varepsilon$-cover $\mathcal{C}_0$ for $\mathcal{Q}$ with $\log \vert \mathcal{C}_0 \vert \leq d \log(1 + 8L / \varepsilon) + \log(1 + 8D / \varepsilon) + d^2 \log[ 1 + 8 d^{1/2} B^2 / (\lambda \varepsilon^2)]$ by Lemma~\ref{lemma:covering-number-linear}.
\end{proof}

\end{document}